\def\R{\mathbb{R}}
\def\diag{\text{diag}}
\def\range{\text{range}}
\def\E{\mathbb{E}}
\def\N{\mathbb{N}}
\def\P{\mathbb{P}}
\def\projtrick{Product Projections }
\def\calH{\mathcal{H}}
\def\calO{\mathcal{O}}
\def\calT{\mathcal{T}}
\def\calW{\mathcal{W}}
\newtheorem{lemma}{Lemma}
\newtheorem{theorem}{Theorem}
\newtheorem{corollary}{Corollary}
\newtheorem{assumption}{Assumption}
\title{Spectral Learning of Large Structured HMMs for Comparative Epigenomics}
\author{
Chicheng Zhang\\
UC San Diego\\
\texttt{chz038@eng.ucsd.edu}\\
\And
Jimin Song\\
Rutgers University\\
\texttt{song@dls.rutgers.edu}\\
\And
Kevin C Chen\\
Rutgers University\\
\texttt{kcchen@dls.rutgers.edu}\\
\And
Kamalika Chaudhuri\\
UC San Diego\\
\texttt{kamalika@eng.ucsd.edu}\\
}
\begin{document}

\maketitle

\begin{abstract}
We develop a latent variable model and an efficient spectral algorithm motivated by the recent emergence of very large data sets of chromatin marks from multiple human cell types. A natural model for chromatin data in one cell type is a Hidden Markov Model (HMM); we model the relationship between multiple cell types by connecting their hidden states by a fixed tree of known structure.

 The main challenge with learning parameters of such models is that iterative methods such as EM are very slow, while naive spectral methods result in time and space complexity exponential in the number of cell types. We exploit properties of the tree structure of the hidden states to provide spectral algorithms that are more computationally efficient for current biological datasets. We provide sample complexity bounds for our algorithm and evaluate it experimentally on biological data from nine human cell types. Finally, we show that beyond our specific model, some of our algorithmic ideas can be applied to other graphical models. 
\end{abstract}

\section{Introduction}
In this paper, we develop a latent variable model and efficient spectral algorithm motivated by the recent emergence of very large data sets of chromatin marks from multiple human cell types \cite{ENCODE2012,Bernstein2010}. Chromatin marks are chemical modifications on the genome which are important in many basic biological processes. After standard preprocessing steps, the data consists of a binary vector (one bit for each chromatin mark) for each position in the genome and for each cell type.

A natural model for chromatin data in one cell type is a Hidden Markov Model (HMM) \cite{ErnstKellis2010,Hoffman2012}, for which efficient spectral algorithms are known. On biological data sets, spectral algorithms have been shown to have several practical advantages over maximum likelihood-based methods, including speed, prediction accuracy and biological interpretability \cite{Song2015}. Here we extend the approach by modeling multiple cell types together. We model the relationships between cell types by connecting their hidden states by a fixed tree, the standard model in biology for relationships between cell types. This comparative approach leverages the information shared between the different data sets in a statistically unified and biologically motivated manner.

Formally, our model is an HMM where the hidden state $z_t$ at time $t$ has a structure represented by a tree graphical model of known structure. For each tree node $u$ we can associate an individual hidden state $z^u_t$ that depends not only on the previous hidden state $z^u_{t-1}$ for the same tree node $u$ but also on the individual hidden state of its parent node. Additionally, there is an observation variable $x^u_t$ for each node $u$, and the observation $x^u_t$ is independent of other state and observation variables conditioned on the hidden state variable $z^u_t$. In the bioinformatics literature, \cite{Biesinger2013} studied this model with the additional constraint that all tree nodes share the same emission parameters. In biological applications, the main outputs of interest are the learned observation matrices of the HMM and a segmentation of the genome into regions which can be used for further studies. 

A standard approach to unsupervised learning of HMMs is the Expectation-Maximization (EM) algorithm. When applied to HMMs with very large state spaces, EM is very slow. A recent line of work on spectral learning~\cite{HKZ09,AGHKT12, SBG10, CL14} has produced much more computationally efficient algorithms for learning many graphical models under certain mild conditions, including HMMs. However, a naive application of these algorithms to HMMs with large state spaces results in computational complexity exponential in the size of the underlying tree. 

Here we exploit properties of the tree structure of the hidden states to provide spectral algorithms that are more computationally efficient for current biological datasets. This is achieved by three novel key ideas. Our first key idea is to show that we can treat each root-to-leaf path in the tree separately and learn its parameters using tensor decomposition methods. This step improves the running time because our trees typically have very low depth. Our second key idea is a novel tensor symmetrization technique that we call {\em{Skeletensor construction}} where we avoid constructing the full tensor over the entire root-to-leaf path. Instead we use carefully designed symmetrization matrices to reveal its range in a {\em{Skeletensor}} which has dimension equal to that of a single tree node. The third and final key idea is called {\em Product Projections}, where we exploit the independence of the emission matrices along the root-to-leaf path conditioned on the hidden states to avoid constructing the full tensors and instead construct compressed versions of the tensors of dimension equal to the number of hidden states, not the number of observations. Beyond our specific model, we also show that Product Projections can be applied to other graphical models and thus we contribute a general tool for developing efficient spectral algorithms.  

Finally we implement our algorithm and evaluate it on biological data from nine human cell types \cite{ENCODE2012}. We compare our results with the results of~\cite{Biesinger2013} who used a variational EM approach. We also compare with spectral algorithms for learning HMMs for each cell type individually to assess the value of the tree model.

\subsection{Related Work}

The first efficient spectral algorithm for learning HMM parameters was due to~\cite{HKZ09}. There has been an explosion of follow-up work on spectral algorithms for learning the parameters and structure of latent variable models~\cite{SBG10, CL14, Balle14}. \cite{HKZ09} gives a spectral algorithm for learning an {\em{observable operator}} representation of an HMM under certain rank conditions. \cite{SBG10} and~\cite{BCLQ14} extend this algorithm to the case when the transition matrix and the observation matrix respectively are rank-deficient. \cite{MB14} extends~\cite{HKZ09} to Hidden Semi-Markov Models.

\cite{AHK12} gives a general spectral algorithm for learning parameters of latent variable models that have a {\em{multi-view}} structure -- there is a hidden node and three or more observable nodes that are not connected to any other nodes and are independent conditioned on the hidden node.  Many latent variable models have this structure, including HMMs, tree graphical models, topic models and mixture models. \cite{AGHKT12} provides a simpler, more robust algorithm that involves decomposing a third order tensor. \cite{Parikh11, Parikh12, Parikh13} provide algorithms for learning latent trees and of latent junction trees.
 
Several algorithms have been designed for learning HMM parameters for chromatin modeling, including stochastic variational inference \cite{Foti2014} and contrastive learning of two HMMs \cite{Zou2013}. However, none of these methods extend directly to modeling multiple chromatin sequences simultaneously.

\section{The Model}
\paragraph{Probabilistic Model.} The natural probabilistic model for a single epigenomic sequence is a hidden Markov model (HMM), where time corresponds to position in the sequence. The observation at time $t$ is the sequence value at position $t$, and the hidden state at $t$ is the regulatory function in this position. 

In comparative epigenomics, the goal is to jointly model epigenomic sequences from multiple species or cell-types. This is done by an HMM with a tree-structured hidden state~\cite{Biesinger2013}(THS-HMM),\footnote{In the bioinformatics literature, this model is also known as a tree HMM.} where each node in the tree representing the hidden state has a corresponding observation node. Formally, we represent the model by a tuple $\calH = (G, \calO, \calT, \calW)$; Figure~\ref{fig:treehmm} shows a pictorial representation. 

$G = (V, E)$ is a directed tree with known structure whose nodes represent individual cell-types or species. The hidden state $z_t$ and the observation $x_t$ are represented by vectors $\{ z_t^u \}$ and $\{ x_t^u \}$ indexed by nodes $u \in V$.  If $(v, u) \in E$, then $v$ is the parent of $u$, denoted by $\pi(u)$; if $v$ is a parent of $u$, then for all $t$, $z_t^v$ is a parent of $z_t^u$. In addition, the observations have the following product structure: if $u' \neq u$, then conditioned on $z_t^u$, the observation $x_t^u$ is independent of $z_{t}^{u'}$ and $x_t^{u'}$ as well as any $z_{t'}^{u'}$ and $x_{t'}^{u'}$ for $t \neq t'$.  

$\calO$ is a set of observation matrices $O^u = P(x_t^u | z_t^u)$ for each $u \in V$ and $\calT$ is a set of transition tensors $T^u = P(z_{t+1}^u | z_t^u, z_{t+1}^{\pi(u)})$ for each $u \in V$. Finally, $\calW$ is the set of initial distributions where $W^u = P(z_1^u | z_1^{\pi(u)})$ for each $z_1^u$.

Given a tree structure and a number of iid observation sequences corresponding to each node of the tree, our goal is to determine the parameters of the underlying THS-HMM and then use these parameters to infer the most likely regulatory function at each position in the sequences. 
 
Below we use the notation $D$ to denote the number of nodes in the tree and $d$ to denote its depth. For typical epigenomic datasets, $D$ is small to moderate ($5$-$50$) while $d$ is very small ($2$ or $3$) as it is difficult to obtain data with large $d$ experimentally. Typically $m$, the number of possible values assumed by the hidden state at a single node, is about $6$-$25$, while $n$, the number of possible observation values assumed by a single node is much larger (e.g. $256$ in our dataset). 

\begin{figure}
\hspace{-0.5cm}
\begin{center}
\includegraphics[trim=0.3in 2in 0in 1in, clip,width=0.5\textwidth]{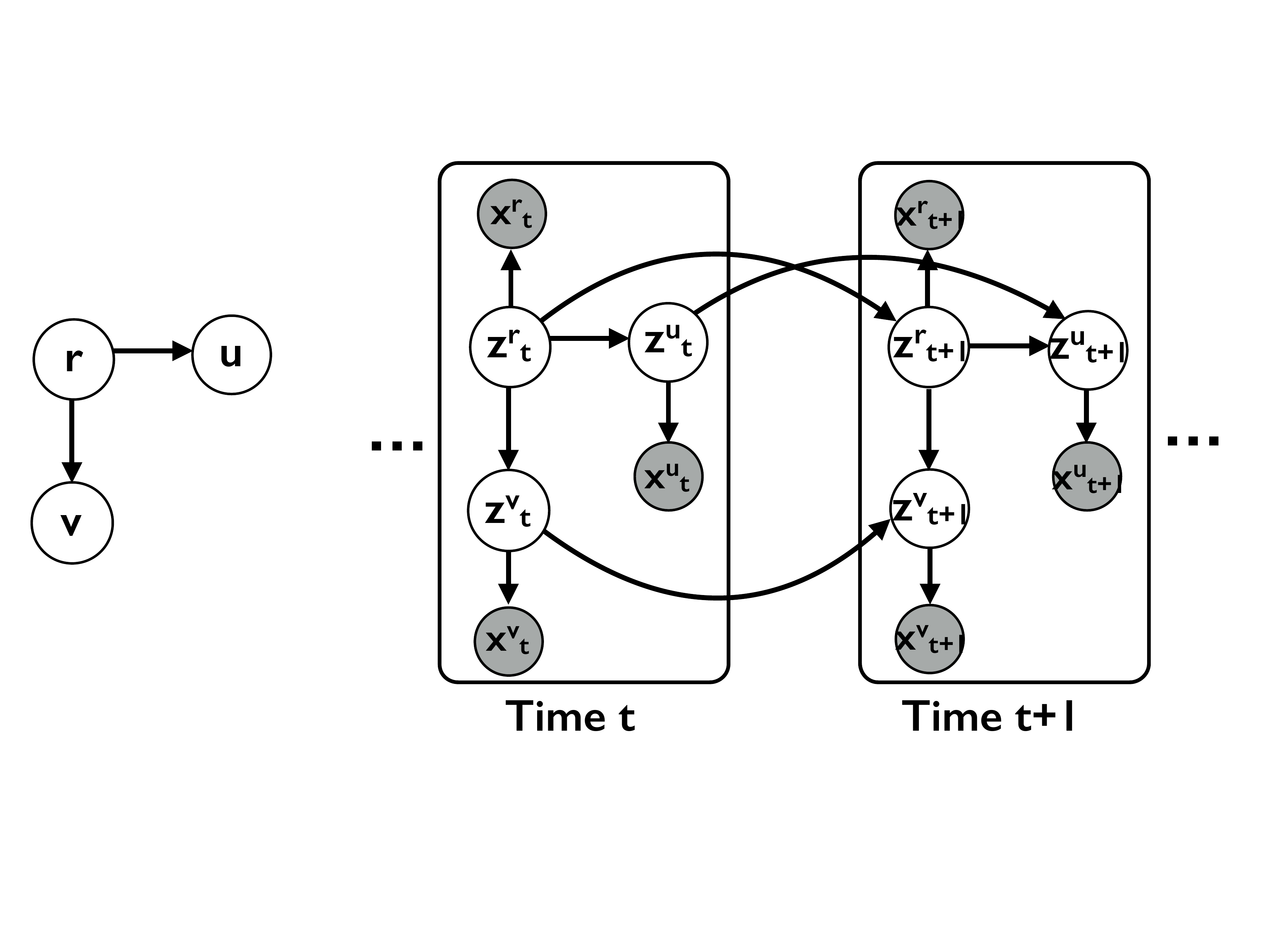}
\end{center}
\caption{Left: A tree $T$ with 3 nodes $V = \{r,u,v\}$. Right: A HMM whose hidden state has structure $T$.}
\label{fig:treehmm}
\end{figure}

\paragraph{Tensors.} An order-$3$ tensor $M \in R^{n_1} \otimes R^{n_2} \otimes R^{n_3}$ is a $3$-dimensional array with $n_1 n_2 n_3$ entries, with its $(i_1, i_2, i_3)$-th entry denoted as $M_{i_1, i_2, i_3}$. 

Given $n_i \times 1$ vectors $v_i$, $i = 1, 2, 3$, their tensor product, denoted by $v_1 \otimes v_2 \otimes v_3$ is the $n_1 \times n_2 \times n_3$ tensor whose $(i_1, i_2, i_3)$-th entry is $(v_1)_{i_1} (v_2)_{i_2} (v_3)_{i_3}$. A tensor that can be expressed as the tensor product of a set of vectors is called a rank $1$ tensor. A tensor $M$ is symmetric if and only if for any permutation $\pi: [3] \to [3]$, $M_{i_1, i_2, i_3} = M_{\pi(i_1), \pi(i_2), \pi(i_3)}$.

Let $M \in \R^{n_1} \otimes \R^{n_2} \otimes \R^{n_3}$. If $V_i \in \R^{n_i \times m_i}$, then $M(V_1, V_2, V_3)$ is a tensor of size $m_1 \times m_2 \times m_3$, whose $(i_1, i_2, i_3)$-th entry is: $M(V_1, V_2, V_3)_{i_1, i_2, i_3} = \sum_{j_1, j_2, j_3} M_{j_1, j_2, j_3} (V_1)_{j_1,i_1} (V_2)_{j_2, i_2} (V_3)_{j_3,i_3}$.

Since a matrix is a order-2 tensor, we also use the following shorthand to denote matrix multiplication. Let $M \in \R^{n_1} \otimes \R^{n_2}$. If $V_i \in \R^{m_i \times n_i}$, then $M(V_1, V_2)$ is a matrix of size $m_1 \times m_2$, whose $(i_1, i_2)$-th entry is: $M(V_1, V_2)_{i_1, i_2} = \sum_{j_1, j_2} M_{j_1, j_2} (V_1)_{j_1,i_1} (V_2)_{j_2, i_2}$. This is equivalent to $V_1^{\top} M V_2$. 

\paragraph{Meta-States and Observations, Co-occurrence Matrices and Tensors.} Given observations $x_{t}^u$ and $x_{t'}^u$ at a single node $u$, we use the notation $P^u_{t, t'}$ to denote their expected co-occurence frequencies: $P_{t, t'}^{u, u} = \E[x_t^u \otimes x_{t'}^u]$, and $\hat{P}_{t, t'}^{u, u}$ to denote their corresponding empirical version. The tensor $P_{t, t', t''}^{u, u, u} = \E[x_t^u \otimes x_{t'}^u \otimes x_{t''}^u]$ and its empirical version $\hat{P}_{t, t', t''}^{u, u, u}$ are defined similarly.

Occasionally, we will consider the states or observations corresponding to a subset of nodes in $G$ coalesced into a single meta-state or meta-observation. Given a connected subset $S \subseteq V$ of nodes in the tree $G$ that includes the root, we use the notation $z_t^S$ and $x_t^S$ to denote the meta-state represented by $(z_t^u, u \in S)$ and the meta-observation represented by $(x_t^u, u \in S)$ respectively. We define the observation matrix for $S$ as $O^S = P(x_t^S | z_t^S) \in \R^{n^{|S|} \times m^{|S|}}$  and the transition matrix for $S$ as $T^S = P(z_{t+1}^S | z_t^S) \in \R^{m^{|S|} \times m^{|S|}}$, respectively. 

For sets of nodes $V_1$ and $V_2$, we use the notation $P_{t, t'}^{V_1, V_2}$ to denote the expected co-occurrence frequencies of the meta-observations $x_t^{V_1}$ and $x_{t'}^{V_2}$.  Its empirical version is denoted by $\hat{P}_{t, t'}^{V_1, V_2}$. Similarly, we can define the notation $P^{V_1, V_2, V_3}_{t, t', t''}$ and its empirical version $\hat{P}^{V_1, V_2, V_3}_{t, t', t''}$. 

\paragraph{Background on Spectral Learning for Latent Variable Models.} Recent work by~\cite{AGHKT12} has provided a novel elegant tensor decomposition method for learning latent variable models. Applied to HMMs, the main idea is to decompose a transformed version of the third order co-occurrence tensor of the first three observations to recover the parameters;~\cite{AGHKT12} shows that given enough samples and under fairly mild conditions on the model, this provides an approximation to the globally optimal solution. The algorithm has three main steps. First, the third order tensor of the co-occurrences is symmetrized using the second order co-occurrence matrices to yield a symmetric tensor; this symmetric tensor is then orthogonalized by a whitening transformation. Finally, the resultant symmetric orthogonal tensor is decomposed via the tensor power method.   

In biological applications, instead of multiple independent sequences, we have a single long sequence in the steady state. In this case, following ideas from~\cite{SBG10}, we use the average over $t$ of the third order co-occurence tensors of three consecutive observations starting at time $t$. The second order co-occurence tensor is also modified similarly.

\section{Algorithm}
A naive approach for learning parameters of HMMs with tree-structured hidden states is to directly apply the spectral method of~\cite{AGHKT12}. Since this method ignores the structure of the hidden state, its running time is very high, $\Omega(n^D m^D)$, even with optimized implementations. This motivates the design of more computationally efficient approaches.

A plausible approach is to observe that at $t=1$, the observations are generated by a tree graphical model; thus in principle one could learn the parameters of the underlying tree using existing algorithms~\cite{Parikh12, Parikh11, Parikh13}. However, this approach does not directly produce the HMM parameters; it also does not work for biological sequences because we do not have multiple independent samples at $t=1$; instead we have a single long sequence at the steady state, and the steady state distribution of observations is not generated by a latent tree. Another plausible approach is to use the spectral junction tree algorithm of~\cite{Parikh13}; however, this algorithm does not provide the actual transition and observation matrix parameters which hold important biological information, and instead provides an {\em{observable operator representation}}. 

Our main contribution is to show that we can achieve a much better running time by exploiting the structure of the hidden state. Our algorithm is based on three key ideas -- Partitioning, Skeletensor Construction and Product Projections. We explain these ideas next. 

\paragraph{Partitioning.} Our first observation is that to learn the parameters at a node $u$, we can focus only on the unique path from the root to $u$. Thus we {\em{partition}} the learning problem on the tree into separate learning problems on these paths. This maintains correctness as proved in the Appendix.

The Partitioning step reduces the computational complexity since we now need to learn an HMM with $m^d$ states and $n^d$ observations, instead of the naive method where we learn an HMM with $m^D$ states and $n^D$ observations. As $d \ll D$ in biological data, this gives us significant savings.

{\bf{Constructing the Skeletensor.}} A naive way to learn the parameters of the HMM corresponding to each root-to-node path is to work directly on the $O(n^d \times n^d \times n^d)$ co-occurrence tensor. Instead, we show that for each node $u$ on a root-to-node path, a novel symmetrization method can be used to construct a much smaller {\em{skeleton tensor}} $T^{u}$ of size $n \times n \times n$, which nevertheless captures the effect of the entire root-to-node path and projects it into the skeleton tensor, thus revealing the range of $O^{u}$. We call this the {\em{skeletensor}}. 

Let $H_u$ be the path from the root to a node $u$, and let $\hat{P}_{1,2,3}^{H_u, u, H_u}$ be the empirical $n^{|H_u|} \times n \times n^{|H_u|}$ tensor of co-occurrences of the meta-observations $H_u$, $u$ and $H_u$ at times $1$, $2$ and $3$ respectively. Based on the data we construct the following symmetrization matrices:
\[ S_1 \sim \hat{P}^{u, H_u}_{2, 3} (\hat{P}^{H_u, H_u}_{1, 3})^{\dagger}, \quad S_3 \sim \hat{P}^{u, H_u}_{2, 1} (\hat{P}^{H_u, H_u}_{3, 1})^{\dagger} \]

Note that $S_1$ and $S_3$ are $n \times n^{|H_u|}$ matrices. Symmetrizing $\hat{P}_{1,2,3}^{H_u, u, H_u}$ with $S_1$ and $S_3$ gives us an $n \times n \times n$ skeletensor, which can in turn be decomposed to give an estimate of $O^u$ (see Lemma~\ref{lem:skeletensor} in the Appendix). 

Even though naively constructing the symmetrization matrices and skeletensor takes 
$O(N n^{2d + 1} + n^{3d})$ time, this procedure improves computational efficiency because tensor construction is a one-time operation, while the power method which takes many iterations is carried out on a much smaller tensor.

{\bf{Product Projections.}} We further reduce the computational complexity by using a novel algorithmic technique that we call {\em{Product Projections}}. The key observation is as follows. Let $H_u = \{ u_0, u_1, \ldots, u_{d-1} \}$ be any root-to-node path in the tree and consider the HMM that generates the observations $(x^{u_0}_t, x^{u_1}_{t}, \ldots, x^{u_{d-1}}_{t})$ for $t = 1,2,\ldots$. Even though the individual observations $x^{u_j}_t, j = 0, 1, \ldots, d-1$ are highly dependent, {\em{the range of $O^{H_u}$, the emission matrix of the HMM describing the path $H_u$, is contained in the product of the ranges of $O^{u_j}$,}} where $O^{u_j}$ is the emission matrix at node $u_j$ (Lemma~\ref{lem:projtrick} in the Appendix). Furthermore, even though the $O^{u_j}$ matrices are difficult to find, {\em{their ranges can be determined by computing the SVDs of the observation co-occurrence matrices at $u_j$.}}

Thus we can {\em{implicitly}} construct and store (an estimate of) the range of $O^{H_u}$. This also gives us estimates of the range of $\hat{P}^{H_u, H_u}_{1, 3}$, the column spaces of $\hat{P}^{u, H_u}_{2, 1}$ and $\hat{P}^{u, H_u}_{2, 3}$, and the range of the first and third modes of the tensor $\hat{P}^{H_u, u, H_u}_{1, 2, 3}$. Therefore during skeletensor construction we can avoid explicitly constructing $S_1$, $S_3$ and $\hat{P}^{H_u, u, H_u}_{1, 2, 3}$, and instead construct their projections onto their ranges. This reduces the time complexity of the skeletensor construction step to 

$O(Nm^{2d + 1} + m^{3d} + dmn^2)$ (recall that the range has dimension $m$.) While the number of hidden states $m$ could be as high as $n$, this is a significant gain in practice, as $n \gg m$ in biological datasets (e.g. 256 observations vs. 6 hidden states). 

Product projections are more efficient than random projections~\cite{HMT11} on the co-occurrence matrix of meta-observations: the co-occurrence matrices are $n^d \times n^d$ matrices, and random projections would take $\Omega(n^d)$ time.  Also, product projections differ from the suggestion of~\cite{FRU12} since we exploit properties of the model to efficiently find good projections. 

\subsection{The Full Algorithm}
\label{sec:fullalgo}

Our final algorithm follows from combining the three key ideas above. Algorithm~\ref{alg:mainobs} shows how to recover the observation matrices $O^u$ at each node $u$. Once the $O^u$s are recovered, one can use standard techniques to recover $T$ and $W$; details are described in Algorithm~\ref{alg:maintrans} in the Appendix. 

\begin{algorithm}
\caption{Algorithm for Observation Matrix Recovery}
\begin{algorithmic}[1]
\STATE Input: $N$ samples of the three consecutive observations ${(x_{1}, x_{2}, x_{3})}_{i=1}^N$ generated by an HMM with tree structured hidden state with known tree structure.
\FOR{$u \in V$}
	\STATE Perform SVD on $\hat{P}^{u, u}_{1,2}$ to get the first $m$ left singular vectors $\hat{U}^u$.
\ENDFOR

\FOR{$u \in V$}
	\STATE Let $H_u$ denote the set of nodes on the unique path from root $r$ to $u$. Let $\hat{U}^{H_u} = \otimes_{v \in H_u} \hat{U}^{v}$.  
	\STATE {\bf{Construct Projected Skeletensor.}} First, compute symmetrization matrices:
\begin{equation*}
\hat{S}_1^u = ((\hat{U}^{u})^{\top} \hat{P}_{2,3}^{u, H_u} \hat{U}^{H_u}) ((\hat{U}^{H_u})^{\top} \hat{P}^{H_u, H_u}_{1,3} \hat{U}^{H_u})^{-1}, \hat{S}_3^u = ((\hat{U}^{u})^{\top} \hat{P}_{2,1}^{u, H_u} \hat{U}^{H_u}) ((\hat{U}^{H_u})^{\top} \hat{P}^{H_u, H_u}_{3,1} \hat{U}^{H_u})^{-1}
\end{equation*}
\vspace{-0.2cm}
	\STATE Compute symmetrized second and third co-occurrences for $u$: 
\begin{eqnarray*}
\hat{M}_2^u & = & (\hat{P}_{1,2}^{H_u, u} (\hat{U}^{H_u} (\hat{S}_1^u)^{\top}, \hat{U}^u) + \hat{P}_{1,2}^{H_u, u} (\hat{U}^{H_u} (\hat{S}_1^u)^{\top}, \hat{U}^u)^{\top})/2 \\
\hat{M}_3^u & = & \hat{P}_{1,2,3}^{H_u,u,H_u}(\hat{U}^{H_u} (\hat{S}_1^u)^{\top}, \hat{U}^{u}, \hat{U}^{H_u} (\hat{S}_3^u)^{\top}) 
\end{eqnarray*}
\vspace{-0.2cm}
	\STATE {\bf{Orthogonalization and Tensor Decomposition.}} Orthogonalize $\hat{M}_3^u$ using $\hat{M}_2^u$ and decompose to recover $(\hat{\theta}_1^u, \ldots, \hat{\theta}_m^u)$ as in~\cite{AGHKT12} (See Algorithm~\ref{alg:td} in the Appendix for details). 
	\STATE {\bf{Undo Projection onto Range.}} Estimate $O^u$ as: $\hat{O}^u = \hat{U}^{u} \hat{\Theta}^u$, where $\hat{\Theta}^u = (\hat{\theta}_1^u, \ldots, \hat{\theta}_m^u)$.
\ENDFOR
\end{algorithmic}
\label{alg:mainobs}
\end{algorithm}

\subsection{Product Projections beyond HMMs with Tree-structured Hidden States}

The Product Projections technique is a general technique with applications beyond our model. 

\noindent{\textbf{Application 1: HMM with more general hidden states.}} Consider an HMM with a hidden state represented by a general graphical model $G = (V, E)$ with an observation variable $x^u_t$ corresponding to each $u \in V$. $x^u_t$ is independent of all other hidden state and observation nodes, conditioned on its corresponding hidden state variable $z^u_t$. In this case, $O^{|V|} = \otimes_{u \in V} O^u$. Similar graphical models have been used in biology to model gene expression time courses~\cite{Zhu2010}.

\noindent{\textbf{Application 2: HMM with rank-deficient observation matrix.}} Consider an HMM whose observation matrix $O$ is rank-deficient. In this case,~\cite{BCLQ14} suggests compressing sequences of successive observations of size $s$ for $s = 2, 3, \ldots$ until the matrices $O^f = P(x_{t},x_{t+1},\ldots,x_{t+s-1}|z_t)$ and $O^b = P(x_{t}, x_{t-1}, \ldots, x_{t-s+1}|z_t)$ have rank $m$. A version of~\cite{HKZ09} is then run using observation sequence pairs $P_{1:s, s+1:2s}$ and triples $P_{1:s, s+1, s+2:2s+1}$. In this case, we can show that both $\range(O^f)$ and $\range(O^b)$ are contained in $\range(O^{\otimes s})$; we can therefore use Product Projections to improve the $\Omega(n^{s})$ running time to $O(m^{O(s)})$.

\subsection{Performance Guarantees}

We now provide performance guarantees on our algorithm. Since learning parameters of HMMs and many other graphical models is NP-Hard, spectral algorithms make simplifying assumptions on the properties of the model generating the data. Typically these assumptions take the form of some conditions on the rank of certain parameter matrices. We state below the conditions needed for our algorithm to successfully learn parameters of a HMM with tree structured hidden states. Observe that we need two kinds of rank conditions -- node-wise and path-wise -- to ensure that we can recover the full set of parameters on a root-to-node path. 

\begin{assumption}[Node-wise Rank Condition] 
For all $u \in V$, the matrix $O^u$ has rank $m$, and the joint probability matrix $P_{2, 1}^{u, u}$ has rank $m$.
\label{ass:obsrank}
\end{assumption}

\begin{assumption}[Path-wise Rank Condition]
For any $u \in V$, let $H_u$ denote the path from root to $u$. Then, the joint probability matrix $P_{1, 2}^{H_u, H_u}$ has rank $m^{|H_u|}$.
\label{ass:prank}
\end{assumption}

Assumption~\ref{ass:obsrank} is required to ensure that the skeletensor can be decomposed, and that $\hat{U}^u$ indeed captures the range of $O^u$. Assumption~\ref{ass:prank} ensures that the symmetrization operation succeeds. This kind of assumption is very standard in spectral learning~\cite{HKZ09, AGHKT12}.  

\cite{BCLQ14} has provided a spectral algorithm for learning HMMs involving fourth and higher order moments when Assumption~\ref{ass:obsrank} does not hold.  We believe similar approaches will apply to our problem as well, and we leave this as an avenue for future work. 
 
If Assumptions~\ref{ass:obsrank} and~\ref{ass:prank} hold, we can show that Algorithm~\ref{alg:mainobs} is consistent -- provided enough samples are available, the model parameters learnt by the algorithms are close to the true model parameters. A finite sample guarantee is provided in the Appendix.

\begin{theorem}\label{thm:main}[Consistency]
Suppose we run Algorithm~\ref{alg:mainobs} on the first three observation vectors $\{ x_{i,1}, x_{i, 2}, x_{i, 3} \}$ from $N$ iid sequences generated by an HMM with tree-structured hidden states. Then, for all nodes $u \in V$, the recovered estimates $\hat{O}^u$ satisfy the following property: with high probability over the iid samples, there exists a permutation $\Pi^u$ of the columns of $\hat{O}^{u}$ such that as $\| O^u - \Pi^u \hat{O}^u \| \leq \varepsilon(N)$ where $\varepsilon(N) \rightarrow 0$ as $N \rightarrow \infty$.  
\end{theorem}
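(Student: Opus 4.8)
The plan is to decompose the argument into (i) an exact population-level identifiability claim, showing Algorithm~\ref{alg:mainobs} recovers $O^u$ exactly when fed the true co-occurrence moments, and (ii) a continuity/perturbation argument showing that, since the empirical moments converge to the population moments, the algorithm's output converges as well. The permutation $\Pi^u$ in the statement will arise precisely because tensor decomposition identifies rank-one components only up to a permutation.

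First I would establish exact correctness on population moments. Fix a node $u$ and its root-to-node path $H_u$. Conditional independence of $x_t^u$ given $z_t^u$ gives $P^{u,u}_{1,2} = O^u \Sigma (O^u)^{\top}$, where $\Sigma$ is the marginal joint distribution matrix of $(z_1^u, z_2^u)$; Assumption~\ref{ass:obsrank} forces this to have rank exactly $m$ with column space equal to $\range(O^u)$, so the top-$m$ left singular vectors $U^u$ span $\range(O^u)$ and $U^u (U^u)^{\top} O^u = O^u$. By Lemma~\ref{lem:projtrick}, $\range(O^{H_u}) \subseteq \range(U^{H_u})$ with $U^{H_u} = \otimes_{v \in H_u} U^v$, so projecting the path onto $U^{H_u}$ loses no information. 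By Assumption~\ref{ass:prank}, the $m^{|H_u|} \times m^{|H_u|}$ matrix $(U^{H_u})^{\top} P^{H_u,H_u}_{1,3} U^{H_u}$ is nonsingular, so $S_1^u, S_3^u$ are well defined as honest inverses (not merely pseudoinverses). Lemma~\ref{lem:skeletensor} then guarantees that the skeletensor $M_3^u$ has a symmetric decomposition $\sum_{k=1}^m \lambda_k (\theta_k^u)^{\otimes 3}$ with $\theta_k^u = (U^u)^{\top} O^u_{:,k}$ up to scaling, and that $M_2^u$ whitens it; the decomposition of~\cite{AGHKT12} returns $\{\theta_k^u\}$ up to permutation, and $\hat O^u = U^u \Theta^u$ undoes the projection to recover $O^u$ exactly up to a column permutation.

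Second I would upgrade this to consistency by continuity. The empirical moments $\hat P$ are averages of bounded iid quantities, so $\|\hat P - P\| \to 0$ with high probability by the law of large numbers. Each step is a continuous map at the population point: the top-$m$ SVD subspace is stable by Davis--Kahan, using the gap between the $m$-th and $(m+1)$-th singular values of $P^{u,u}_{1,2}$ granted by Assumption~\ref{ass:obsrank}; the inverse defining $S_1^u, S_3^u$ is continuous because Assumption~\ref{ass:prank} keeps the inverted matrix nonsingular (this is exactly why Product Projections help—they replace a discontinuous pseudoinverse of an $n^{|H_u|}$-dimensional matrix by an honest inverse of a well-conditioned $m^{|H_u|}$-dimensional one); and forming $M_2^u, M_3^u$ is multilinear, hence continuous. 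Composing, $\hat M_2^u \to M_2^u$ and $\hat M_3^u \to M_3^u$. For the decomposition itself I would invoke the robustness guarantee of the tensor power method in~\cite{AGHKT12}: when the whitened input is within $\epsilon$ of a tensor with an orthogonal decomposition, the recovered eigenpairs lie within $O(\epsilon)$ of the truth, up to permutation and sign. Since whitening by $\hat M_2^u$ converges to whitening by $M_2^u$, we get $\hat\Theta^u \to \Theta^u$ up to a permutation $\Pi^u$, whence $\hat O^u = \hat U^u \hat\Theta^u \to O^u$ and $\|O^u - \Pi^u \hat O^u\| \le \varepsilon(N) \to 0$.

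I expect the main obstacle to be the tensor-decomposition step. Unlike the purely linear-algebraic operations, its stability requires the whitened tensor to possess a well-separated orthogonal decomposition—bounded eigenvalue ratios among the $\lambda_k$—which must be traced back through Assumptions~\ref{ass:obsrank} and~\ref{ass:prank}; controlling the whitening inverse-square-root under perturbation and aligning the permutation consistently between the population and empirical decompositions are the delicate points. For the quantitative finite-sample version of $\varepsilon(N)$, each of these continuity steps must additionally be made effective with explicit matrix-perturbation and concentration bounds.
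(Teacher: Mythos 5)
Your proposal is correct and follows essentially the same route as the paper: the appendix proves the quantitative version (Theorem~\ref{thm:mainobstrans}) by exactly your chain of moment concentration, Wedin-type subspace stability for $\hat{U}^u$, perturbation of the symmetrization inverses (valid because Assumption~\ref{ass:prank} keeps the projected $m^{d}\times m^{d}$ matrix nonsingular), and the robustness guarantee of the tensor power method from~\cite{AGHKT12}, with the permutation arising from the decomposition step. One small note: the anticipated need for ``bounded eigenvalue ratios among the $\lambda_k$'' is not actually required --- the~\cite{AGHKT12} guarantee used in Lemma~\ref{lem:tensordecomp} needs only that the perturbation be small relative to $\pi_{\min}$ and $m$, not any separation between the weights.
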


Observe that the observation matrices (as well as the transition and initial probabilities) are recovered upto permutations of hidden states {\em{in a globally consistent manner}}.

\section{Experiments}
\paragraph{Data and experimental settings.} We ran our algorithm, which we call ``Spectral-Tree'', on a chromatin dataset on human chromosome 1 from nine cell types (H1-hESC, GM12878, HepG2, HMEC, HSMM, HUVEC, K562, NHEK, NHLF) from the ENCODE project \cite{ENCODE2012}. Following \cite{Biesinger2013}, we used a biologically motivated tree structure of a star tree with H1-hESC, the embryonic stem cell type, as the root. There are data for eight chromatin marks 
for each cell type which we preprocessed into binary vectors using a standard Poisson background assumption \cite{Ernst2011}. The chromosome is divided into 1,246,253 segments of length 200, following \cite{Ernst2011}. The observed data consists of a binary vector of length eight for each segment, so the number of possible observations is the number of all combinations of presence or absence of the chromatin marks (i.e. $n=2^8=256$). We set the number of hidden states, which we interpret as chromatin states, to $m=6$, similar to the choice of ENCODE. Our goals are to discover chromatin states corresponding to biologically important functional elements such as promoters and enhancers, and to label each chromosome segment with the most probable chromatin state.

Observe that instead of the first few observations from $N$ iid sequences, we have a single long sequence in the steady state per cell type; thus, similar to~\cite{SBG10}, we calculate the empirical co-occurrence matrices and tensors used in the algorithm based on two and three successive observations respectively (so, more formally, instead of $\hat{P}_{1,2}$, we use the average over $t$ of $\hat{P}_{t,t+1}$ and so on). Additionally, we use a projection procedure similar to~\cite{Balle14} for rounding negative entries in the recovered observation matrices. Our experiments reveal that the rank conditions appear to be satisfied for our dataset.

\paragraph{Run time and memory usage comparisons.} First, we flattened the HMM with tree-structured hidden states into an ordinary HMM with an exponentially larger state space. Our Python implementation of the spectral algorithm for HMMs of \cite{HKZ09} ran out of memory while performing singular value decomposition on the co-occurence matrix, even using sparse matrix libraries. This suggests that naive application of spectral HMM  is not practical for biological data.

Next we compared the performance of Spectral-Tree to a similar model which additionally constrained all transition and observation parameters to be the same on each branch \cite{Biesinger2013}. That work used several variational approximations to the EM algorithm and reported that SMF (structured mean field) performed the best in their tests. Although we implemented Spectral-Tree in Matlab and did not optimize it for run-time efficiency, Spectral-Tree took $\sim$2 hr, whereas the SMF algorithm took $\sim$13 hr for 13 iterations to convergence. This suggests that spectral algorithms may be much faster than variational EM for our model.

\paragraph{Biological interpretation of the observation matrices.} Having examined the efficiency of Spectral-Tree, we next studied the accuracy of the learned parameters. We focused on the observation matrices which hold most of the interesting biological information. Since the full observation matrix is very large ($2^8 \times 6$ where each row is a combination of chromatin marks), Figure~\ref{fig_observation} shows the $8 \times 6$ marginal distribution of each chromatin mark conditioned on each hidden state. Spectral-Tree identified most of the major types of functional elements typically discovered from chromatin data: repressive, strong enhancer, weak enhancer, promoter, transcribed region and background state (states 1-6, respectively, in Figure~\ref{fig_observation}b). In contrast, the SMF algorithm used three out of the six states to model the large background state (i.e. the state with no chromatin marks). It identified repressive, transcribed and promoter states (states 2, 4, 5, respectively, in Figure~\ref{fig_observation}a) but did not identify any enhancer states, which are one of the most interesting classes for further biological studies.

\begin{figure}[h!]
\begin{center}
\subfigure[SMF]{
\includegraphics[width=.27\linewidth]{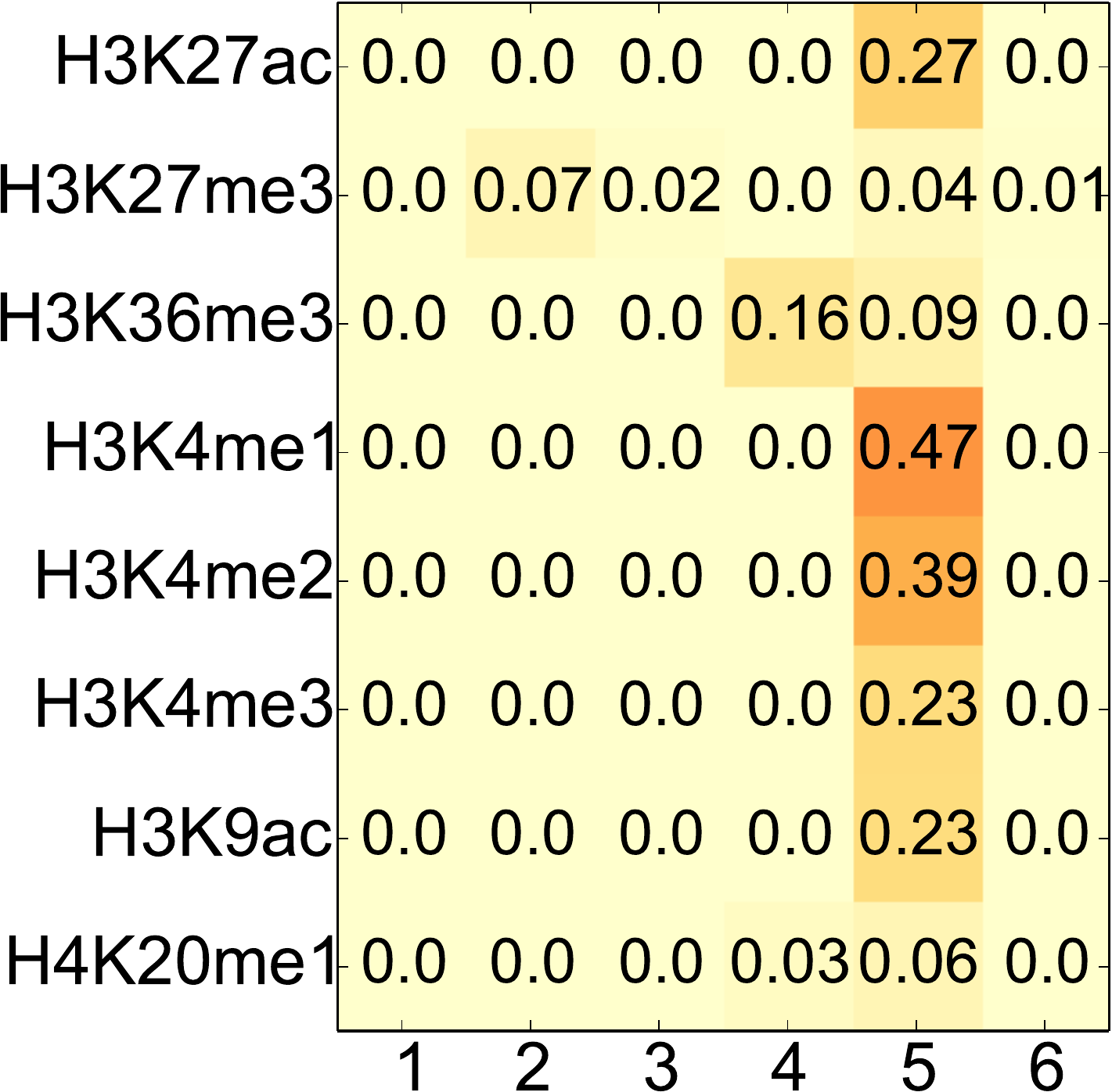}
}
\subfigure[Spectral-Tree]{
\includegraphics[width=.27\linewidth]{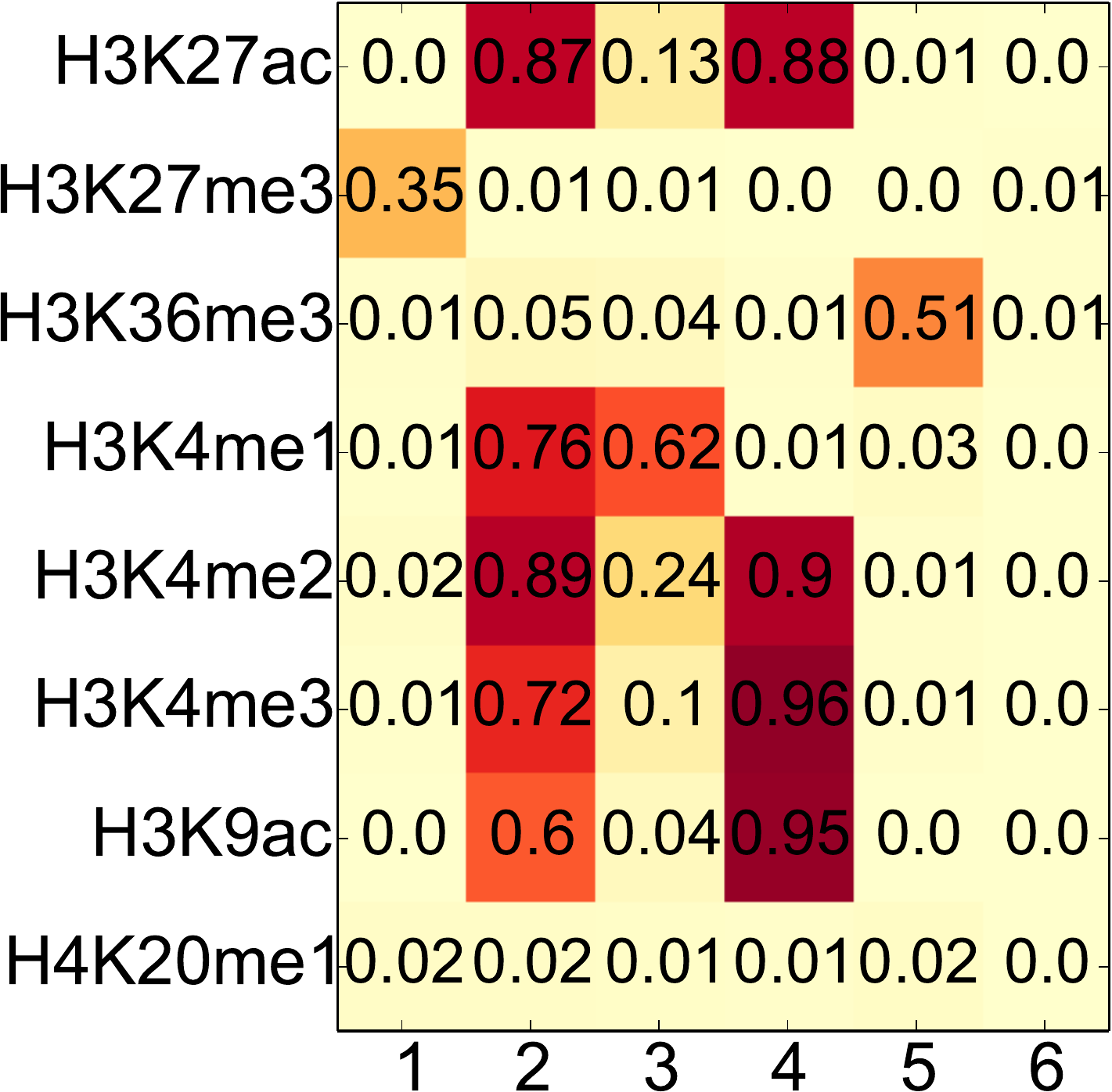}
}
\caption{The compressed observation matrices for the GM12878 cell type estimated by the SMF and Spectral-Tree algorithms. The hidden states are on the X axis.}
\label{fig_observation}
\end{center}
\end{figure}

We believe these results are due to that fact that the background state in the data set is large: $\sim$62\% of the segments do not have chromatin marks for any cell type. The background state has lower biological interest but is modeled well by the maximum likelihood approach. In contast, biologically interesting states such as promoters and enhancers comprise a relatively small fraction of the genome. We cannot simply remove background segments to make the classes balanced because it would change the length distribution of the hidden states. Finally, we observed that our model estimated significantly different parameters for each cell type which captures different chromatin states (Appendix Figure 3). For example, we found enhancer states with strong H3K27ac in all cell types except for H1-hESC, where both enhancer states (3 and 6) had low signal for this mark. This mark is known to be biologically important in these cells for distinguishing active from poised enhancers \cite{Creyghton2010}. This suggests that modeling the additional branch-specific parameters can yield interesting biological insights.

\paragraph{Comparison of the chromosome segments labels.} We computed the most probable state for each chromosome segment using a posterior decoding algorithm. We tested the accuracy of the predictions using an experimentally defined data set and compared it to SMF and the spectral algorithm for HMMs run for individual cell types without the tree (Spectral-HMM). Specifically we assessed promoter prediction accuracy (state 5 for SMF and state 4 for Spectral-Tree in Figure~\ref{fig_observation}) using CAGE data from \cite{Djebali2012} which was available for six of the nine cell types. We used the F1 score (harmonic mean of precision and recall) for comparison and found that Spectral-Tree was much more accurate than SMF for all six cell types (Table~\ref{tab_tss}). This was because the promoter predictions of SMF were biased towards the background state so those predictions had slightly higher recall but much lower specificity.

Finally, we compared our predictions to Spectral-HMM to assess the value of the tree model. H1-hESC is the root node so Spectral-HMM and Spectral-Tree have the same model and obtain the same accuracy (Table~\ref{tab_tss}). Spectral-Tree predicts promoters more accurately than Spectral-HMM for all other cell types except HepG2. However, HepG2 is the most diverged from the root among the cell types based on the Hamming distance between the chromatin marks. We hypothesize that for HepG2, the tree is not a good model which slightly reduces the prediction accuracy.

\begin{table}[h!]
\begin{center}
\begin{tabular}{|cccc|}
\hline
Cell type & SMF & Spectral-HMM & Spectral-Tree \\
\hline
H1-hESC & .0273 & {\bf .1930} & {\bf .1930} \\
GM12878 & .0220 & .1230 & {\bf .1703} \\
HepG2 & .0274 & {\bf .1022} & .0993 \\
HUVEC & .0275 & .1221 & {\bf .1621} \\
K562 & .0255 & .0964 & {\bf .1966} \\
NHEK & .0287 & .1528 & {\bf .1719} \\
\hline
\end{tabular}
\caption{F1 score for predicting promoters for six cell types. The highest F1 score for each cell type is emphasized in bold. Ground-truth labels for the other $3$ cell-types are currently unavailable.}
\label{tab_tss}
\end{center}
\end{table}

Our experiments show that Spectral-Tree has improved computational efficiency, biological interpretability and prediction accuracy on an experimentally-defined feature compared to variational EM for a similar tree HMM model and a spectral algorithm for single HMMs. A previous study showed improvements for spectral learning of single HMMs over the EM algorithm \cite{Song2015}. Thus our algorithms may be useful to the bioinformatics community in analyzing the large-scale chromatin data sets currently being produced.

\section{Acknowledgements}

We thank NSF under IIS-1162581 for support. Part of this work was done while Chaudhuri was visiting the Spectral Learning program at the Simons Foundation in UC Berkeley.

\bibliographystyle{plain}
\bibliography{spectral,reference}

\appendix
\section{Recovering the Transition Probabilities and Initial Probabilities}

Algorithm~\ref{alg:maintrans} recovers the transition and initial probabilities, given estimates of observation matrices. Theorem~\ref{thm:mainobstrans} provides finite sample guarantees on Algorithm~\ref{alg:mainobs} in conjunction with Algorithm~\ref{alg:maintrans}.

\begin{algorithm}
\caption{Recovering the Transition Probabilities and Initial Probabilities}
\begin{algorithmic}[1]
\STATE Input: $N$ samples of the first three observations ${(x_{1}, x_{2}, x_{3})}_{i=1}^N$ generated by a tree HMM, Estimates of observation matrices $\hat{O}^u$. 
\FOR{$u \in V$}
    \IF{$u$ is root $r$}
        \STATE Compute $\hat{W}^r = (\hat{O}^{u})^{\dagger} \hat{P}_{1}^{r}$.
        \STATE Compute $\hat{Q}^r = (\hat{O}^{r})^{\dagger} \hat{P}_{2,1}^{r, r} (\hat{O}^{r})^{\dagger \top}$. \\
	\STATE Normalize over the $z_2^u$ coordinate to get $\hat{T}^u$.
    \ELSE
        \STATE Compute $\hat{W}^u = (\hat{O}^{u})^{\dagger} P_{1,1}^{u, \pi(u)} (\hat{O}^{\pi(u)})^{\dagger \top}$.
        \STATE Compute $\hat{Q}^u = P_{2, 2, 1}^{u, \pi(u), u} ((\hat{O}^{u})^{\dagger T}, (\hat{O}^{\pi(u)})^{\dagger \top}, (\hat{O}^{u})^{\dagger \top}) $.\\
	\STATE Normalize over the $z_2^u$ coordinate to get $\hat{T}^u$.
    \ENDIF
\ENDFOR
\end{algorithmic}
\label{alg:maintrans}
\end{algorithm}

\section{Additional Notations}
For a node $u \in V$, when it is clear from context, we sometimes use $H$ to denote $H_u$ and $d$ to denote $d_u$.

Define $O_2^H$ to be a $n^d \times m^d$ matrix whose rows are indexed by elements in $[n]^d$ and columns are indexed by elements in $[m]^d$. In particular, $(O_2^H)_{(i_1,\ldots,i_d),(j_1,\ldots,j_d)} = P(x_2^r = i_1, \ldots, x_2^u = i_d | z_2^r = j_1, \ldots, z_2^u = j_d)$. Similarly we define $O_3^H$ whose entries are $(O_3^H)_{(i_1,\ldots,i_d),(j_1,\ldots,j_d)} = P(x_3^r = i_1, \ldots, x_3^u = i_d | z_2^r = j_1, \ldots, z_2^u = j_d)$, and $O_1^H$ whose its entries are $(O_1^H)_{(i_1,\ldots,i_d),(j_1,\ldots,j_d)} = P(x_1^r = i_1, \ldots, x_1^u = i_d | z_2^r = j_1, \ldots, z_2^u = j_d)$. We define $O_2^u$ to be a $n \times m^d$ matrix, whose rows are indexed by elements in $[n]$, and columns are indexed by elements in $[m]^d$. Its entries are $(O_2^u)_{i,(j_1,\ldots,j_d)} = P(x_1^u = i | z_2^r = j_1, \ldots, z_2^u = j_d)$. 

Define $\pi^H$ to be a vector representing the marginal probability of $(z_2^r, \ldots, z_2^u)$. In particular, its rows are indexed by elements in $[m]^d$, and $\pi^H_{(i_1,\ldots,i_d)} = P(z_2^r = i_1, \ldots, z_2^u = i_d)$. Define $\pi^u$ to be a vector representing the marginal probability of $z_2^u$. In particular, its rows are indexed by elements in $[m]$, and $\pi^u_{i} = P(z_2^u = i)$. Define $\pi_{\min}^u$ as $\min_i \pi^u_{i}$. Define $\rho^H$ as the $m^d$ dimensional vector representing the marginal probability of $(z_1^r, \ldots, z_1^u)$ whose entries are indexed by elements in $[m]^d$. In particular, $\rho^H_{(i_1, \ldots, i_d)} = P(z_1^r = i_1, \ldots, z_1^u = i_d)$. $T^H$ is defined as the $m^d \times m^d$ matrix representing the conditional probability of $z_2^H$ given $z_1^H$, and its rows and columns are indexed by elements in $[m]^d$, in particular, $T_{(i_1, \ldots, i_d),(j_1, \ldots, j_d)} = P(z_2^r = i_1, \ldots, z_2^u = i_d | z_1^r = j_1, \ldots, z_1^u = j_d)$.

Let $u$ be a node in $V$. Define $U^u$ to be a matrix whose columns form an orthonormal basis of $O^u$. One way to get $U^u$ is to take its columns to be the top $m$ singular vectors of $O^u$. The specific choice of $U^u$ does not affect our analysis, as we will be only looking at the projection matrix $U^u (U^u)^{\top}$ throughout. Define $U^H$ to be $\otimes_{v \in H} U^u$.

For a matrix $M$, define $\| M \|$ to be its operator norm, that is, $\max_{\| u \|=1, \| v \|=1} \| v^{\top}Mu \|$. Define the Frobenius norm of $M$, $\| M \|_F$ to be square root of the sum of the square of its entries, that is, $\sqrt{\sum_{i,j} M_{ij}^2}$. By standard results in linear algebra, $\| M \| \leq \| M \|_F$. Similarly, for a third order tensor $T$, define $\| T \|$ to be its operator norm, that is $\max_{\| u \|=1, \| v \|=1, \|w\|=1} T(u,v,w)$. Define the Frobenius norm of $T$, $\| T \|_F$ to be square root of the sum of the square of its entries, that is, $\sqrt{\sum_{i,j,k} T_{ijk}^2}$. By standard results of linear algebra, $\| T \| \leq \| T \|_F$. 

\section{Main Lemmas}
\subsection{Partitioning Lemmas}

\begin{lemma}[Path Partitioning]
Suppose observations and states $\{x^v_t, z^v_t\}_{v \in V, t \in \N}$ are drawn from a THS-HMM represented by  $\calH = (G, T, O, W)$, where $G = (V, E)$, $T = \{T_v, v \in V\}$, $O = \{O_v, v \in V\}$, $W = \{W_v, v \in V\}$. Let $u \in V$, and let $H_u$ denote nodes inside the unique path from root $r$ to $u$. Then $\{x^v_t, z^v_t\}_{u \in H_u, t \in \N}$ are generated by a THS-HMM represented by a tuple $\tilde{\calH} = (\tilde{G}, \tilde{T}, \tilde{O}, \tilde{W})$, where $\tilde{G} = (\tilde{V}, \tilde{E})$ is the induced subgraph on $H_u$. In particular, $\tilde{V} = H_u$, $\tilde{E} = \{(v, \pi(v))\}_{v \in H_u}$), $\tilde{T} = \{T_v, v \in H_u\}$, $\tilde{O} = \{O_v, v \in H_u\}$, $\tilde{W} = \{W_v, v \in H_u\}$.
~\label{lem:pathpartition}
\end{lemma}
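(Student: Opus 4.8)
The plan is to prove the lemma by explicit marginalization of the factorized joint law of the full THS-HMM, recognizing the surviving factors as the joint law of $\tilde{\calH}$. Since the process is defined through its finite-dimensional distributions, it suffices to fix an arbitrary finite horizon $\tau$ and show that the marginal law of $\{x_t^v, z_t^v\}_{v \in H_u,\, 1 \le t \le \tau}$ equals the corresponding finite-dimensional distribution of the claimed path model; the statement for all $t \in \N$ then follows by Kolmogorov consistency.

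First I would write the joint law over $[1,\tau]$ as the product of the local conditional factors prescribed by the model: an initial factor $W^v(z_1^v \mid z_1^{\pi(v)})$ per node $v$ (with no parent argument at the root), a transition factor $T^v(z_t^v \mid z_{t-1}^v, z_t^{\pi(v)})$ for each $v$ and $t \ge 2$, and an emission factor $O^v(x_t^v \mid z_t^v)$ for each $v$ and $t$. The structural fact driving everything is that $H_u$ is closed under the parent map: it contains the root, and for every non-root $v \in H_u$ the parent $\pi(v)$ again lies on the root-to-$u$ path. Hence every factor attached to a node $v \in H_u$ references only variables indexed by nodes of $H_u$ (namely $v$ and $\pi(v)$), whereas factors attached to off-path nodes may reference a path variable only through the parent argument at the boundary where a hanging subtree attaches to the path.

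Next I would split the product into $F_A$, the product of factors over $v \in H_u$, and $F_B$, the product over $v \notin H_u$, and sum out every off-path variable $\{x_t^v, z_t^v\}_{v \notin H_u}$. By the closure property, $F_A$ involves only the retained variables and is syntactically identical to the finite-horizon joint law of the THS-HMM $\tilde{\calH}$ on the induced path, because $\tilde T, \tilde O, \tilde W$ are exactly the restrictions of $T, O, W$ to $H_u$ and the induced subgraph on $H_u$ is itself the root-to-$u$ path. It then remains to prove that $\sum_B F_B = 1$ for every fixed assignment of the boundary path states that $F_B$ touches; granting this, the marginal equals $F_A$, which is the assertion.

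The hard part will be this normalization step, since $F_B$ genuinely depends on boundary path states and must collapse to $1$ uniformly in them. I would handle it by eliminating the off-path variables in reverse topological order of the induced directed model on off-path nodes, whose edges are $z_{t-1}^v \to z_t^v$, $z_t^{\pi(v)} \to z_t^v$ and $z_t^v \to x_t^v$. Every emission is a sink, and $\sum_{x_t^v} O^v(x_t^v \mid z_t^v) = 1$ removes it; after the emissions, the current sinks are off-path states at the latest times and deepest positions, each of which satisfies $\sum T^v(\cdot \mid \cdot) = 1$ or $\sum W^v(\cdot \mid \cdot) = 1$. Peeling sinks one at a time thus removes every off-path variable, each step being exactly the normalization of a single conditional distribution, and leaves the constant $1$ regardless of the boundary path states. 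Combining the three observations yields that the marginal law equals $F_A$, i.e. the law of $\tilde{\calH}$, completing the proof.
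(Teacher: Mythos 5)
Your proposal is correct and follows essentially the same route as the paper's proof: both write the joint law as a product of local conditional factors, observe that the factors attached to nodes of $H_u$ reference only path variables, and eliminate the off-path variables one at a time in an order (emissions first, then hidden states from the latest time backward at leaf nodes of the remaining tree) so that each summation is the normalization of a single conditional distribution and contributes a factor of $1$. Your additional remarks---that $H_u$ is closed under the parent map and that the finite-horizon argument extends to all $t\in\N$ by consistency---are correct refinements of the same argument rather than a different approach.
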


\begin{proof}[Proof of Lemma~\ref{lem:pathpartition}]

To show this lemma, we will calculate the marginal distribution of the variables $\{x^v_t, z^v_t\}_{v \in H_u, t \in [\tau]}$. Observe that the full joint distribution of $\{x^v_t, z^v_t\}_{v \in G, t \in [\tau]}$ is equal to:
\[ \prod_{v \in G} \Pr(z^v_1) \prod_{t=1}^{\tau-1} \prod_{v \in H_u} \Pr(z^v_{t+1} | z^v_{t}, z^{\pi(v)}_{t+1}) \prod_{t=1}^{\tau} \prod_{v \in G} \Pr(x^v_t | z^v_t) \]

To calculate the marginal over $\{x^v_t, z^v_t\}_{v \in H_u, t \in [\tau]}$, we eliminate the rest of the variables one by one. Observe that we can eliminate any observation variable $x^v_t$ for $v \notin H_u$ without introducing any extra edges, as $x^v_t$ is only connected to $z^v_t$. Moreover, marginalizing $x^v_t$ gives: $\sum_{x} \Pr(x^v_t = x | z^v_t = z) = 1$.  

Let $\tilde{G}$ be the current tree; initially $\tilde{G} = G$. We next eliminate the nodes $\{ z^v_t, t = \tau, \ldots, 1 \}$ for $v \notin H_u$ one by one where $v \notin H_u$ is a leaf node in $\tilde{G}$. We do this in the order $z^v_T, z^v_{T-1}, \ldots, z^v_1$; once we have eliminated these nodes, we delete $v$ from $\tilde{G}$, and we continue until only the nodes in $H_u$ are left. To eliminate a $z^v_t$ when $\{ z^v_{s}, s > t\}$ have been eliminated, we sum over: $\sum_z \Pr(z^v_t = z | z^v_{t-1}, z^{\pi(v)}_t)$ which also sums to $1$. 

We repeat this process until only the nodes $\{x^v_t, z^v_t\}_{u \in H_u, t \in [T]}$ are left. Since we get $1$ from eliminating each variable, the marginal we are left with is: 

\begin{equation} \label{eqn:marginal}
 \prod_{v \in H_u} \Pr(z^v_1) \prod_{t=1}^{T-1} \prod_{v \in H_u} \Pr(z^v_{t+1} | z^v_{t}, z^{\pi(v)}_{t+1}) \prod_{t=1}^{T} \prod_{v \in H_u} \Pr(x^v_t | z^v_t), 
\end{equation}
which is the marginal distribution of an HMM with tree-structured hidden states described by the tuple $(\tilde{G}, \tilde{T}, \tilde{O}, \tilde{W})$. The lemma follows.
\end{proof}

The following is a Corollary of Lemma~\ref{lem:pathpartition}.
 
\begin{corollary}
If observations and states $\{x^v_t, z^v_t\}_{v \in H_u, t \in \N}$ are drawn from a THS-HMM represented by $(\tilde{G}, \tilde{T}, \tilde{O}, \tilde{W})$, then the sequence of coalesced observations and states $\{x^{H_u}_t, z^{H_u}_t\}_{t \in \N}$ are drawn from an HMM.
~\label{lem:metahmm}
\end{corollary}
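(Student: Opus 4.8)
The plan is to build directly on the factorized marginal distribution already established in Lemma~\ref{lem:pathpartition}, namely Equation~(\ref{eqn:marginal}), and simply regroup its factors so that the coalesced variables $z^{H_u}_t = (z^v_t)_{v \in H_u}$ and $x^{H_u}_t = (x^v_t)_{v \in H_u}$ play the roles of the hidden state and observation of an ordinary (single-chain) HMM. The one structural fact I would exploit is that $H_u$, being the unique path from the root $r$ to $u$, is closed under taking parents: for every non-root $v \in H_u$ we have $\pi(v) \in H_u$. Hence the within-time-step conditioning variable $z^{\pi(v)}_{t+1}$ appearing in the transition factors of~(\ref{eqn:marginal}) never refers to a node outside $H_u$, so each group of factors below depends only on coalesced variables at the indicated times.

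Concretely, I would define, for the coalesced process, a meta-initial distribution, a meta-transition and a meta-emission by
\begin{align*}
W^{H_u}(z^{H_u}_1) &= \prod_{v \in H_u} \Pr(z^v_1), \\
T^{H_u}(z^{H_u}_{t+1} \mid z^{H_u}_t) &= \prod_{v \in H_u} \Pr(z^v_{t+1} \mid z^v_t, z^{\pi(v)}_{t+1}), \\
O^{H_u}(x^{H_u}_t \mid z^{H_u}_t) &= \prod_{v \in H_u} \Pr(x^v_t \mid z^v_t).
\end{align*}
By the closure-under-parents property, $T^{H_u}$ is a function of $z^{H_u}_{t+1}$ and $z^{H_u}_t$ only, $O^{H_u}$ of $x^{H_u}_t$ and $z^{H_u}_t$ only, and $W^{H_u}$ of $z^{H_u}_1$ only. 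Substituting these three quantities back into~(\ref{eqn:marginal}) rewrites the joint law of $\{x^{H_u}_t, z^{H_u}_t\}_t$ as
\[ W^{H_u}(z^{H_u}_1) \prod_{t=1}^{T-1} T^{H_u}(z^{H_u}_{t+1} \mid z^{H_u}_t) \prod_{t=1}^{T} O^{H_u}(x^{H_u}_t \mid z^{H_u}_t), \]
which is exactly the factorization of a standard HMM with hidden states $z^{H_u}_t$, observations $x^{H_u}_t$, transition kernel $T^{H_u}$, emission $O^{H_u}$ and initial law $W^{H_u}$.

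The only step requiring genuine care, and the one I expect to be the main obstacle, is verifying that the regrouped factors really are normalized conditional distributions rather than mere products of conditionals. For $O^{H_u}$ this is immediate, since each factor $\Pr(x^v_t \mid z^v_t)$ sums to $1$ over $x^v_t$ independently, and similarly $W^{H_u}$ normalizes trivially as a product of marginals. For $T^{H_u}$ the argument is subtler because the factor for a node $v$ is coupled to that of its child through $z^{\pi(v)}_{t+1}$; here I would sum $\prod_{v \in H_u}\Pr(z^v_{t+1}\mid z^v_t, z^{\pi(v)}_{t+1})$ over all configurations of $z^{H_u}_{t+1}$ by eliminating the variables $z^v_{t+1}$ in leaf-to-root order along the path. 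Each $z^v_{t+1}$ appears only in its own factor and in the factor of its unique child on the path, so once the child has been summed out, $\sum_{z^v_{t+1}}\Pr(z^v_{t+1}\mid z^v_t, z^{\pi(v)}_{t+1}) = 1$; this telescoping collapses the whole sum to $1$, establishing that $T^{H_u}$ is a genuine transition kernel and completing the proof.
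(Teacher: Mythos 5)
Your proof is correct, but it takes a different route from the paper's. The paper argues via d-separation in the graphical model underlying Equation~\eqref{eqn:marginal}: it checks the two defining conditional independences of an HMM, namely that $x^{H_u}_t$ is independent of everything else given $z^{H_u}_t$ (chain/fork structures through $z^{H_u}_t$) and that $z^{H_u}_{t+1}$ is independent of $\{z^{H_u}_s\}_{s<t}$ given $z^{H_u}_t$. You instead regroup the factors of~\eqref{eqn:marginal} into explicit meta-kernels $W^{H_u}$, $T^{H_u}$, $O^{H_u}$, observe that closure of the path under parents makes each group a function of only the coalesced variables at the right time indices, and verify normalization --- the leaf-to-root telescoping sum for $T^{H_u}$ being the one nontrivial check, which you carry out correctly. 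Your approach is more constructive: it not only establishes the HMM structure but exhibits the meta-parameters in product form, which is precisely what the paper later needs anyway (e.g.\ $O^{H_u}=\bigotimes_{v\in H_u}O^v$ in Lemma~\ref{lem:projtrick}), and it sidesteps the slight delicacy in the paper's d-separation claim that the coalesced nodes ``form a chain'' when the within-time-step edges actually make them a chain of groups rather than of single nodes. The paper's argument is shorter but leans on graphical-model machinery; yours is longer but entirely elementary. One cosmetic caveat: you describe $W^{H_u}=\prod_{v}\Pr(z^v_1)$ as normalizing ``as a product of marginals''; to match the actual model the initial factors should be the conditionals $\Pr(z^v_1\mid z^{\pi(v)}_1)$ (as in the definition of $\calW$), whose product is the joint law of $z^{H_u}_1$ by the chain rule along the path --- this inherits an imprecision already present in the paper's Equation~\eqref{eqn:marginal} and does not affect the validity of your argument.
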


\begin{proof}
The proof is a simple extension of Lemma~\ref{lem:pathpartition}. \eqref{eqn:marginal} gives us the marginal distribution of $\{x^v_t, z^v_t\}_{v \in H_u, t \in \N}$.  Observe that for any $t$, conditioned on $z^{H_u}_t$, $x^{H_u}_t$ is d-separated from all the other nodes of the graph -- this is because for any node $x$ in the graphical model, $x^{H_u}_t$, $z^{H_u}_t$ and $x$ either form a chain or or a fork structure whose middle node is $z^{H_u}_t$. Moreover, conditioned on $z^{H_u}_t$, $z^{H_u}_{t+1}$ is d-separated from the set of nodes $\{ z_s^{H_u} \}_{s=1}^{t-1}$. This is because $z^{H_u}_s$, $z^{H_u}_t$ and $z^{H_u}_{t+1}$ form a chain structure whose middle node is $z^{H_u}_t$. The lemma thus follows.
\end{proof}

\subsection{Skeletensor Lemmas}
In this subsection, we justify our construction of a skeletensor. Let $u$ be any node in the tree $G$ and let $H$ be the path from the root of $G$ to $u$. 

Recall that we define $O_1^{H}$ to be the $n^d \times m^d$ matrix, whose entries are $(O_1^{H})_{(i_1,\ldots,i_d),(j_1,\ldots,j_d)} = P(x_1^r = i_1, \ldots, x_1^u = i_d | z_2^r = j_1, \ldots, z_2^u = j_d)$. Similarly, $O_3^{H}$ is a $n^d \times m^d$ matrix, with entries $(O_3^{H})_{(i_1,\ldots,i_d),(j_1,\ldots,j_d)} = P(x_3^r = i_1, \ldots, x_3^u = i_d | z_2^r = j_1, \ldots, z_2^u = j_d)$.

We begin by showing that under Assumptions~\ref{ass:obsrank} and~\ref{ass:prank}, the matrices $O_1^H$ and $O_3^H$ for the three-view mixture model induced by the HMM have full column rank.

\begin{lemma}
Let $u$ be a node in $V$. Recall that $H = H_u$ is the set of nodes along the path from root $r$ to $u$. 
Then: \\(1) The matrices $\diag(\rho^{H}) (T^{H})^{\top}  \diag(\pi^{H})^{-1}$ and $T^{H}$ are of full rank. \\(2) The matrices $O_1^H$ and $O_3^H$ are of full column rank.
\label{lem:fullrank}
\end{lemma}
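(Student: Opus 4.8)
The plan is to work with the meta-HMM on the coalesced sequence $\{x_t^H, z_t^H\}$ guaranteed by Corollary~\ref{lem:metahmm}: it is an ordinary HMM on state space $[m]^d$ with transition matrix $T^H$ and a single emission matrix $O^H = \otimes_{v \in H} O^v$ (the product structure of the emissions). The strategy is to express the three ``rolled'' observation matrices $O_1^H, O_2^H, O_3^H$ in terms of $O^H$ and $T^H$, and then propagate rank statements through these factorizations, reducing part~(2) to part~(1) together with the full column rank of $O^H$.

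First I would record three factorizations, each obtained from the conditional-independence (d-separation) structure of the meta-HMM by conditioning on the central state $z_2^H$. Using that $x_1^H$ and $x_2^H$ are independent given $z_2^H$,
\[ P_{1,2}^{H,H} = O_1^H \diag(\pi^H) (O_2^H)^\top, \]
and by Bayes' rule together with time-homogeneity of the transitions,
\[ O_2^H = O^H, \quad O_1^H = O^H \diag(\rho^H)(T^H)^\top \diag(\pi^H)^{-1}, \quad O_3^H = O^H T^H. \]
Verifying the middle identity --- that $P(z_1^H = i \mid z_2^H = j)$ equals the $(i,j)$ entry of $\diag(\rho^H)(T^H)^\top \diag(\pi^H)^{-1}$ --- is the one place where Bayes' rule and the definitions of $\rho^H, \pi^H, T^H$ must be combined carefully; the other two are direct consequences of the Markov structure.

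For part~(1), I would first note that $O^H = \otimes_{v \in H} O^v$ has full column rank $m^d$: by Assumption~\ref{ass:obsrank} each $O^v$ has rank $m$, and the rank of a Kronecker product is the product of the ranks. Next I would apply Assumption~\ref{ass:prank} to the factorization $P_{1,2}^{H,H} = O_1^H \diag(\pi^H)(O_2^H)^\top$: since the left side has rank $m^d$ and is a product of an $n^d \times m^d$, an $m^d \times m^d$, and an $m^d \times n^d$ matrix, each factor must itself have rank $m^d$. In particular $\diag(\pi^H)$ is invertible (so $\pi^H$ is strictly positive) and $O_1^H$ has full column rank. Then from $O_1^H = O^H B$ with $B = \diag(\rho^H)(T^H)^\top \diag(\pi^H)^{-1}$, the fact that both $O_1^H$ and $O^H$ have full column rank $m^d$ forces $B$ to be an invertible $m^d \times m^d$ matrix --- this is precisely the first matrix claimed in part~(1). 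Finally, multiplying $B$ by the invertible $\diag(\pi^H)$ shows $\diag(\rho^H)(T^H)^\top$ is invertible, and being a product of two square matrices this forces both to be full rank; hence $T^H$ has full rank (and, incidentally, $\rho^H$ is strictly positive).

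Part~(2) then follows immediately: $O_2^H = O^H$ and $O_1^H$ are already known to have full column rank, and $O_3^H = O^H T^H$ has full column rank because $O^H$ has full column rank and $T^H$ is invertible. I expect the main obstacle to be establishing full rank of $T^H$: unlike $O^H$, the meta-transition $T^H$ does not factor as a Kronecker product (the per-node transitions are coupled through the parent states), so it cannot be attacked node by node. The argument must instead route through the invertibility of $B$, which is itself recovered from the factorization of $O_1^H$; getting the directions of all these factorizations right and applying the rank-propagation rule (a product of square matrices is full rank iff each factor is, and $\mathrm{rank}(ACB)\le\min$ of the factor ranks) exactly is the delicate part.
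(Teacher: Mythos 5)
Your proposal is correct and follows essentially the same route as the paper: the same factorizations $P_{1,2}^{H,H} = O_1^H \diag(\pi^H)(O_2^H)^{\top}$, $O_1^H = O^H\diag(\rho^H)(T^H)^{\top}\diag(\pi^H)^{-1}$, and $O_3^H = O^H T^H$ (which the paper imports from Proposition 4.2 of \cite{AHK12}), with rank propagated from Assumption~\ref{ass:prank} through $O_1^H$ to the square factor and then to $T^H$ and $O_3^H$. Your explicit use of the Kronecker structure of $O^H$ and Assumption~\ref{ass:obsrank} to certify full column rank of $O^H$ only makes explicit a step the paper leaves implicit.
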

\begin{proof}
By Lemma~\ref{lem:metahmm}, $x_1^H$, $x_2^H$, $x_3^H$ are conditionally independent given $h_2^H$. Thus, 
\[ P_{1,2}^{H,H} = O_1^{H} \diag(\pi^H) (O_2^{H})^{\top} \]
Since by Assumption~\ref{ass:prank}, $P_{1,2}^{H,H}$ is of rank $m^d$, this implies that the matrix $O_1^{H}$ must be of rank $m^d$ as well.
By Proposition 4.2 of~\cite{AHK12},
\[ O_1^{H} = O^{H} \diag(\rho^{H}) (T^{H})^{\top}  \diag(\pi^{H})^{-1}  \]
This implies that $\diag(\rho^{H}) (T^{H})^{\top}  \diag(\pi^{H})^{-1}$ is of rank $m^d$, which is of full rank. Hence $T^{H}$ is of full rank.
By Proposition 4.2 of~\cite{AHK12},
\[ O_3^{H} = O^{H} T^{H}  \]
This shows $O_3^{H}$ is of full column rank.
\end{proof}

Second, we discuss the infinite sample version of our symmetrization matrix. This will be extended in Lemma~\ref{lem:finiteprojsymmetrize} in our detailed finite sample analysis.

\begin{lemma}
Let $u$ be a node in $V$. Recall that $H_u$ is the set of nodes along the path from root $r$ to $u$. Assume $P_{2,3}^{u,H}, P_{1,3}^{H,H}, P_{2,1}^{u,H}$ are given (where $P_{3,1}^{H,H} = (P_{1,3}^{H,H})^T$). Let the symmetrization matrices be:
\[ S_1^u = P_{2,3}^{u,H} (P_{1,3}^{H,H})^{\dagger} \]
\[ S_3^u = P_{2,1}^{u,H} (P_{3,1}^{H,H})^{\dagger} \]
and the ground truth symmetrized pair-wise and triple-wise co-occurence tensors be:
\[ M_2^u = P_{1,2}^{H,u}(S_1^{uT},I) \]
\[ M_3^u = P_{1,2,3}^{H,u,H}(S_1^{uT},I,S_3^{uT}) \]
Then,
\[ M_2^u = \sum_i \pi_i^u (O^u)_i \otimes (O^u)_i \]
\[ M_3^u = \sum_i \pi_i^u (O^u)_i \otimes (O^u)_i \otimes (O^u)_i \] 
\label{lem:skeletensor}
\end{lemma}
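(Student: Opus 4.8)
The plan is to reduce the statement to the standard symmetrized-moment identity for a three-view mixture model and then collapse the resulting sum over meta-states down to the single node $u$. First I would invoke Corollary~\ref{lem:metahmm}, which says that the coalesced sequence $\{x_t^H, z_t^H\}$ is an ordinary HMM; consequently $x_1^H$, $x_2^H$ and $x_3^H$ are mutually independent conditioned on the hidden meta-state $z_2^H$. By the product structure of the emissions, the single-node observation $x_2^u$ depends on $z_2^H$ only through its $u$-coordinate $z_2^u$, so that $P(x_2^u \mid z_2^H) = O^u$ in the sense that the column of $O_2^u$ indexed by $\mathbf{j}\in[m]^d$ equals the $j_u$-th column of $O^u$. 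Writing $O_1^H, O_3^H$ for the meta-emission matrices at times $1$ and $3$, conditional independence yields the factorizations
\begin{align*}
P_{1,3}^{H,H} &= O_1^H \diag(\pi^H)(O_3^H)^\top, & P_{2,3}^{u,H} &= O_2^u \diag(\pi^H)(O_3^H)^\top, \\
P_{2,1}^{u,H} &= O_2^u \diag(\pi^H)(O_1^H)^\top, & P_{1,2}^{H,u} &= O_1^H \diag(\pi^H)(O_2^u)^\top,
\end{align*}
together with $P_{1,2,3}^{H,u,H} = \sum_{\mathbf{j}} \pi^H_{\mathbf{j}}\, (O_1^H)_{\mathbf{j}} \otimes (O_2^u)_{\mathbf{j}} \otimes (O_3^H)_{\mathbf{j}}$.

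The crux is the pair of identities $S_1^u O_1^H = O_2^u$ and $S_3^u O_3^H = O_2^u$. To prove the first I would set $A = \diag(\pi^H)(O_3^H)^\top$ and observe, via Lemma~\ref{lem:fullrank}, that $O_1^H$ has full column rank while $A$ has full row rank, the latter because $O_3^H$ has full column rank and $\pi^H$ is strictly positive (forced by Assumption~\ref{ass:prank}). Hence $P_{1,3}^{H,H} = O_1^H A$ admits the reverse-order pseudo-inverse $(P_{1,3}^{H,H})^\dagger = A^\dagger (O_1^H)^\dagger$, giving $S_1^u = P_{2,3}^{u,H}(P_{1,3}^{H,H})^\dagger = O_2^u (A A^\dagger)(O_1^H)^\dagger = O_2^u (O_1^H)^\dagger$ since $A A^\dagger = I$. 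Right-multiplying by $O_1^H$ and using $(O_1^H)^\dagger O_1^H = I$ gives $S_1^u O_1^H = O_2^u$; the argument for $S_3^u$ is identical after interchanging the roles of times $1$ and $3$.

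With these identities the remainder is substitution. Unfolding the matrix shorthand $M(V_1,V_2)=V_1^\top M V_2$, I get $M_2^u = S_1^u P_{1,2}^{H,u} = S_1^u O_1^H \diag(\pi^H)(O_2^u)^\top = O_2^u \diag(\pi^H)(O_2^u)^\top = \sum_{\mathbf{j}} \pi^H_{\mathbf{j}}\,(O_2^u)_{\mathbf{j}} \otimes (O_2^u)_{\mathbf{j}}$. Applying the trilinear transform term by term to the tensor and using $S_1^u(O_1^H)_{\mathbf{j}} = (O_2^u)_{\mathbf{j}}$ and $S_3^u(O_3^H)_{\mathbf{j}} = (O_2^u)_{\mathbf{j}}$ gives $M_3^u = \sum_{\mathbf{j}} \pi^H_{\mathbf{j}}\,(O_2^u)_{\mathbf{j}}^{\otimes 3}$. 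Finally I would collapse the meta-state sum: since $(O_2^u)_{\mathbf{j}} = (O^u)_{j_u}$ depends on $\mathbf{j}$ only through $j_u$, grouping the terms by $j_u = i$ and using $\pi^u_i = \sum_{\mathbf{j}:\, j_u = i} \pi^H_{\mathbf{j}}$ yields $M_2^u = \sum_i \pi^u_i (O^u)_i \otimes (O^u)_i$ and $M_3^u = \sum_i \pi^u_i (O^u)_i \otimes (O^u)_i \otimes (O^u)_i$, as claimed.

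I expect the main obstacle to be justifying the reverse-order pseudo-inverse step rigorously: the identity $(O_1^H A)^\dagger = A^\dagger (O_1^H)^\dagger$ holds only because one factor has full column rank and the other full row rank, which is precisely what Lemma~\ref{lem:fullrank} supplies. The second delicate point is the meta-state collapse, as this is the step that converts an identity about the exponentially large meta-emission matrices into a clean decomposition of the single-node matrix $O^u$, and it relies on the emission product structure established through Corollary~\ref{lem:metahmm}.
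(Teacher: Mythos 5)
Your proposal is correct and follows essentially the same route as the paper's proof: both derive $S_1^u = O_2^u(O_1^H)^\dagger$ and $S_3^u = O_2^u(O_3^H)^\dagger$ via the reverse-order pseudoinverse (justified by the full column rank of $O_1^H, O_3^H$ from Lemma~\ref{lem:fullrank} and the full row rank of $\diag(\pi^H)(O_3^H)^\top$), then substitute into the multilinear forms and collapse the sum over meta-states to the $u$-coordinate using $(O_2^u)_{\mathbf{j}} = (O^u)_{j_u}$ and the marginalization $\pi_i^u = \sum_{\mathbf{j}: j_u=i}\pi^H_{\mathbf{j}}$. Your treatment is, if anything, slightly more explicit than the paper's about why the reverse-order law applies and why $\pi^H$ is strictly positive.
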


\begin{proof}

By Lemma~\ref{lem:metahmm}, $x_1^H$, $x_2^u$, $x_3^H$ are conditionally independent given $z_2^H$, thus
\[ P_{2,3}^{u,H} = O_2^u \diag(\pi^H) O_3^{H T}\]
\[ P_{1,3}^{H,H} = O_1^H \diag(\pi^H) O_3^{H T}\]
Lemma~\ref{lem:fullrank} implies that $O_1^H$ is of full column rank, and $\diag(\pi^H) O_3^{H T}$ is of full row rank. Therefore by standard properties of pseudoinverse,
\[ (P_{1,3}^{H,H})^{\dagger} = (\diag(\pi^H) O_3^{H T})^{\dagger} (O_1^H)^{\dagger} \]
Therefore, 
\[ S_1^u = O_2^{u} (O_1^H)^{\dagger} \]
Likewise,
\[ S_3^u = O_2^{u} (O_3^H)^{\dagger} \]
Then, 
\begin{eqnarray*} 
M_2^u &=& P_{1,2}^{H,u}(S_1^{uT},I) \\
&=& \sum_{i_1, \ldots, i_D} \pi^H_{i_1, \ldots, i_D} (O_2^u)_{i_1, \ldots, i_D} \otimes (O_2^u)_{i_1, \ldots, i_D} \\
&=& \sum_{i_1, \ldots, i_D} \pi^H_{i_1, \ldots, i_D} (O^u)_{i_D} \otimes (O^u)_{i_D} \\
&=& \sum_i \pi^u_i (O^u)_i \otimes (O^u)_i 
\end{eqnarray*} 
\begin{eqnarray*}
M_3^u &=& P_{1,2,3}^{H,u,H}(S_1^{uT},I,S_3^{uT}) \\
&=& \sum_{i_1, \ldots, i_D} \pi^H_{i_1, \ldots, i_D} (O_2^u)_{i_1, \ldots, i_D} \otimes (O_2^u)_{i_1, \ldots, i_D} \otimes (O_2^u)_{i_1, \ldots, i_D}\\
&=& \sum_{i_1, \ldots, i_D} \pi^H_{i_1, \ldots, i_D} (O^u)_{i_D} \otimes (O^u)_{i_D} \otimes (O^u)_{i_D}\\
&=& \sum_i \pi^u_i (O^u)_i \otimes (O^u)_i \otimes (O^u)_i 
\end{eqnarray*}
\end{proof}

\subsection{\projtrick Lemmas}

\subsubsection{\projtrick in HMM with Tree Hidden States}

\begin{lemma}
$O^{H}$, the observation matrix of the HMM that generates the meta-states and meta-observations $\{z^{H}_t, x^{H}_t\}_{t \in \N}$, equals $\bigotimes_{v \in H} O^v$.
\label{lem:projtrick}
\end{lemma}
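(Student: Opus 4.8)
The plan is to compute a generic entry of $O^{H}$ directly and show that it factorizes into a product of entries of the individual $O^v$, which is exactly the defining property of the Kronecker product $\bigotimes_{v \in H} O^v$. Writing $H = \{v_1, \ldots, v_d\}$, a row of $O^H$ is indexed by a tuple $i = (i_1, \ldots, i_d) \in [n]^d$ and a column by $j = (j_1, \ldots, j_d) \in [m]^d$, and by definition $O^H_{i,j} = P(x_t^H = i \mid z_t^H = j) = P(x_t^{v_1} = i_1, \ldots, x_t^{v_d} = i_d \mid z_t^{v_1} = j_1, \ldots, z_t^{v_d} = j_d)$.

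The key step is to establish that, conditioned on the full meta-state $z_t^H$, the observations $\{x_t^v\}_{v \in H}$ are mutually independent with each $x_t^v$ depending only on $z_t^v$. This is where the product structure of the observations in the model definition enters: each observation node $x_t^v$ is a leaf whose only neighbor in the graphical model is its own hidden state $z_t^v$. Since every $z_t^v$ with $v \in H$ lies in the conditioning set $z_t^H$, any two observation nodes $x_t^{v}$ and $x_t^{v'}$ are d-separated given $z_t^H$, and indeed the whole collection is mutually d-separated. Equivalently, this can be read off the factored joint distribution in~\eqref{eqn:marginal} (supplied by Lemma~\ref{lem:pathpartition}), whose emission term is $\prod_{t}\prod_{v \in H} P(x_t^v \mid z_t^v)$; restricting to a fixed time $t$ and conditioning on $z_t^H$ leaves precisely the product $\prod_{v \in H} P(x_t^v \mid z_t^v)$.

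Given this factorization, I would conclude that $O^H_{i,j} = \prod_{k=1}^{d} P(x_t^{v_k} = i_k \mid z_t^{v_k} = j_k) = \prod_{k=1}^{d} O^{v_k}_{i_k, j_k}$. By the definition of the Kronecker product of matrices, the entry of $\bigotimes_{v \in H} O^v$ indexed by the same tuples $i = (i_1, \ldots, i_d)$ and $j = (j_1, \ldots, j_d)$ is exactly $\prod_{k=1}^d O^{v_k}_{i_k, j_k}$. Since the two matrices agree entrywise over all index tuples, $O^H = \bigotimes_{v \in H} O^v$, as claimed.

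The main obstacle is making the conditional-independence step fully rigorous, since one must argue mutual (not merely pairwise) independence of the observation nodes given $z_t^H$. I expect this to be routine: each $x_t^v$ is a degree-one leaf attached to $z_t^v$, so once all of $z_t^H$ is fixed there are no active paths among the $x_t^v$, and the factored form~\eqref{eqn:marginal} makes this explicit while avoiding a separate d-separation argument.
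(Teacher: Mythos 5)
Your proposal is correct and follows essentially the same route as the paper's proof: compute a generic entry of $O^H$, invoke the conditional independence of the observations $\{x_t^v\}_{v\in H}$ given the meta-state $z_t^H$ to factor it as $\prod_k O^{v_k}_{i_k,j_k}$, and identify this with the corresponding entry of $\bigotimes_{v\in H} O^v$. Your additional justification of the mutual conditional independence via the factored joint in~\eqref{eqn:marginal} simply makes explicit what the paper asserts in one line.
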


\begin{proof}
We consider the observation matrix of the HMM that generates the meta-states and meta-observations $\{z^{H}_t, x^{H}_t\}_{t \in \N}$. The number of possible meta-hidden states $z^{H}_t$ is $m^d$, indexed by $(z_t^v)_{v \in H}$ and the number of possible meta-observations $x^{H}_t$ is $n^d$, indexed by $(x_t^v)_{v \in H}$. Thus, the observation matrix $O^{H}$ is of dimension $n^d \times m^d$. Entrywise,
\begin{eqnarray*}
&& (O^{H_u})_{(i_1, \ldots, i_d),(j_1, \ldots, j_d) } \\
&=& \P(x_t^r = i_1, \ldots, x_t^u = i_d | z_t^r = j_1, \ldots, z_t^u = j_d ) \\
&=& O_{i_1, j_1} \ldots O_{i_d, j_d} \\
&=& (\bigotimes_{v \in H} O^v)_{(i_1, \ldots, i_d), (j_1, \ldots, j_d)}
\end{eqnarray*}
Where the second equality uses conditional independence.
Therefore, $O^{H} = \bigotimes_{v \in H} O^v$.
\end{proof}

\subsubsection{\projtrick Beyond HMM with Tree Hidden States}

We consider the case of a simple HMM when the observation matrix $O$ is not full rank. In this case, we first define forward and backward observation matrices $\tilde{O}^f_s$ and $\tilde{O}^b_s$ formally. For a fixed $s$, $\tilde{O}^f_s$ is a $n^s \times m$ matrix, with rows indexed by a $s$-tuple $(j_1, \ldots, j_s) \in [n]^s$, and columns indexed by $i \in [m]$. Entrywise,
\[ (\tilde{O}^f_s)_{(i_1, \ldots, i_s), j} =  P(x_{t} = i_1,x_{t+1} = i_2,\ldots,x_{t+s-1} = i_s|z_t= j) \]
Similarly we define backward observation matrices $\tilde{O}^b_s = P(x_{t},x_{t-1},\ldots,x_{t-s+1}|z_t)$. Entrywise,
\[ (\tilde{O}^b_s)_{(i_1, \ldots, i_s), j} =  P(x_{t} = i_1,x_{t-1} = i_2,\ldots,x_{t-s+1} = i_s|z_t= j) \]
The claim is the range of the forward(backward) observation matrices is contained in the range of the $s$-wise Kronecker product of the original observation matrices.

\begin{lemma}
\[ \range(\tilde{O}^f_s) \subseteq \range(O^{\otimes s}) \]
\[ \range(\tilde{O}^b_s) \subseteq \range(O^{\otimes s}) \]
\end{lemma}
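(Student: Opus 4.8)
The plan is to exhibit each column of $\tilde{O}^f_s$ as an explicit linear combination of columns of $O^{\otimes s}$, which immediately yields the claimed range containment. Concretely, I would first unfold the $j$-th column of $\tilde{O}^f_s$ using the HMM generative structure. Writing $T$ for the transition matrix of the underlying HMM, with $T_{b,a} = \P(z_{t+1} = b \mid z_t = a)$, the law of total probability over the intervening hidden states $z_{t+1}, \ldots, z_{t+s-1}$, together with the conditional independence of each observation from everything else given its own state, gives
\[ (\tilde{O}^f_s)_{(i_1, \ldots, i_s), j} = \sum_{j_2, \ldots, j_s} O_{i_1, j} \prod_{k=2}^{s} T_{j_k, j_{k-1}} O_{i_k, j_k}, \]
where we set $j_1 := j$. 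The key observation is that for each fixed tuple $(j_2, \ldots, j_s)$ the index pattern $O_{i_1, j_1} O_{i_2, j_2} \cdots O_{i_s, j_s}$ is exactly the $(i_1, \ldots, i_s)$-th entry of the column of $O^{\otimes s}$ indexed by $(j_1, \ldots, j_s)$.

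This suggests packaging the transition weights into a single $m^s \times m$ ``propagation'' matrix $M$ with entries $M_{(j_1, \ldots, j_s), j} = \ones[j_1 = j] \prod_{k=2}^{s} T_{j_k, j_{k-1}}$, so that the display above reads precisely as the matrix identity $\tilde{O}^f_s = O^{\otimes s} M$. Since $\range(AB) \subseteq \range(A)$ for any conformable matrices $A$ and $B$, we conclude $\range(\tilde{O}^f_s) \subseteq \range(O^{\otimes s})$.

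For the backward matrices I would run the identical argument on the time-reversed chain: conditioning on $z_t = j$ and marginalizing $z_{t-1}, \ldots, z_{t-s+1}$ using the time-reversed transition probabilities $\P(z_{t-1} \mid z_t)$ yields an analogous factorization $\tilde{O}^b_s = O^{\otimes s} M'$ for a propagation matrix $M'$ built from the reversed transitions, which gives $\range(\tilde{O}^b_s) \subseteq \range(O^{\otimes s})$.

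I do not expect a genuine obstacle here: the content is entirely the bookkeeping of the HMM factorization. The only point that requires care is carrying out the marginalization correctly, in particular verifying that the observation factors $O_{i_k, j_k}$ separate cleanly, which is exactly the conditional-independence property assumed of the model, so that every summand is a rank-one tensor $O_{\cdot, j_1} \otimes \cdots \otimes O_{\cdot, j_s}$ equal to a column of $O^{\otimes s}$. Once the identity $\tilde{O}^f_s = O^{\otimes s} M$ is in hand, the range statement is immediate.
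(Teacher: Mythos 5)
Your proposal is correct and matches the paper's proof: both expand $(\tilde{O}^f_s)_{(i_1,\ldots,i_s),j}$ by marginalizing over the intervening hidden states, use conditional independence to factor the observation probabilities into $(O^{\otimes s})_{(i_1,\ldots,i_s),(j,j_2,\ldots,j_s)}$ times the weight $P(z_{t+1}=j_2,\ldots,z_{t+s-1}=j_s \mid z_t=j)$, and conclude each column of $\tilde{O}^f_s$ is a linear combination of columns of $O^{\otimes s}$. Your explicit factorization of the weight into transition-matrix entries and the packaging as $\tilde{O}^f_s = O^{\otimes s}M$ are just slightly more explicit renderings of the same argument.
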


\begin{proof}
We prove the first relationship, since the proof of the second is almost identical.\\
Note that by the law of total probability,
\begin{eqnarray*} 
&& (\tilde{O}^f_s)_{(i_1,i_2,\ldots,i_s), j} \\
&=& P(x_{t} = i_1,x_{t+1} = i_2,\ldots,x_{t+s-1} = i_s|z_t= j) \\
&=& \sum_{j_2, \ldots, j_s} P(x_{t} = i_1,x_{t+1} = i_2,\ldots,x_{t+s-1} = i_s|z_t= j, z_{t+1} = j_2, \ldots, z_{t+s-1} = j_s) \\
&& \times P(z_{t+1} = j_2 \ldots, z_{t+s-1} = j_s|z_t = j) \\
&=& \sum_{j_2, \ldots, j_s} O_{i_1,j} O_{i_2, j_2} \ldots O_{i_s, j_s} P(z_{t+1} = j_2 \ldots, z_{t+s-1} = j_s|z_t = j) \\
&=& \sum_{j_2, \ldots, j_s} (O^{\otimes s})_{(i_1,i_2,\ldots,i_s), (j,j_2,\ldots,j_s)} P(z_{t+1} = j_2 \ldots, z_{t+s-1} = j_s|z_t = j) \\
\end{eqnarray*}
Thus, each column of $\tilde{O}^f_s$ is a linear combination of the columns of $O^{\otimes s}$, thus completing the proof.

\end{proof}

\section{Finite Sample Guarantees}

\begin{theorem}[Accuracy of Initial Distribution and Transition Probabilities]
\label{thm:mainobstrans}
There exists a universal constant $C$ such that the following hold. Suppose Algorithm~\ref{alg:mainobs} is given as input $N$ iid observation triples $(x_{i1}, x_{i2}, x_{i3})_{i=1}^N$ generated by a THS-HMM, and outputs estimates of observaton matrices $\hat{O}^u$, for each node $u$ in the tree. Then Algorithm~\ref{alg:maintrans} is run on the same sample and has $\{\hat{O}^u\}_{u \in V}$ as input. If the size of sample $N$ is greater than:\\
\begin{eqnarray*}
C \max \Big( && \frac{D^2}{\sigma_2^2 \sigma_3^2 } \ln\frac{D}{\delta}, \frac{m}{\sigma_1^2 \sigma_2^2 } \ln\frac{D}{\delta}, \frac{m^2}{\sigma_1^6\sigma_3^6\pi_{\min}^3}\ln\frac{D}{\delta}, \frac{m}{\sigma_2^2 \sigma_1^8 \epsilon^2} \ln\frac{D}{\delta}, \frac{m^2}{\sigma_3^6\sigma_1^{14}\pi_{\min}^4 \epsilon^2} \ln\frac{D}{\delta} \Big)
\end{eqnarray*}
where $\sigma_1 = \min_{u \in V} \sigma_m(O^u)$, $\sigma_2 = \min_{u \in V} \sigma_m(P_{1,2}^{u,u})$, $\sigma_3 = \min_{u \in V} \sigma_{m^d}(P_{1,3}^{H_u,H_u})$ and $\pi_{\min} = \min_{u, i} \pi_i^u$, then with probability $\geq 1-\delta$ over the training examples, with probability 0.9 over the random initializations in Algorithm~\ref{alg:mainobs}, there exist permutation matrices $\{\Pi^u\}_{u \in V}$ such that for all $u \in V$, 
\[ \|O^u - (\hat{O}^u\Pi^u)\| \leq \epsilon \]
if $u$ is the root node, then,
\[ \|\hat{W}^u - (\Pi^u)^{\top} W^u\| \leq \epsilon  \] 
\[ \|\hat{Q}^u -  Q^u (\Pi^u, \Pi^u)\| \leq  \epsilon \] 
Otherwise, 
\[ \|\hat{W}^u -  W^u (\Pi^u, \Pi^{\pi(u)})\| \leq  \epsilon \] 
\[ \|\hat{Q}^u - Q^u (\Pi^u, \Pi^u, \Pi^{\pi(u)})\| \leq \epsilon \]
\end{theorem}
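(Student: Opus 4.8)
The plan is to treat this as a stability (concentration plus perturbation) argument layered on top of the exact-recovery identities already established, specialized to a single root-to-node path via Partitioning. By Lemma~\ref{lem:pathpartition} and Corollary~\ref{lem:metahmm}, for each fixed $u$ the variables along $H_u$ are generated by an ordinary HMM whose emission matrix is $O^{H_u} = \bigotimes_{v \in H_u} O^v$ (Lemma~\ref{lem:projtrick}), so it suffices to analyze one path at a time and union bound over the $D$ nodes at the end; this is the source of the $\ln(D/\delta)$ factors. Within a path, Lemma~\ref{lem:skeletensor} shows that in the infinite-sample limit the symmetrized tensors $M_2^u, M_3^u$ are exactly the rank-$m$ symmetric CP forms $\sum_i \pi_i^u (O^u)_i^{\otimes 2}$ and $\sum_i \pi_i^u (O^u)_i^{\otimes 3}$, whose decomposition recovers $O^u$. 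The entire task is therefore to bound how far the empirical, projected quantities $\hat{M}_2^u, \hat{M}_3^u$ drift from these idealized tensors and then invoke the robustness of tensor decomposition.

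First I would establish concentration of the raw empirical moments. Since each observation is a one-hot indicator vector, the empirical co-occurrence matrices $\hat{P}^{u,u}_{1,2}$, $\hat{P}^{u,H_u}_{2,3}$, $\hat{P}^{H_u,H_u}_{1,3}$ and the tensor $\hat{P}^{H_u,u,H_u}_{1,2,3}$ are averages of bounded independent rank-one terms, so matrix and tensor Bernstein inequalities give operator-norm deviations of order $\sqrt{(\text{dimension factor})/N}$ with the stated logarithmic dependence. Next I would control the estimated subspaces: a Wedin/Davis--Kahan argument, using that $\sigma_m(P^{u,u}_{1,2}) \geq \sigma_2$, shows the top-$m$ left singular space $\hat{U}^u$ is close in principal angle to $\range(O^u)$, and the product $\hat{U}^{H_u} = \bigotimes_{v} \hat{U}^v$ is close to $\range(O^{H_u})$ with an error that accumulates across the $|H_u| \le d$ factors; this telescoping of subspace errors over a Kronecker product is what makes Product Projections consistent and contributes the $D^2$ term. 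With these in hand, the finite-sample symmetrization analysis (the projected analogue of Lemma~\ref{lem:skeletensor}, namely Lemma~\ref{lem:finiteprojsymmetrize}) bounds $\hat{S}_1^u, \hat{S}_3^u$ via perturbation of the pseudoinverse of $(\hat{U}^{H_u})^{\top} \hat{P}^{H_u,H_u}_{1,3}\hat{U}^{H_u}$, which is where the smallest path-wise singular value $\sigma_3 = \sigma_{m^d}(P^{H_u,H_u}_{1,3})$ enters; combining everything yields bounds on $\|\hat{M}_2^u - M_2^u\|$ and $\|\hat{M}_3^u - M_3^u\|$ that shrink as $N \to \infty$.

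Given these tensor perturbation bounds, I would apply the robust tensor power method guarantee of~\cite{AGHKT12}: whitening $\hat{M}_3^u$ with $\hat{M}_2^u$ and running the power method recovers, up to a permutation $\Pi^u$ and with probability at least $0.9$ over random restarts, estimates $\hat{\theta}_i^u$ close to the whitened columns of $O^u$, with the degradation governed by $\pi_{\min}$ (which lower-bounds the tensor eigenvalues) and $\sigma_1 = \sigma_m(O^u)$ (which controls the whitening map); undoing the projection via $\hat{O}^u = \hat{U}^u \hat{\Theta}^u$ then gives $\|O^u - \hat{O}^u \Pi^u\| \le \epsilon$. Since a prefix of $H_u$ is shared across many paths, I would check that the per-path permutations are globally consistent, as claimed after Theorem~\ref{thm:main}. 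Finally, for the transition and initial quantities I would push the error in $\hat{O}^u$ through the explicit formulas of Algorithm~\ref{alg:maintrans}: each applies $(\hat{O}^u)^{\dagger}$ to an empirical moment, so a standard pseudoinverse-perturbation bound (again leaning on $\sigma_1$) plus the moment concentration gives the claimed $\epsilon$-accuracy of $\hat{W}^u$ and $\hat{Q}^u$ in both the root and non-root cases, with permutations inherited from the observation-matrix step. A union bound over all $D$ nodes and over the high-probability events completes the argument.

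The hard part will be the path-wise pseudoinverse perturbation combined with the product-projection subspace error. The matrix $(\hat{U}^{H_u})^{\top} \hat{P}^{H_u,H_u}_{1,3}\hat{U}^{H_u}$ is $m^d \times m^d$, its inverse is taken, and its smallest singular value $\sigma_3$ can be tiny and appears with high powers in the sample bound; worse, the projector $\hat{U}^{H_u}$ is itself a noisy Kronecker product, so the two error sources---subspace misestimation and moment noise---interact multiplicatively and must be disentangled carefully to avoid an error that scales like $n^d$ rather than $m^d$. Obtaining a clean bound that isolates the dependence on $\sigma_1, \sigma_2, \sigma_3, \pi_{\min}$ with only $\poly(m, d, D)$ dimensional factors, rather than the naive $\poly(n^d)$, is the crux of making the finite-sample guarantee match the computational savings claimed for Product Projections.
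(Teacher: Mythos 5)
Your proposal follows essentially the same route as the paper's proof: raw-moment concentration with a union bound over nodes, Wedin-type subspace concentration with the error telescoping across the Kronecker factors of $\hat{U}^{H_u}$ (the source of the $D^2/(\sigma_2^2\sigma_3^2)$ term), pseudoinverse perturbation of $(\hat{U}^{H_u})^{\top}\hat{P}^{H_u,H_u}_{1,3}\hat{U}^{H_u}$ to control $\hat{S}_1^u,\hat{S}_3^u$ and hence $\hat{M}_2^u,\hat{M}_3^u$, the robust tensor power method guarantee of \cite{AGHKT12}, and finally pushing the $\hat{O}^u$ error through the pseudoinverse formulas of Algorithm~\ref{alg:maintrans}. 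The only cosmetic difference is that the paper bounds the raw moments via the probability-vector concentration of \cite{HKZ09} rather than matrix/tensor Bernstein, and you correctly identify the symmetrization/projection interaction as the technical crux, which the paper resolves in Lemmas~\ref{lem:projsubspace}--\ref{lem:momconc}.
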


We emphasize that our algorithm recovers the initial probability and transition probability tensors up to permutations of hidden states in a {\em{globally consistent}} manner. In contrast to~\cite{MR06} where some hidden nodes do not have observations directly associated with them, in our setting, each hidden state has an associated observation, which makes recovery of permutations easier. How to perform parameter recovery in a THS-HMM with internal hidden states where each hidden tree node does not have an associated observation is an interesting question for future work.

\section{Proofs}

Throughout this section, we first assume a technical condition on the sample size. This will result in concentration of the projection and the symmetrization matrices.

\begin{assumption}
Recall that $D = |V|$. The sample size $N$ is large enough that
\begin{eqnarray}
\label{eqn:projreq}
&&\epsilon(N,\delta) \nonumber \\
&\leq& \min \Big( \frac{\min_{u \in V} \sigma_m(P_{1,2}^{u,u}) \min_{u \in V} \sigma_{m^d}(P_{1,3}^{H,H})}{16D}, \nonumber \\
&& \frac{\min_{u \in V} \sigma_m(P_{1,2}^{u,u}) \min_{u \in V}\sigma_m(O^u)}{4\sqrt{m}}, \frac{\min_{u \in V} \sigma_{m^d}(P_{1,3}^{H,H})^3 \min_{u \in V}\sigma_m(O^u)^3\pi_{\min}^{3/2}}{1536c_1 m} \Big) \nonumber \\
&=& \min \Big( \frac{\sigma_2 \sigma_3}{16D}, \frac{\sigma_2 \sigma_1}{4\sqrt{m}}, \frac{\pi_{\min}^{3/2}\sigma_1^3 \sigma_3^3}{1536c_1 m} \Big)
\end{eqnarray}
Where $c_1 > 0$ is a constant given in Lemma~\ref{lem:tensordecomp}, and $\sigma_1$, $\sigma_2$, $\sigma_3$ and $\pi_{\min}$ are defined in Theorem~\ref{thm:mainobstrans}.
\label{ass:projreq}
\end{assumption}

\subsection{Raw Moments Concentration}
We start with standard concentration of raw moments, which uses the fact that all the (vectorized) raw moments can be viewed as a probability vector. Let $u$ be a node in $V$, recall that $H$ is the set of nodes along the path from root $r$ to $u$.

Let $\epsilon(N,\delta) = \sqrt{\frac{1+\ln(10D/\delta)}{N}}$. Define event
\begin{eqnarray*}
E = \Big\{ \text{ for all $u \in V$ }:&&\|\hat{P}_{1,2}^{u,u} - P_{1,2}^{u,u} \|_F \leq \epsilon(N, \delta)\\
&&\|\hat{P}_{1,2}^{H,u} - P_{1,2}^{H,u} \|_F \leq \epsilon(N, \delta)\\
&&\|\hat{P}_{2,3}^{u,H} - P_{2,3}^{u,H} \|_F \leq \epsilon(N, \delta)\\
&&\|\hat{P}_{1,3}^{H,H} - P_{1,3}^{H,H} \|_F \leq \epsilon(N, \delta)\\
&&\|\hat{P}_{1,2,3}^{H,u,H} - P_{1,2,3}^{H,u,H} \|_F \leq \epsilon(N, \delta)\\
&&\|\hat{P}_1^u - P_1^u\|_F \leq \epsilon(N, \delta) \\
&&\|\hat{P}_{1,2}^{u,u} - P_{1,2}^{u,u} \|_F \leq \epsilon(N, \delta)\\
&&\|\hat{P}_{1,1}^{u,\pi(u)} - P_{1,1}^{u,\pi(u)} \|_F \leq \epsilon(N, \delta)\\
&&\|\hat{P}_{2,2,1}^{u,\pi(u),u} - P_{2,2,1}^{u,\pi(u),u}\|_F \leq \epsilon(N, \delta) \Big\}
\end{eqnarray*}

\begin{lemma}[Concentration of Raw Moments]
\label{lem:rawconc}
$\P(E) \geq 1-\delta$.
\end{lemma}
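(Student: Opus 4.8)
The plan is to observe that every empirical quantity appearing in the event $E$ is an empirical average of $N$ i.i.d.\ tensors, each of which is (the vectorization of) a single standard basis vector, and then to apply a Hilbert-space concentration bound to each such average before taking a union bound over the finitely many events.

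First I would note that each observation $x_t^u$ is encoded as an indicator (one-hot) vector in $\R^n$, so $\|x_t^u\|_2 = 1$. Consequently each summand of an empirical moment is an outer product of one-hot vectors, which is itself a single standard basis vector in the appropriate product space; for instance $\hat{P}_{1,2}^{H,u} = \frac{1}{N}\sum_{i=1}^N x_{i,1}^{H}\otimes x_{i,2}^u$ is an average of $N$ i.i.d.\ unit-norm vectors in $\R^{n^{|H|}}\otimes \R^n$ whose expectation is $P_{1,2}^{H,u}$. Since the Frobenius norm of a matrix or tensor equals the $\ell_2$ norm of its vectorization, every inequality in $E$ has the form $\|\hat{\mu}-\mu\|_2 \le \epsilon(N,\delta)$ for an empirical mean $\hat{\mu}=\frac{1}{N}\sum_i w_i$ of i.i.d.\ vectors with $\|w_i\|_2 = 1$ and $\mu = \E w_i$.

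Next I would establish the single-average bound. Define $f(w_1,\dots,w_N)=\|\hat{\mu}-\mu\|_2$. Replacing one $w_i$ changes $\hat{\mu}$ by at most $\sqrt{2}/N$ in norm (two one-hot vectors differ by at most $\sqrt{2}$), so $f$ has bounded differences $\sqrt{2}/N$; moreover $\E f \le (\E f^2)^{1/2} = ((1-\|\mu\|_2^2)/N)^{1/2}\le 1/\sqrt{N}$, using $\E\|w_i\|_2^2 = 1$. The bounded-differences inequality then yields a sub-Gaussian tail of the form $\P(f \ge \E f + t)\le \exp(-Nt^2)$, which combined with the mean bound and a per-event failure probability of $\delta/(10D)$ gives $\|\hat{\mu}-\mu\|_2 \le \sqrt{(1+\ln(10D/\delta))/N}=\epsilon(N,\delta)$; this is exactly the standard $\ell_2$ concentration bound for empirical probability vectors.

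Finally I would take a union bound. The definition of $E$ lists nine inequalities per node $u$ (the one for $\hat{P}_{1,2}^{u,u}$ appearing twice), so across all $u \in V$ there are at most $9D \le 10D$ events; summing the per-event failure probabilities $\delta/(10D)$ gives $\P(E^c)\le 10D\cdot\frac{\delta}{10D}=\delta$, i.e.\ $\P(E)\ge 1-\delta$, which is precisely what the factor $10D$ inside $\epsilon$ is designed to absorb. The one-hot bookkeeping and the union-bound counting are routine. The only delicate point, which I expect to be the main obstacle, is matching the compact constant in $\epsilon(N,\delta)$: plain McDiarmid combines the $1/\sqrt{N}$ mean term and the $\sqrt{\ln(\cdot)/N}$ fluctuation term additively, so recovering the clean form $\sqrt{(1+\ln(10D/\delta))/N}$ exactly requires invoking the sharper known concentration inequality for empirical distributions (or, equivalently, tolerating an $O(1)$ factor in $\epsilon$), while every other step is mechanical.
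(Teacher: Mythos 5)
Your proof is correct and takes essentially the same approach as the paper: the paper's entire proof is a one-line citation of Proposition~19 of \cite{HKZ09} (the standard $\ell_2$ concentration bound for empirical averages of one-hot indicator vectors, whose proof is exactly the McDiarmid-plus-mean-bound argument you give) followed by a union bound over the finitely many events defining $E$. The only discrepancy is the $O(1)$ constant you already flag---the bounded-differences route yields the additive form $1/\sqrt{N}+\sqrt{\ln(10D/\delta)/N}$ rather than the combined $\sqrt{(1+\ln(10D/\delta))/N}$---which is immaterial to how $\epsilon(N,\delta)$ is used downstream.
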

\begin{proof}
Applying Proposition 19 in~\cite{HKZ09} along with union bound.
\end{proof}

\subsection{Subspace Concentration}

Next we state a useful lemma that says that conditioned on the event $E$, performing an SVD on the empirical version of $P_{1,2}^{u,u} = \E[x_1^u \otimes x_2^u]$ gives us a good approximation to the range of $O^u$. Recall that $U^u$ is a matrix whose columns form an orthonormal basis of $O^u$, and define $U^H$ is $\otimes_{v \in H} U^u$. Also, recall for a matrix $U$ with orthonormal columns, the projection matrix onto $\range(U)$ is $UU^{\top}$.
\begin{lemma}[Subspace Concentration]
\label{lem:projsubspace}
Supposes $N$ is large enough such that Assumption~\ref{ass:projreq} holds. $\hat{U}^u$ is the output of line 3 of Algorithm~\ref{alg:mainobs}. Let $u$ be a node in $V$, recall that $H$ is the set of nodes along the path from root $r$ to $u$. Then conditioned on event $E$, we have:\\
(1) $\|U^u(U^u)^{\top} - \hat{U}^u(\hat{U}^u)^{\top}\| \leq \frac{2\epsilon(N,\delta)}{\sigma_m (P_{1,2}^{u, u})}$.\\
In particular, 
\[ \|U^u(U^u)^{\top} - \hat{U}^u(\hat{U}^u)^{\top}\| \leq \min(\frac{\min_{u \in V} \sigma_m(P_{1,3}^{H,H})}{8D}, \frac{\min_{u \in V} \sigma_m(O^u)}{2\sqrt{m}}) \]
(2) 
\[ \|U^H(U^H)^{\top} - \hat{U}^H(\hat{U}^H)^{\top}\| \leq \frac{\min_{u \in V} \sigma_m(P_{1,3}^{H_u,H_u})}{8} \]
(3) 
\begin{equation*}
\sigma_m((\hat{U}^{u})^{\top} O^u) \geq \frac{\sigma_m(O^u)}{2}
\end{equation*}

\end{lemma}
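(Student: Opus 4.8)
The plan is to handle the three parts in order, since part (2) rests on the projection bound of part (1) and part (3) reuses the same subspace perturbation estimate. The workhorse throughout is a Davis--Kahan/Wedin $\sin\Theta$ bound together with the fact that on event $E$ every empirical moment is operator-norm close to its population counterpart, because $\|\cdot\| \leq \|\cdot\|_F \leq \epsilon(N,\delta)$.

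For part (1), I would first identify the top-$m$ left singular subspace of the population matrix $P_{1,2}^{u,u}$ with $\range(O^u) = \range(U^u)$. By Corollary~\ref{lem:metahmm} the single-node process is itself an HMM, so $P_{1,2}^{u,u} = O_1^u \diag(\pi^u)(O^u)^{\top}$, and the argument of Lemma~\ref{lem:fullrank} specialized to the length-one path, together with Assumption~\ref{ass:obsrank}, shows that $O_1^u$ and $O^u$ both have rank $m$ with identical column span. Hence $P_{1,2}^{u,u}$ has rank exactly $m$, its $(m+1)$-st singular value is $0$, and $U^u(U^u)^{\top}$ is precisely the orthogonal projector onto the top-$m$ left singular subspace of $P_{1,2}^{u,u}$. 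I would then invoke the $\sin\Theta$ perturbation bound: on $E$ we have $\|\hat{P}_{1,2}^{u,u} - P_{1,2}^{u,u}\| \leq \epsilon(N,\delta)$, and since the spectral gap separating the top $m$ singular values from the rest is exactly $\sigma_m(P_{1,2}^{u,u})$, this yields $\|U^u(U^u)^{\top} - \hat{U}^u(\hat{U}^u)^{\top}\| \leq 2\epsilon(N,\delta)/\sigma_m(P_{1,2}^{u,u})$. The two ``in particular'' inequalities then follow by substituting the first two terms of the minimum in Assumption~\ref{ass:projreq}, which is exactly the smallness condition ensuring the top-$m$ subspace of $\hat{P}_{1,2}^{u,u}$ is well separated and stable.

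For part (2), I would use that the projector onto the range of a Kronecker product factors as the Kronecker product of the factor projectors, so $U^H(U^H)^{\top} = \bigotimes_{v \in H} U^v(U^v)^{\top}$ and similarly for $\hat{U}^H$. Writing $P_v = U^v(U^v)^{\top}$ and $\hat{P}_v = \hat{U}^v(\hat{U}^v)^{\top}$, I would telescope $\bigotimes_v P_v - \bigotimes_v \hat{P}_v$ into $|H|$ terms, each replacing a single factor by $P_k - \hat{P}_k$ while keeping the remaining factors as operator-norm-one projectors; since the operator norm of a Kronecker product is the product of the operator norms, this gives $\|U^H(U^H)^{\top} - \hat{U}^H(\hat{U}^H)^{\top}\| \leq \sum_{v \in H}\|P_v - \hat{P}_v\|$. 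Bounding each summand by the first ``in particular'' inequality of part (1), namely $\min_{u}\sigma_m(P_{1,3}^{H_u,H_u})/(8D)$, and using $|H| \leq D$, the sum is at most $\min_{u}\sigma_m(P_{1,3}^{H_u,H_u})/8$, as claimed. For part (3) I would argue variationally: for any unit $w \in \R^m$ set $y = O^u w$; since $\range(O^u) = \range(U^u)$ we have $U^u(U^u)^{\top} y = y$, so $\|(\hat{U}^u)^{\top} y\| = \|\hat{U}^u(\hat{U}^u)^{\top} y\| \geq (1 - \|\hat{U}^u(\hat{U}^u)^{\top} - U^u(U^u)^{\top}\|)\|y\|$. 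Using $\|y\| = \|O^u w\| \geq \sigma_m(O^u)$ and the part-(1) bound $\|\hat{U}^u(\hat{U}^u)^{\top} - U^u(U^u)^{\top}\| \leq \sigma_m(O^u)/(2\sqrt{m}) \leq 1/2$ (the last step since $O^u$ is column-stochastic, whence $\sigma_m(O^u) \leq \|O^u\| \leq \sqrt{m}$), I obtain $\sigma_m((\hat{U}^u)^{\top}O^u) = \min_{\|w\|=1}\|(\hat{U}^u)^{\top}O^u w\| \geq \sigma_m(O^u)/2$.

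I expect the main obstacle to be part (1): establishing that the gap governing the Davis--Kahan bound is precisely $\sigma_m(P_{1,2}^{u,u})$, which hinges on $P_{1,2}^{u,u}$ having rank exactly $m$ and therefore on the node-wise rank condition, and then verifying the constant $2$ in the perturbation bound under the smallness condition of Assumption~\ref{ass:projreq}. Once that estimate is in hand, parts (2) and (3) are essentially bookkeeping built on top of it.
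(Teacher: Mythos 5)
Your proposal is correct and follows essentially the same route as the paper: Wedin's $\sin\Theta$ theorem applied to $P_{1,2}^{u,u}$ versus $\hat{P}_{1,2}^{u,u}$ for part (1), telescoping the Kronecker product of projectors with $\|A\otimes B\| = \|A\|\|B\|$ for part (2), and the bound $\|(\hat{U}^u(\hat{U}^u)^{\top}-U^u(U^u)^{\top})O^u\| \le \sigma_m(O^u)/2$ combined with $U^u(U^u)^{\top}O^u = O^u$ for part (3) (the paper invokes Weyl's theorem where you argue variationally, which is the same estimate). Your added justification that $\range(P_{1,2}^{u,u})=\range(O^u)$ under the node-wise rank condition is a detail the paper leaves implicit, not a different approach.
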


\begin{proof}
(1) $\Phi^u$, the matrix of principal angles between $\range(\hat{U}^u)$ and $\range(U^u)$, is such that
\begin{eqnarray}
&& \|\sin \Phi^u\| \nonumber \\
&\leq& \frac{\epsilon(N,\delta)}{\sigma_m (P_{1,2}^{u, u}) - \epsilon(N,\delta)} \nonumber \\
&\leq& \frac{2\epsilon(N,\delta)}{\sigma_m (P_{1,2}^{u, u})} \label{eqn:individualu} 
\end{eqnarray}
where the first inequality is by Theorem~\ref{thm:wedin}, by taking $A = P_{1,2}^{u,u}$ and $\tilde{A} = \hat{P}_{1,2}^{u, u}$; the second inequality from Assumption~\ref{ass:projreq}, which implies that $\epsilon(N,\delta) \leq \sigma_m (P_{1,2}^{u, u})/2$.

Thus, by Equation~\eqref{eqn:projreq} in Assumption~\ref{ass:projreq}, 
\[ \|\sin \Phi^u\| \leq  \min(\frac{\min_{u \in V} \sigma_m(P_{1,3}^{H_u,H_u})}{8D}, \frac{\min_{u \in V}\sigma_m(O^u)}{2\sqrt{m}}) \]
The result follows from the fact that \[ \|\sin \Phi^u\| = \|U^u(U^u)^{\top} - \hat{U}^u(\hat{U}^u)^{\top}\|\] \\
(2) First we enumerate the nodes in $H_u$ : $H_u = \{v_1, \ldots, v_l\}$. 
\begin{eqnarray*}
&&\|U^H(U^H)^{\top} - \hat{U}^H(\hat{U}^H)^{\top}\| \\
&\leq& \|(U^{v_1} (U^{v_1})^{\top} - \hat{U}^{v_1} (\hat{U}^{v_1})^{\top})  \otimes \ldots \otimes (U^{v_l} (U^{v_l})^{\top})| \|  + \ldots + \| (U^{v_1} (U^{v_1})^{\top}) \otimes \ldots \otimes (U^{v_l} (U^{v_l})^{\top} - \hat{U}^{v_l} (\hat{U}^{v_l})^{\top}) \|\\
&\leq& \|U^{v_1} (U^{v_1})^{\top} - \hat{U}^{v_1} (\hat{U}^{v_1})^{\top}\| + \ldots + \|U^{v_l} (U^{v_l})^{\top} - \hat{U}^{v_l} (\hat{U}^{v_l})^{\top}\| \\
&\leq& \sum_{v \in H} \frac{2\epsilon(N,\delta)}{\sigma_m (P_{1,2}^{v, v})} \\
&\leq& \frac{\min_{u \in V} \sigma_m(P_{1,3}^{H,H})}{8}
\end{eqnarray*}
where the first inequality is by triangle inequality, the second inequality uses standard facts about Kronecker product ($\| A \otimes B \| = \| A \| \| B \|$), the third inequality is from Equation~\eqref{eqn:individualu}, the fourth inequality is from Equation~\eqref{eqn:projreq}.

(3) By item (1) we know that
\[  \|U^u(U^u)^{\top} - \hat{U}^u(\hat{U}^u)^{\top} \| \leq  \sigma_m(O^u)/(2\sqrt{m}) \]
Hence
\[  \|U^u(U^u)^{\top}O^u - \hat{U}^u(\hat{U}^u)^{\top}O^u \| \leq \|U^u(U^u)^{\top} - \hat{U}^u(\hat{U}^u)^{\top} \| \|O^u\| \leq \sigma_m(O^u)/2 \]
where the second inequality is from the fact that $O^u$ is a column stochastic matrix, which implies that $\| O^u \| \leq \| O^u \|_F \leq \sqrt{m}$.\\
Therefore by Theorem~\ref{thm:weyl}, 
\[ \sigma_m((\hat{U}^u)^{\top} O^u) = \sigma_m(\hat{U}^u (\hat{U}^u)^{\top} O^u) \geq \sigma_m(O^u)/2 \]    
\end{proof}

\subsection{Symmetrized Moment Concentration}

\begin{lemma}
Suppose we are given a set of matrices $\hat{U}^u, u \in V$ such that $(\hat{U}^u)^{\top} O^u$ is invertible for all $u \in V$. Moreover, assume the expected second order moments $P_{2,3}^{u,H}, P_{1,3}^{H,H}, P_{2,1}^{u,H}$, and third order moments $P_{1,2,3}^{H,u,H}$ are given. Consider the symmetrization matrices:
\[ \tilde{S}_1^u = ((\hat{U}^u)^{\top}P_{2,3}^{u,H}\hat{U}^H) ((\hat{U}^H)^{\top} P_{1,3}^{H,H} \hat{U}^H)^{-1} \]
\[ \tilde{S}_3^u = ((\hat{U}^u)^{\top}P_{2,1}^{u,H}\hat{U}^H) ((\hat{U}^H)^{\top} P_{3,1}^{H,H} \hat{U}^H)^{-1} \]
and the ground truth symmetrized second order and third order cooccurence matrices be:
\[ M_2^u = P_{1,2}^{H,u}(\hat{U}^{H}(\tilde{S}_1^u)^{\top}, \hat{U}^u) \]
\[ M_3^u = P_{1,2,3}^{H,u,H}(\hat{U}^{H}(\tilde{S}_1^u)^{\top}, \hat{U}^u, \hat{U}^{H}\tilde{S}_3^{uT}) \]
Then,
\[ M_2^u = \sum_i \pi_i^u ((\hat{U}^u)^{\top}O^u)_i \otimes ((\hat{U}^u)^{\top}O^u)_i \]
\[ M_3^u = \sum_i \pi_i^u ((\hat{U}^u)^{\top}O^u)_i \otimes ((\hat{U}^u)^{\top}O^u)_i \otimes ((\hat{U}^u)^{\top}O^u)_i \] 
\label{lem:finiteprojsymmetrize}
\end{lemma}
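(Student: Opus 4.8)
The plan is to mirror the infinite-sample argument of Lemma~\ref{lem:skeletensor}, but to carry the projection matrices $\hat U^u,\hat U^H$ through every step and to replace each pseudoinverse by a genuine inverse of a projected square matrix. By Lemma~\ref{lem:metahmm}, conditioned on $z_2^H$ the observations $x_1^H$, $x_2^u$, $x_3^H$ are independent, which yields the factorizations $P_{2,3}^{u,H}=O_2^u\diag(\pi^H)(O_3^H)^\top$, $P_{1,3}^{H,H}=O_1^H\diag(\pi^H)(O_3^H)^\top$, $P_{2,1}^{u,H}=O_2^u\diag(\pi^H)(O_1^H)^\top$, $P_{1,2}^{H,u}=O_1^H\diag(\pi^H)(O_2^u)^\top$, and $P_{1,2,3}^{H,u,H}=\sum_i \pi_i^H (O_1^H)_i\otimes (O_2^u)_i\otimes (O_3^H)_i$, exactly as in the proof of Lemma~\ref{lem:skeletensor}. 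Writing $A=(\hat U^H)^\top O_1^H$, $A'=(\hat U^H)^\top O_3^H$, and $C=(\hat U^u)^\top O_2^u$, I would substitute these factorizations into the definitions of $\tilde S_1^u$ and $\tilde S_3^u$.

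The crux of the proof --- and the only place where the projected version genuinely departs from Lemma~\ref{lem:skeletensor} --- is establishing that the projected factor matrices are invertible, since the pseudoinverse identity used there is unavailable once everything is compressed to dimension $m^{d}$. I would obtain invertibility as follows. By Lemma~\ref{lem:projtrick}, $O^H=\bigotimes_{v\in H}O^v$, and by construction $\hat U^H=\bigotimes_{v\in H}\hat U^v$; the mixed-product property of the Kronecker product then gives $(\hat U^H)^\top O^H=\bigotimes_{v\in H}\big((\hat U^v)^\top O^v\big)$, which is invertible because each factor $(\hat U^v)^\top O^v$ is invertible by hypothesis. Combining this with Lemma~\ref{lem:fullrank}, which writes $O_1^H=O^H\diag(\rho^H)(T^H)^\top\diag(\pi^H)^{-1}$ and $O_3^H=O^H T^H$ with $T^H$ and $\diag(\rho^H)(T^H)^\top\diag(\pi^H)^{-1}$ both of full rank, shows that $A$ and $A'$ are invertible. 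The same facts, together with invertibility of $\diag(\pi^H)$ (already used in Lemma~\ref{lem:fullrank}), show that the ``right'' factors $B=\diag(\pi^H)(O_3^H)^\top\hat U^H$ and $B'=\diag(\pi^H)(O_1^H)^\top\hat U^H$ are invertible; in particular the matrices inverted inside $\tilde S_1^u$ and $\tilde S_3^u$ are nonsingular, so the symmetrization matrices are well defined.

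With invertibility in hand, the algebra collapses cleanly. Substituting the factorizations gives $\tilde S_1^u=(CB)(AB)^{-1}=CA^{-1}$ and $\tilde S_3^u=(CB')(A'B')^{-1}=C(A')^{-1}$. I would then plug these into $M_2^u=\tilde S_1^u\,(\hat U^H)^\top P_{1,2}^{H,u}\hat U^u$ and into the trilinear form $M_3^u$, using the convention $M(V_1,V_2)=V_1^\top M V_2$. The key point is the telescoping cancellations $\tilde S_1^u\,(\hat U^H)^\top O_1^H=CA^{-1}A=C$ and $\tilde S_3^u\,(\hat U^H)^\top O_3^H=C(A')^{-1}A'=C$, together with $(\hat U^u)^\top O_2^u=C$, so that all three modes of the rank-one decomposition of $P_{1,2,3}^{H,u,H}$ are mapped to the columns of $C$. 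This produces $M_2^u=C\diag(\pi^H)C^\top=\sum_{i\in[m]^d}\pi_i^H\,C_i\otimes C_i$ and $M_3^u=\sum_{i\in[m]^d}\pi_i^H\,C_i\otimes C_i\otimes C_i$.

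The final step reduces these meta-state sums to single-node sums, exactly as in Lemma~\ref{lem:skeletensor}. Because $x_2^u$ is independent of $z_2^v$ for $v\neq u$ given $z_2^u$, the column $(O_2^u)_{(j_1,\dots,j_d)}$ depends only on the coordinate $j_d$ and equals $(O^u)_{j_d}$, so $C_{(j_1,\dots,j_d)}=((\hat U^u)^\top O^u)_{j_d}$. Summing the joint $\pi^H$ over the remaining coordinates $(j_1,\dots,j_{d-1})$ gives the marginal $\pi^u_{j_d}$, and hence $M_2^u=\sum_i \pi_i^u\,((\hat U^u)^\top O^u)_i\otimes((\hat U^u)^\top O^u)_i$ and $M_3^u=\sum_i \pi_i^u\,((\hat U^u)^\top O^u)_i\otimes((\hat U^u)^\top O^u)_i\otimes((\hat U^u)^\top O^u)_i$, as claimed. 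I expect the invertibility argument of the second paragraph to be the main obstacle; the remaining manipulations are routine bookkeeping that parallels the infinite-sample proof.
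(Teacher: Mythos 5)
Your proposal is correct and follows essentially the same route as the paper's proof: factor the moments via the conditional independence from Lemma~\ref{lem:metahmm}, cancel the common right factors to get $\tilde S_1^u=((\hat U^u)^\top O_2^u)((\hat U^H)^\top O_1^H)^{-1}$ and its analogue, telescope, and marginalize $\pi^H$ down to $\pi^u$. Your explicit justification that $(\hat U^H)^\top O^H=\bigotimes_{v\in H}(\hat U^v)^\top O^v$ is invertible (and likewise the right factors $B,B'$) is a welcome detail that the paper's proof leaves implicit.
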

\begin{proof}

Recall that by Lemma~\ref{lem:fullrank}
\[ O_1^{H} = O^{H} \diag(\rho^{H}) (T^{H})^{\top}  \diag(\pi^{H})^{-1}  \]
where $\diag(\rho^{H}) (T^{H})^{\top}  \diag(\pi^{H})^{-1}$ is invertible.
Thus,
\[ (\hat{U}^H)^{\top} O_1^H = (\hat{U}^H)^{\top} O^{H} \diag(\rho^{H}) (T^{H})^{\top}  \diag(\pi^{H})^{-1} \] 
This shows that $(\hat{U}^H)^{\top} O_1^{H}$ is invertible.\\
On the other hand,
\[ O_3^{H} = O^{H} T^{H}  \]
where $T^{H}$ is invertible.
Thus,
\[ (\hat{U}^H)^{\top} O_3^{H} = (\hat{U}^H)^{\top} O^{H} T^{H}  \]
This shows that $(\hat{U}^H)^{\top} O_3^{H}$ is invertible.

Therefore, 
\begin{eqnarray*} 
&&\tilde{S}_1^u \\
&=& ((\hat{U}^u)^{\top} O_2^u \diag(\pi^H) O_3^{HT}\hat{U}^H) ((\hat{U}^H)^{\top} O_1^H \diag(\pi^H) O_3^{HT} \hat{U}^H)^{-1}\\
&=& ((\hat{U}^u)^{\top} O_2^{u}) ((\hat{U}^H)^{\top} O_1^{H})^{-1}
\end{eqnarray*}
Likewise,
\begin{eqnarray*} 
&&\tilde{S}_3^u \\
&=& ((\hat{U}^u)^{\top} O_2^u \diag(\pi^H) O_1^{HT}\hat{U}^H) ((\hat{U}^H)^{\top} O_3^H \diag(\pi^H) O_1^{HT} \hat{U}^H)^{-1}\\
&=& ((\hat{U}^u)^{\top} O_2^{u}) ((\hat{U}^H)^{\top} O_3^{H})^{-1}
\end{eqnarray*}
Then, 
\begin{eqnarray*} 
M_2^u &=& P_{1,2}^{H,u}(U^{H}(\tilde{S}_1^u)^{\top},\hat{U}^u) \\
&=& \sum_{i_1, \ldots, i_D} \pi^H_{i_1, \ldots, i_D} ((\hat{U}^u)^{\top} O_2^u)_{i_1, \ldots, i_D} \otimes ((\hat{U}^u)^{\top}O_2^u)_{i_1, \ldots, i_D} \\
&=& \sum_{i_1, \ldots, i_D} \pi^H_{i_1, \ldots, i_D} ((\hat{U}^u)^{\top}O^u)_{i_D} \otimes ((\hat{U}^u)^{\top}O^u)_{i_D}\\
&=& \sum_i \pi^u_i ((\hat{U}^u)^{\top}O^u)_i \otimes ((\hat{U}^u)^{\top}O^u)_i 
\end{eqnarray*} 
\begin{eqnarray*}
M_3^u &=& P_{1,2,3}^{H,u,H}(\hat{U}^H(\tilde{S}_1^u)^{\top},\hat{U}^u,\hat{U}^H\tilde{S}_3^{uT}) \\
&=& \sum_{i_1, \ldots, i_D} \pi^H_{i_1, \ldots, i_D} ((\hat{U}^u)^{\top}O_2^u)_{i_1, \ldots, i_D} \otimes ((\hat{U}^u)^{\top}O_2^u)_{i_1, \ldots, i_D} \otimes ((\hat{U}^u)^{\top}O_2^u)_{i_1, \ldots, i_D}\\
&=& \sum_{i_1, \ldots, i_D} \pi^H_{i_1, \ldots, i_D} ((\hat{U}^u)^{\top}O^u)_{i_D} \otimes ((\hat{U}^u)^{\top}O^u)_{i_D} \otimes ((\hat{U}^u)^{\top}O^u)_{i_D}\\
&=& \sum_i \pi^u_i ((\hat{U}^u)^{\top}O^u)_i \otimes ((\hat{U}^u)^{\top}O^u)_i \otimes ((\hat{U}^u)^{\top}O^u)_i 
\end{eqnarray*}
\end{proof}

We next establish a result that shows that the symmetrization matrices $\hat{S}^u_1$ and $\hat{S}^u_3$ obtained in Line 7 of Algorithm~\ref{alg:mainobs} concentrate to $\tilde{S}^u_1$ and $\tilde{S}^u_3$ defined in Lemma~\ref{lem:finiteprojsymmetrize}. Recall from Algorithm~\ref{alg:mainobs} that:
\[ \hat{S}_1^u = ((\hat{U}^{u})^{\top} \hat{P}_{2,3}^{u, H_u} \hat{U}^{H_u}) ((\hat{U}^{H_u})^{\top} \hat{P}^{H_u, H_u}_{1,3} \hat{U}^{H_u})^{-1}, \quad \hat{S}_3^u = ((\hat{U}^{u})^{\top} \hat{P}_{2,1}^{u, H_u} \hat{U}^{H_u}) ((\hat{U}^{H_u})^{\top} \hat{P}^{H_u, H_u}_{3,1} \hat{U}^{H_u})^{-1} \]

\begin{lemma}
\label{lem:symconc}
Suppose $N$ is large enough that Assumption~\ref{ass:projreq} holds. Recall $\hat{S}_1^u$ and $\hat{S}_3^u$ are the outputs of line 7 in Algorithm~\ref{alg:mainobs}

, and $\tilde{S}_1^u$ and $\tilde{S}_3^u$ are defined in Lemma~\ref{lem:finiteprojsymmetrize}.

Conditioned on event $E$, the following hold for all $u \in V$.

\begin{equation*}
\|\tilde{S}_1^u - \hat{S}_1^u \|, \|\tilde{S}_3^u - \hat{S}_3^u \| \leq \frac{ 10 \epsilon(N,\delta) }{\sigma_{m^d}( P_{1,3}^{H,H})^2} 
\end{equation*}
\begin{equation*}
\|\tilde{S}_1^u\|, \|\hat{S}_1^u\|, \|\tilde{S}_3^u\|, \|\hat{S}_3^u\| \leq \frac{4}{\sigma_{m^d}(P_{1,3}^{H,H})}
\end{equation*}
\end{lemma}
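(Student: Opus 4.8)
The plan is to treat both $\tilde S_1^u=AB^{-1}$ and $\hat S_1^u=\hat A\hat B^{-1}$ as a ``numerator'' times an inverse ``denominator,'' where
\[ A=(\hat U^u)^{\top}P_{2,3}^{u,H}\hat U^H,\qquad B=(\hat U^H)^{\top}P_{1,3}^{H,H}\hat U^H, \]
and $\hat A,\hat B$ are their empirical analogues (with $P$ replaced by $\hat P$); the pair $\tilde S_3^u,\hat S_3^u$ is then identical after swapping $P_{2,3}^{u,H},P_{1,3}^{H,H}$ for $P_{2,1}^{u,H},P_{3,1}^{H,H}$ and noting $\sigma_{m^d}(P_{3,1}^{H,H})=\sigma_{m^d}(P_{1,3}^{H,H})$. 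First I would record the cheap estimates. Since $P_{2,3}^{u,H}$ is a matrix of joint probabilities its Frobenius norm is at most $1$, and $\hat U^u,\hat U^H$ have orthonormal columns, so $\|A\|\le 1$; on event $E$ one has $\|A-\hat A\|\le\|\hat P_{2,3}^{u,H}-P_{2,3}^{u,H}\|_F\le\epsilon(N,\delta)$ and likewise $\|B-\hat B\|\le\epsilon(N,\delta)$, whence $\|\hat A\|\le 1+\epsilon(N,\delta)$.

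The crux is a lower bound on $\sigma_{\min}(B)$; write $\sigma:=\sigma_{m^d}(P_{1,3}^{H,H})$. The key structural fact, from Lemma~\ref{lem:fullrank} together with the product-projection identity $O^H=\bigotimes_{v\in H}O^v$ of Lemma~\ref{lem:projtrick}, is that both the column and row spaces of $P_{1,3}^{H,H}=O_1^H\diag(\pi^H)(O_3^H)^{\top}$ lie in $\range(U^H)$, so $U^H(U^H)^{\top}P_{1,3}^{H,H}U^H(U^H)^{\top}=P_{1,3}^{H,H}$ and hence $\sigma_{\min}\big((U^H)^{\top}P_{1,3}^{H,H}U^H\big)=\sigma$. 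I would then factor $B=RMR^{\top}$ with $M=(U^H)^{\top}P_{1,3}^{H,H}U^H$ and $R=(\hat U^H)^{\top}U^H$. The singular values of $R$ are the cosines of the principal angles between $\range(\hat U^H)$ and $\range(U^H)$, so Lemma~\ref{lem:projsubspace}(2) gives $\sigma_{\min}(R)=\sqrt{1-\|\sin\Phi^H\|^2}\ge\sqrt{1-(\sigma_3/8)^2}$, and submultiplicativity $\sigma_{\min}(RMR^{\top})\ge\sigma_{\min}(R)^2\sigma_{\min}(M)$ yields $\sigma_{\min}(B)\ge(1-1/64)\sigma\ge\sigma/2$ using $\sigma_3\le 1$. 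A Weyl perturbation $\sigma_{\min}(\hat B)\ge\sigma_{\min}(B)-\epsilon(N,\delta)$, combined with the budget $\epsilon(N,\delta)\le\sigma_2\sigma_3/(16D)\le\sigma/16$ from Assumption~\ref{ass:projreq}, then gives $\sigma_{\min}(\hat B)\ge\sigma/2$, so $\|B^{-1}\|,\|\hat B^{-1}\|\le 2/\sigma$. This is the step I expect to be the main obstacle, because the two copies of $\hat U^H$ sandwich a rank-$m^d$ matrix on opposite sides and a one-sided Weyl argument does not suffice; the $RMR^{\top}$ factorization, exploiting that $U^H$ already captures the range of $P_{1,3}^{H,H}$, is what makes it go through.

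With these in hand the two conclusions are routine. For the norm bounds, $\|\tilde S_1^u\|\le\|A\|\,\|B^{-1}\|\le 2/\sigma\le 4/\sigma$, and $\|\hat S_1^u\|\le\|\hat A\|\,\|\hat B^{-1}\|\le(1+\epsilon(N,\delta))(2/\sigma)\le 4/\sigma$ since $\epsilon(N,\delta)<1$. For the perturbation bound I would use the resolvent identity $B^{-1}-\hat B^{-1}=\hat B^{-1}(\hat B-B)B^{-1}$ in the splitting
\[ \tilde S_1^u-\hat S_1^u=(A-\hat A)B^{-1}+\hat A\,\hat B^{-1}(\hat B-B)B^{-1}, \]
so that, bounding each factor as above,
\[ \|\tilde S_1^u-\hat S_1^u\|\le\frac{2\epsilon(N,\delta)}{\sigma}+\frac{8\epsilon(N,\delta)}{\sigma^2}\le\frac{10\epsilon(N,\delta)}{\sigma^2}, \]
where the last step uses $\sigma\le 1$. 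Running the identical argument for $\tilde S_3^u,\hat S_3^u$ and taking a union over $u\in V$ (all within event $E$) completes the proof.
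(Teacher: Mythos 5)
Your proof is correct and follows essentially the same route as the paper's: split each symmetrization matrix into numerator times inverse denominator, lower-bound $\sigma_{m^d}$ of the projected $P_{1,3}^{H,H}$ using the fact that $\range(U^H)$ contains its column and row spaces together with Lemma~\ref{lem:projsubspace}(2), and combine via the triangle inequality to get $2\epsilon/\sigma + 8\epsilon/\sigma^2 \le 10\epsilon/\sigma^2$. The only (harmless) variations are in one sub-step: you obtain $\sigma_{\min}(B)\ge\sigma/2$ multiplicatively via the factorization $B=RMR^{\top}$ and the cosines of principal angles, where the paper perturbs $P_{1,3}^{H,H}$ additively by the projector difference and applies Weyl's theorem; and you bound $\|B^{-1}-\hat B^{-1}\|$ with the resolvent identity rather than the paper's Theorem~\ref{thm:pinvclose}.
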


\begin{proof}
(1) We first show that $\sigma_{m^d} ((\hat{U}^H)^{\top} \hat{P}_{1,3}^{H,H} \hat{U}^{H}) \geq 3\sigma_{m^d}(P_{1,3}^{H,H})/4$, and $\sigma_{m^d} ((\hat{U}^H)^{\top} P_{1,3}^{H,H} \hat{U}^{H}) \geq \sigma_{m^d}(P_{1,3}^{H,H})/2$. \\
Under Assumption~\ref{ass:projreq}, by Item (2) of Lemma~\ref{lem:projsubspace}, we know that

\begin{eqnarray}
&&\|U^H(U^H)^{\top} - \hat{U}^H(\hat{U}^H)^{\top}\|
\leq \min_{u \in V}  \sigma_{m^d}(P_{1,3}^{H,H}) / 8
\label{eqn:concsubspace}
\end{eqnarray}

As a result,
\begin{eqnarray}
&&\|\hat{U}^H(\hat{U}^H)^{\top} P_{1,3}^{H,H} \hat{U}^H(\hat{U}^H)^{\top} - P_{1,3}^{H,H}\| \nonumber \\
&=& \|\hat{U}^H(\hat{U}^H)^{\top} P_{1,3}^{H,H} \hat{U}^H(\hat{U}^H)^{\top} - U^H(U^H)^{\top} P_{1,3}^{H,H} U^H(U^H)^{\top}\| \\
&\leq& \|(\hat{U}^H(\hat{U}^H)^{\top} - U^H(U^H)^{\top}) P_{1,3}^{H,H} \hat{U}^H(\hat{U}^H)^{\top}\| \nonumber \\
&& + \| U^H(U^H)^{\top} P_{1,3}^{H,H} (\hat{U}^H(\hat{U}^H)^{\top} - U^H(U^H)^{\top})\| \nonumber\\
&\leq& \|(\hat{U}^H(\hat{U}^H)^{\top} - U^H(U^H)^{\top}) \| \| P_{1,3}^{H,H} \|  \| \hat{U}^H(\hat{U}^H)^{\top}\| \nonumber \\
&& + \| U^H(U^H)^{\top} \| \| P_{1,3}^{H,H} \| \| (\hat{U}^H(\hat{U}^H)^{\top} - U^H(U^H)^{\top})\| \nonumber\\
&\leq& \sigma_m(P_{1,3}^{H,H})/8 + \sigma_m(P_{1,3}^{H,H})/8 \nonumber\\
&\leq& \sigma_m(P_{1,3}^{H,H})/4
\label{eqn:projuhgood}
\end{eqnarray}
where the first inequality is by triangle inequality, in the second inequality we use the fact that $\| A \cdot B\| \leq \| A \| \| B \|$,
the third inequality is from the fact that $\|P_{1,3}^{H,H}\| \leq \|P_{1,3}^{H,H}\|_F \leq 1$, $\| \hat{U}^H(\hat{U}^H)^{\top} \| = 1$, $\| U^H (U^H)^{\top} \| = 1$ and Equation~\eqref{eqn:concsubspace}. \\
Therefore, 
\begin{eqnarray} 
&&\sigma_{m^d}((\hat{U}^H)^{\top} P_{1,3}^{H,H} \hat{U}^H) \nonumber \\
&=& \sigma_{m^d}(\hat{U}^H(\hat{U}^H)^{\top} P_{1,3}^{H,H} \hat{U}^H(\hat{U}^H)^{\top}) \nonumber \\ 
&\geq& \sigma_{m^d}(P_{1,3}^{H,H}) - \|\hat{U}^H(\hat{U}^H)^{\top} P_{1,3}^{H,H} \hat{U}^H(\hat{U}^H)^{\top} - P_{1,3}^{H,H}\| \nonumber \\
&\geq& 3\sigma_{m^d}(P_{1,3}^{H,H})/4 
\label{eqn:p13lb}
\end{eqnarray}
where the first inequality is by Theorem~\ref{thm:weyl}, the second inequality is by Equation~\ref{eqn:projuhgood}.\\
In the meantime,
\begin{eqnarray} 
&&\|(\hat{U}^H)^{\top} P_{1,3}^{H,H} \hat{U}^H - (\hat{U}^H)^{\top} \hat{P}_{1,3}^{H,H} \hat{U}^H\| \nonumber\\
&\leq& \|P_{1,3}^{H,H} - \hat{P}_{1,3}^{H,H}\| \nonumber\\
&\leq& \epsilon(N,\delta) \leq \sigma_m^d(P_{1,3}^{H,H})/4 \label{eqn:hatpgood}
\end{eqnarray}
where in the first inequality we use the fact that $\| \hat{U}^{H} \| = 1$, the second inequality is by the fact that if $E$ happens, $\|P_{1,3}^{H,H} - \hat{P}_{1,3}^{H,H}\| \leq \epsilon(N,\delta)$, the third inequality follows from Assumption~\ref{ass:projreq}.

Therefore 
\begin{eqnarray} 
&&\sigma_{m^d}((\hat{U}^H)^{\top} \hat{P}_{1,3}^{H,H} \hat{U}^H) \nonumber\\
&\geq& \sigma_{m^d}((\hat{U}^H)^{\top} P_{1,3}^{H,H} \hat{U}^H) - \|(\hat{U}^H)^{\top} P_{1,3}^{H,H} \hat{U}^H - (\hat{U}^H)^{\top} \hat{P}_{1,3}^{H,H} \hat{U}^H\| \nonumber\\
&\geq& \sigma_m(P_{1,3}^{H,H})/2 \nonumber
\end{eqnarray}
where the first inequality is from Theorem~\ref{thm:weyl}, the second inequality is from Equation~\eqref{eqn:hatpgood}.

We now have
\begin{eqnarray}
&&\|\tilde{S}_1^u - \hat{S}_1^u\| \nonumber \\
&=&\|((\hat{U}^u)^{\top} P_{2,3}^{u,H} \hat{U}^{H}) ((\hat{U}^H)^{\top} P_{1,3}^{H,H} \hat{U}^{H})^{-1} - ((\hat{U}^u)^{\top} \hat{P}_{2,3}^{u,H} \hat{U}^{H}) ((\hat{U}^H)^{\top} \hat{P}_{1,3}^{H,H} \hat{U}^{H})^{-1}\| \nonumber\\
&\leq&\|((\hat{U}^u)^{\top} (P_{2,3}^{u,H} - \hat{P}_{2,3}^{u,H}) \hat{U}^{H}) ((\hat{U}^H)^{\top} P_{1,3}^{H,H} \hat{U}^{H})^{-1}\| \nonumber \\
&&  + \|((\hat{U}^u)^{\top} \hat{P}_{2,3}^{u,H} \hat{U}^{H}) ( ((\hat{U}^H)^{\top} P_{1,3}^{H,H} \hat{U}^{H})^{-1} - ((\hat{U}^H)^{\top} \hat{P}_{1,3}^{H,H} \hat{U}^{H})^{-1}\| \nonumber\\
&\leq&\|\hat{U}^{u} (P_{2,3}^{u,H} - \hat{P}_{2,3}^{u,H} ) \hat{U}^{H} \| \| ((\hat{U}^H)^{\top} P_{1,3}^{H,H} \hat{U}^{H})^{-1}\| \nonumber \\
&&  + \| (\hat{U}^u)^{\top} \hat{P}_{2,3}^{u,H} \hat{U}^{H} \| \| ((\hat{U}^H)^{\top} P_{1,3}^{H,H} \hat{U}^{H})^{-1} - ((\hat{U}^H)^{\top} \hat{P}_{1,3}^{H,H} \hat{U}^{H})^{-1} \| \nonumber\\
&\leq& \frac{2\epsilon(N,\delta)}{\sigma_{m^d}(P_{1,3}^{H,H})} + \frac{8\epsilon(N,\delta)}{\sigma_{m^d}(P_{1,3}^{H,H})^2} \nonumber\\
&\leq& \frac{10\epsilon(N,\delta)}{\sigma_{m^d}(P_{1,3}^{H,H})^2}
\label{eqn:symmetrizationclose}
\end{eqnarray}
In the derivation above, the first inequality uses triangle inequality and the second inequality repeatedly uses the fact that $\| A \cdot B \| \leq \| A \| \| B \|$. The third inequality is obtained by bounding each term individually as follows: 
\begin{eqnarray*} 
&&\| (\hat{U}^u)^{\top} (P_{2,3}^{u,H}  - \hat{P}_{2,3}^{u,H}) \hat{U}^{H} \| \leq \| P_{2,3}^{u,H}  - \hat{P}_{2,3}^{u,H} \| \leq \| P_{2,3}^{u,H}  - \hat{P}_{2,3}^{u,H} \|_F \leq \epsilon(N,\delta) \\
&&\| ((\hat{U}^H)^{\top} P_{1,3}^{H,H} \hat{U}^{H})^{-1}\| = 1/\sigma_{m^d}((\hat{U}^H)^{\top} P_{1,3}^{H,H} \hat{U}^H) \leq 2/\sigma_{m^d}(P_{1,3}^{H,H}) \\
&&\| (\hat{U}^u)^{\top} \hat{P}_{2,3}^{u,H} \hat{U}^{H} \| \leq \| \hat{P}_{2,3}^{u,H} \| \leq \| \hat{P}_{2,3}^{u,H} \|_F \leq 1 \\
&&\| ((\hat{U}^H)^{\top} P_{1,3}^{H,H} \hat{U}^{H})^{-1} - ((\hat{U}^H)^{\top} \hat{P}_{1,3}^{H,H} \hat{U}^{H})^{-1} \| \\
&\leq&  2\| (\hat{U}^H)^{\top} (P_{1,3}^{H,H} - \hat{P}_{1,3}^{H,H}) \hat{U}^{H} \| \max(\| ((\hat{U}^H)^{\top} \hat{P}_{1,3}^{H,H} \hat{U}^{H})^{-1} \|, \| ((\hat{U}^H)^{\top} P_{1,3}^{H,H} \hat{U}^{H})^{-1} \|) \\
&\leq& \frac{8\epsilon(N,\delta)}{\sigma_{m^d}(P_{1,3}^{H,H})^2}
\end{eqnarray*}
where the last inequality follows from Theorem~\ref{thm:pinvclose}. \\ 

The bound of $\|\tilde{S}_3^u - \hat{S}_3^u \|$ is handled similarly.

(2) First, 
\[ \| \tilde{S}_1^u \| \leq \|(\hat{U}^u)^{\top}P_{2,1}^{u,H}\hat{U}^H\| \|((\hat{U}^H)^{\top} P_{3,1}^{H,H} \hat{U}^H)^{-1}\| \leq \frac{2}{\sigma_{m^d}(P_{1,3}^{H,H})} \]
where the first inequality is by the fact that $\| A \cdot B \| \leq \| A \| \| B \|$, the second inequality is by Equation~\eqref{eqn:p13lb}.

Meanwhile, Assumption~\ref{ass:projreq} implies $\epsilon(N,\delta) \leq \sigma_{m^d}(P_{1,3}^{H,H})/5$, therefore from Equation~\eqref{eqn:symmetrizationclose},
\[ \|S_1^u - \hat{S}_1^u\| \leq \frac{2}{\sigma_{m^d}(P_{1,3}^{H,H})} \]
Hence by triangle inequality,
\[ \| \hat{S}_1^u\| \leq \frac{4}{\sigma_{m^d}(P_{1,3}^{H,H})} \]
The bounds of $\|\tilde{S}_3^u\|$ and $\|\hat{S}_3^u\|$ are handled similarly.
\end{proof}

Built upon the previous two lemmas, we next provide a result regarding the concentration of symmetrized moments.
 
\begin{lemma}
\label{lem:momconc}
Suppose $N$ is large enough that Assumption~\ref{ass:projreq} holds. Let $u$ be a node in $V$. Then on the event $E$, the following hold.
\[ \|M_2^u - \hat{M}_2^u\| \leq \frac{14\epsilon(N,\delta)}{\sigma_{m^d}(P_{1,3}^{H,H})^2} \]
\[ \|M_3^u - \hat{M}_3^u\| \leq \frac{96\epsilon(N,\delta)}{\sigma_{m^d}(P_{1,3}^{H,H})^3} \]
\end{lemma}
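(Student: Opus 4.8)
The plan is to bound both quantities by the same two-step recipe: decompose each difference through a telescoping (add-and-subtract) argument, and then control each resulting term using submultiplicativity of the operator norm together with the estimates from Lemma~\ref{lem:symconc} and event $E$. Throughout I would abbreviate $s = \sigma_{m^d}(P_{1,3}^{H,H})$ and use the elementary facts that $\hat{U}^u$ and $\hat{U}^H$ have orthonormal columns (so $\|\hat{U}^u M\| \le \|M\|$ and in particular $\|\hat{U}^H (\tilde{S}^u)^{\top}\| \le \|\tilde{S}^u\|$), that $\|P_{1,2}^{H,u}\| \le \|P_{1,2}^{H,u}\|_F \le 1$ and $\|P_{1,2,3}^{H,u,H}\| \le 1$ since these are (sub)probability arrays, and that $s \le 1$. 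The last fact is what lets me promote a $1/s^2$ factor to $1/s^3$ so that all terms in a given bound share a common denominator.

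For the second-order bound, I would first exploit that $M_2^u$ is already symmetric (by Lemma~\ref{lem:finiteprojsymmetrize} it equals $\sum_i \pi_i^u ((\hat{U}^u)^{\top} O^u)_i \otimes ((\hat{U}^u)^{\top} O^u)_i$). Writing $A = \hat{P}_{1,2}^{H,u}(\hat{U}^H(\hat{S}_1^u)^{\top}, \hat{U}^u)$ so that $\hat{M}_2^u = (A + A^{\top})/2$, and using $\|A + A^{\top}\| \le \|A\| + \|A^{\top}\|$ together with transpose-invariance of the operator norm, the symmetrization averages out and it suffices to bound $\|M_2^u - A\|$. Expanding via the matrix-application identity $M(V_1, V_2) = V_1^{\top} M V_2$ gives $M_2^u - A = \tilde{S}_1^u (\hat{U}^H)^{\top} P_{1,2}^{H,u} \hat{U}^u - \hat{S}_1^u (\hat{U}^H)^{\top} \hat{P}_{1,2}^{H,u} \hat{U}^u$. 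Adding and subtracting $\hat{S}_1^u (\hat{U}^H)^{\top} P_{1,2}^{H,u} \hat{U}^u$ splits this into a term controlled by $\|\tilde{S}_1^u - \hat{S}_1^u\| \cdot \|P_{1,2}^{H,u}\|$ and one controlled by $\|\hat{S}_1^u\| \cdot \|P_{1,2}^{H,u} - \hat{P}_{1,2}^{H,u}\|$. Plugging in $\|\tilde{S}_1^u - \hat{S}_1^u\| \le 10\epsilon(N,\delta)/s^2$ and $\|\hat{S}_1^u\| \le 4/s$ from Lemma~\ref{lem:symconc}, $\|P_{1,2}^{H,u} - \hat{P}_{1,2}^{H,u}\| \le \epsilon(N,\delta)$ from event $E$, and $s \le 1$, yields $10\epsilon(N,\delta)/s^2 + 4\epsilon(N,\delta)/s \le 14\epsilon(N,\delta)/s^2$, as claimed.

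For the third-order bound I would telescope through the three objects that differ between $M_3^u = P_{1,2,3}^{H,u,H}(\hat{U}^H(\tilde{S}_1^u)^{\top}, \hat{U}^u, \hat{U}^H(\tilde{S}_3^u)^{\top})$ and $\hat{M}_3^u$: replace $\tilde{S}_1^u$ by $\hat{S}_1^u$, then $\tilde{S}_3^u$ by $\hat{S}_3^u$, then $P_{1,2,3}^{H,u,H}$ by its empirical version. By multilinearity of the form $T(V_1, V_2, V_3)$ and the inequality $\|T(V_1, V_2, V_3)\| \le \|T\| \, \|V_1\| \, \|V_2\| \, \|V_3\|$, the first difference is at most $\|P_{1,2,3}^{H,u,H}\| \cdot \|\tilde{S}_1^u - \hat{S}_1^u\| \cdot 1 \cdot \|\tilde{S}_3^u\| \le (10\epsilon(N,\delta)/s^2)(4/s) = 40\epsilon(N,\delta)/s^3$; the second is at most $\|\hat{S}_1^u\| \cdot \|\tilde{S}_3^u - \hat{S}_3^u\| \le (4/s)(10\epsilon(N,\delta)/s^2) = 40\epsilon(N,\delta)/s^3$; and the third is at most $\|P_{1,2,3}^{H,u,H} - \hat{P}_{1,2,3}^{H,u,H}\| \cdot \|\hat{S}_1^u\| \cdot \|\hat{S}_3^u\| \le \epsilon(N,\delta)(4/s)(4/s) = 16\epsilon(N,\delta)/s^2 \le 16\epsilon(N,\delta)/s^3$. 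Summing the three gives $96\epsilon(N,\delta)/s^3$, matching the statement.

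The routine work here is the norm bookkeeping; the only point demanding care is the clean separation, in every telescoped term, of the single ``difference'' factor (of size $O(\epsilon(N,\delta))$, supplied by Lemma~\ref{lem:symconc} or event $E$) from the remaining ``magnitude'' factors (of size $O(1)$ or $O(1/s)$), followed by homogenization of the powers of $s$ using $s \le 1$. I expect the load-bearing facts to be the multilinear inequality $\|T(V_1, V_2, V_3)\| \le \|T\| \prod_i \|V_i\|$ and the factoring $\hat{U}^H(\tilde{S}^u)^{\top}$, which lets the orthonormality of $\hat{U}^H$ strip that projection off the symmetrization matrix; both are immediate from the definitions in the excerpt, so no genuine obstacle arises beyond careful accounting.
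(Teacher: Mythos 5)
Your proposal is correct and follows essentially the same route as the paper's proof: the same telescoping decomposition of the second- and third-order differences, the same bounds $\|\tilde{S}^u_i - \hat{S}^u_i\| \leq 10\epsilon(N,\delta)/\sigma_{m^d}(P_{1,3}^{H,H})^2$ and $\|\hat{S}^u_i\|, \|\tilde{S}^u_i\| \leq 4/\sigma_{m^d}(P_{1,3}^{H,H})$ from Lemma~\ref{lem:symconc}, and the same homogenization via $\sigma_{m^d}(P_{1,3}^{H,H}) \leq 1$ to reach the constants $14$ and $96$. The only cosmetic difference is which of the true/empirical factors you pair with each ``difference'' term in the telescope, which is immaterial since both are bounded identically.
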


\begin{proof}
(1) Define $P^u = P_{1,2}^{H,u}(\hat{U}^{H},\hat{U}^u)$ and $\hat{P}^u = \hat{P}_{1,2}^{H,u}(\hat{U}^{H},\hat{U}^u)$. Then,
\begin{eqnarray}
&&\|M_2^u - \hat{M}_2^u\| \nonumber \\
&=& \|P^u((\tilde{S}_1^u)^{\top}, I) - \hat{P}^u((\hat{S}_1^u)^{\top}, I)\| \nonumber \\
&\leq& \|(P^u - \hat{P}^u)((\tilde{S}_1^u)^{\top}, I)\| + \|\hat{P}^u((\tilde{S}_1^u)^{\top} - (\hat{S}_1^u)^{\top}, I)\| \nonumber \\
&\leq& \|P^u - \hat{P}^u\| \|\tilde{S}_1^u\| + \|\hat{P}^u\| \|\tilde{S}_1^u - \hat{S}_1^u\| \nonumber \\
&\leq& \frac{4\epsilon(N,\delta)}{\sigma_{m^d}(P_{1,3}^{H,H})} + \frac{10\epsilon(N,\delta)}{\sigma_{m^d}(P_{1,3}^{H,H})^2} \leq \frac{14\epsilon(N,\delta)}{\sigma_{m^d}(P_{1,3}^{H,H})^2} 
\label{eqn:asymmetricconc}
\end{eqnarray}
where the first inequality is by triangle inequality, the second inequality is by the fact that $\| M (A,B) \| \leq \| M \| \| A \| \| B \|$, the third inequality is from the fact that $\|P^u - \hat{P}^u\| \leq \|P_{1,2}^{H,u} - \hat{P}_{1,2}^{H,u}\| \leq \|P_{1,2}^{H,u} - \hat{P}_{1,2}^{H,u}\|_F \leq \epsilon(N,\delta)$ and $\|\hat{P}^u\| \leq \|\hat{P}_{1,2}^{H,u}\| \leq \|\hat{P}_{1,2}^{H,u}\|_F \leq 1$, and Lemma~\ref{lem:symconc}. 

As a result,
\begin{eqnarray*}
&& \| M_2^u - \hat{M}_2^u \| \\
&=& \|(P^u((\tilde{S}_1^u)^{\top}, I)^{\top} + P^u((\tilde{S}_1^u)^{\top}, I))/2 - (\hat{P}^u(\hat{S}_1^u, I)^{\top} + \hat{P}^u(\hat{S}_1^u, I)) / 2 \| \\
&\leq& \|P^u((\tilde{S}_1^u)^{\top}, I)^{\top} - \hat{P}^u((\hat{S}_1^u)^{\top}, I)^{\top}\| / 2 + \|P^u((\tilde{S}_1^u)^{\top}, I) - \hat{P}^u((\hat{S}_1^u)^{\top}, I)\| / 2 \\
&\leq& \frac{14\epsilon(N,\delta)}{\sigma_{m^d}(P_{1,3}^{H,H})^2}
\end{eqnarray*}
where the first inequality follows from triangle inequality, the second inequality is from Equation~\eqref{eqn:asymmetricconc}.

(2) Define $T^u = P_{1,2,3}^{H,u,H}(\hat{U}^{H},\hat{U}^u,\hat{U}^{H})$ and $\hat{T}^u = \hat{P}_{1,2,3}^{H,u,H}(\hat{U}^{H},\hat{U}^u,\hat{U}^{H})$. Then,
\begin{eqnarray*}
&&\| M_3^u - \hat{M}_3^u \|\\
&=& \|T^u((\tilde{S}_1^u)^{\top}, I, (\tilde{S}_3^u)^{\top}) - \hat{T}^u((\hat{S}_1^u)^{\top}, I, (\hat{S}_3^u)^{\top})\| \\
&\leq& \|T^u - \hat{T}^u\| \|\tilde{S}_1^u\| \|\tilde{S}_3^u\| + \| \hat{T}^u \| \|\tilde{S}_1^u - \hat{S}_1^u\| \|\tilde{S}_3^u\| + \| \hat{T}^u \| \| \hat{S}_1^u\| \|\tilde{S}_3^u - \hat{S}_3^u\| \\
&\leq& \frac{16\epsilon(N,\delta)}{\sigma_{m^d}(P_{1,3}^{H,H})^2} + \frac{10\epsilon(N,\delta)}{\sigma_{m^d}(P_{1,3}^{H,H})^2} \frac{4}{\sigma_{m^d}(P_{1,3}^{H,H})} + \frac{4}{\sigma_{m^d}(P_{1,3}^{H,H})} \frac{10\epsilon(N,\delta)}{\sigma_{m^d}(P_{1,3}^{H,H})^2}\\
&\leq& \frac{96\epsilon(N,\delta)}{\sigma_{m^d}(P_{1,3}^{H,H})^3}
\end{eqnarray*}
where the first inequality is from triangle inequality, and the fact that $\| T(A,B,C) \| \leq \| T \| \| A \| \| B \| \| C \|$, the second inequality is by the fact that $\|T^u - \hat{T}^u\| \leq \|P_{1,2,3}^{H,u,H} - \hat{P}_{1,2,3}^{H,u,H}\| \leq \|P_{1,2,3}^{H,u,H} - \hat{P}_{1,2,3}^{H,u,H}\|_F \leq \epsilon(N,\delta)$, $\|\hat{T}^u\| \leq \|\hat{P}_{1,2,3}^{H,u,H}\| \leq 1$, and Lemma~\ref{lem:symconc}, the third inequality is by algebra.
\end{proof}

\subsection{Accucary of Tensor Decomposition}

\begin{algorithm}[H]
\caption{A Procedure That Finds Symmetric Decomposition based on Second and Third Order Moments}
\begin{algorithmic}[1]
\STATE Input: number of components $m$, perturbed version $\hat{M}_2$ and $\hat{M}_3$ of matrix $M_2$ and tensor $M_3$ satisfying $M_2 = \sum_{i=1}^m \pi_i \theta_i \otimes \theta_i$, $M_3 = \sum_{i=1}^m \pi_i \theta_i \otimes \theta_i \otimes \theta_i$
\STATE Output: $\{\hat{\theta}_i\}_{i=1}^m$, estimate of $\{\theta_i\}_{i=1}^m$
\STATE {\bf{Whiten.}} Perform an SVD on $\hat{M}_2 = \hat{U}\hat{D}\hat{U}^{\top}$, and let $\hat{W} = \hat{U}_m \hat{D}_m^{-1/2}$(where $\hat{U}_m$ is matrix that contains the first $m$ columns of $\hat{U}$, $\hat{D}_m$ is the diagonal matrix with $\hat{D}$'s first $m$ diagonal entries), let $\hat{G} = \hat{M}_3(\hat{W},\hat{W},\hat{W})$.
\STATE {\bf{Decompose Tensor.}} Apply robust tensor power iteration algorithm in~\cite{AGHKT12} with input $\hat{G}$ to get $\{\hat{v}_1, \ldots, \hat{v}_m \}$
\FOR{$i = 1, 2, \ldots, m$} 
    \STATE Let $\hat{Z}_i = \frac{1}{\hat{T}(\hat{v}_i, \hat{v}_i, \hat{v}_i)}$.
    \STATE Recover $\hat{\theta}_i = \frac{(\hat{W}^{\top})^{\dagger} \hat{v}_i}{\hat{Z}_i}$
\ENDFOR
\end{algorithmic}
\label{alg:td}
\end{algorithm}

In this section, we introduce a lemma that is implicit in~\cite{AGHKT12} regarding using orthogonal decomposition as a subprocedure for full rank symmetric tensor decomposition. (See Theorem 5.1 of~\cite{AGHKT12}.) For completeness, we include the proof here.
\begin{lemma}
\label{lem:tensordecomp}
There are universal constants $c_1$, $c_2$ such that the following holds. Suppose a matrix $M_2$ and a tensor $M_3$ has the following structure:
\[ M_2 = \sum_{i=1}^m \pi_i \theta_i \otimes \theta_i \]
\[ M_3 = \sum_{i=1}^m \pi_i \theta_i \otimes \theta_i \otimes \theta_i \]
where $\pi_i > 0$ for all $i$. And we are given their perturbed version $\hat{M}_2$ and $\hat{M}_3$, such that
\[ \|\hat{M}_2 - M_2\| \leq E_P \]
\[ \|\hat{M}_3 - M_3\| \leq E_T \]
where
\begin{equation} 
\label{eqn:svdreq}
 E_P \leq \sigma_m(\Theta)^2 \pi_{\min} / 2 
\end{equation}
\begin{equation}
\label{eqn:tensordecompreq}
 c_1(\frac{E_T}{\sigma_m(\Theta)^3} + \frac{E_P}{\sigma_m(\Theta)^2}) \frac{1}{\pi_{\min}^{3/2}} \leq \frac{1}{m} 
\end{equation}
where $\Theta = (\theta_1, \ldots \theta_m)$ and $\pi_{\min} = \min_i \pi_i$. Then the outputs $\{\theta_i\}_{i=1}^m$ of Algorithm~\ref{alg:td} on input $\hat{M}_2$ and $\hat{M}_3$ satisfies the following. With appropriate setting of parameters (with respect to parameter $\eta$), with probability $1-\eta$, there is a permutation $\sigma: [m] \to [m]$ such that
\[ \|\theta_i - \hat{\theta}_{\sigma(i)}\| \leq c_2\frac{\sigma_1(\Theta)}{\pi_{\min}^2}  (\frac{E_P}{\sigma_{m}(\Theta)^2} + \frac{E_T}{\sigma_{m}(\Theta)^3})\]
\end{lemma}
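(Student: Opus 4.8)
The plan is to follow the standard whitening-plus-orthogonal-decomposition template of~\cite{AGHKT12}, tracking how each perturbation propagates through the pipeline of Algorithm~\ref{alg:td}. First I would fix the exact (infinite-sample) geometry. Writing $\Theta = (\theta_1, \ldots, \theta_m)$ and $D = \diag(\pi_1, \ldots, \pi_m)$, we have $M_2 = \Theta D \Theta^{\top}$, so under $\pi_i > 0$ together with the full-column-rank of $\Theta$ implicit in~(\ref{eqn:svdreq}), $M_2$ has rank exactly $m$. Let $W \in \R^{n \times m}$ be a whitening matrix for $M_2$, i.e. $W^{\top} M_2 W = I_m$; then the vectors $v_i = \sqrt{\pi_i}\, W^{\top}\theta_i$ are orthonormal, and a direct substitution shows $G := M_3(W, W, W) = \sum_{i=1}^m \frac{1}{\sqrt{\pi_i}}\, v_i \otimes v_i \otimes v_i$ is an orthogonally decomposable tensor with eigenpairs $(\lambda_i, v_i)$, $\lambda_i = 1/\sqrt{\pi_i}$. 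This reduces the problem to recovering an orthogonal decomposition from the empirical whitened tensor $\hat{G} = \hat{M}_3(\hat{W}, \hat{W}, \hat{W})$ produced by the algorithm.

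Next I would control the two sources of error separating $\hat{G}$ from $G$. The first is the whitening error: since $\hat{W}$ is built from the SVD of $\hat{M}_2$, I would use Weyl's inequality to certify a well-separated $m$-th singular value of $\hat{M}_2$ (this is exactly where $E_P \leq \sigma_m(\Theta)^2 \pi_{\min}/2$ from~(\ref{eqn:svdreq}) is consumed), and then a Davis--Kahan/Wedin argument to bound $\|\hat{W} - W R\|$ for an appropriate $m \times m$ rotation $R$, since the whitening matrix is only defined up to an orthogonal transformation. I would also record the operator-norm bounds $\|\hat{W}\|, \|W\| = O\!\big(1/(\sigma_m(\Theta)\sqrt{\pi_{\min}})\big)$ and $\|(\hat{W}^{\top})^{\dagger}\| = O\!\big(\sigma_1(\Theta)\big)$, needed at the unwhitening stage. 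Splitting $\hat{M}_3(\hat{W},\hat{W},\hat{W}) - M_3(W,W,W)$ into a term carrying $\hat{M}_3 - M_3$ (bounded by $E_T \|\hat{W}\|^3$) and three telescoping terms that each replace one copy of $\hat{W}$ by $W$ (each bounded via the multilinearity inequality $\|T(A,B,C)\| \leq \|T\|\,\|A\|\,\|B\|\,\|C\|$ and $\|\hat{W}-WR\|$), I would obtain a bound $\|\hat{G} - G\,R^{\otimes 3}\| \leq \tilde{E}$, where $\tilde{E}$ is a sum of an $E_T$-proportional term and an $E_P$-proportional term divided by appropriate powers of $\sigma_m(\Theta)$ and $\pi_{\min}$.

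Finally I would invoke the robust tensor power method guarantee (Theorem 5.1 of~\cite{AGHKT12}) on the orthogonally decomposable $G$ with perturbation $\tilde{E}$; condition~(\ref{eqn:tensordecompreq}) is precisely what certifies $\tilde{E}$ lies below the $1/m$-type threshold required for that theorem, yielding success probability $1-\eta$ and estimates $(\hat{\lambda}_i, \hat{v}_i)$ with $\|\hat{v}_i - v_{\sigma(i)}\|$ and $|\hat{\lambda}_i - \lambda_{\sigma(i)}|$ controlled by $\tilde{E}$ up to the usual $\lambda_{\max}/\lambda_{\min}$ factors. The last step undoes the whitening: the algorithm forms $\hat{\theta}_{\sigma(i)} = \hat{\lambda}_{\sigma(i)} (\hat{W}^{\top})^{\dagger} \hat{v}_{\sigma(i)}$, which I would compare against the identity $\theta_i = \lambda_i (W^{\top})^{\dagger} v_i$ (valid since $\theta_i \in \range(W)$), expanding the difference into contributions from $\hat{\lambda} - \lambda$, $\hat{v} - v$, and $(\hat{W}^{\top})^{\dagger} - (W^{\top})^{\dagger}$ and using a pseudoinverse-perturbation bound; collecting terms produces the claimed estimate with the $\sigma_1(\Theta)/\pi_{\min}^2$ prefactor. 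The main obstacle is the bookkeeping around the whitening map: because $\hat{W}$ enters $\hat{G}$ cubically and re-enters (through its pseudoinverse) at unwhitening, one must carry the rotation ambiguity $R$ consistently through every stage and verify that $\|(\hat{W}^{\top})^{\dagger} - (W^{\top})^{\dagger}\|$ remains controlled, which is exactly where the powers of $\sigma_m(\Theta)$ and $\pi_{\min}$ in the hypotheses are spent.
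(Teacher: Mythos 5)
Your proposal is correct and follows essentially the same route as the paper's proof: canonical rescaling $\tilde{\Theta} = \Theta\,\diag(\pi)^{1/2}$, whitening via the SVD of $\hat{M}_2$ (with the spectral-gap condition~(\ref{eqn:svdreq}) consumed exactly where you place it), a telescoping multilinear bound on $\|\hat{G}-G\|$, Theorem 5.1 of~\cite{AGHKT12} gated by condition~(\ref{eqn:tensordecompreq}), and an unwhitening step combining the eigenvalue, eigenvector, and pseudoinverse perturbations. The only cosmetic difference is that the paper avoids carrying an explicit rotation $R$ by defining the exact whitener $W = \hat{W}A\Lambda^{-1/2}A^{\top}$ (where $\hat{W}^{\top}M_2\hat{W} = A\Lambda A^{\top}$) directly from $\hat{W}$, so that $W$ and $\hat{W}$ are compared without any orthogonal ambiguity.
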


\begin{proof} 
1. We first put $\Theta$ into canonical forms by appropriate scaling of its columns. Let $\tilde{\Theta} = (\tilde{\theta}_1, \ldots, \tilde{\theta}_m) = \Theta \diag (\pi)^{\frac{1}{2}}$, we have
\[ M_2 = \sum_{i=1}^m \tilde{\theta}_i \otimes \tilde{\theta}_i \]
\[ M_3 = \sum_{i=1}^m \frac{1}{\sqrt{\pi_i}} \tilde{\theta}_i \otimes \tilde{\theta}_i \otimes \tilde{\theta}_i \]
Recall that $\hat{W}$ is defined as $\hat{U}_m \hat{D}_m^{-\frac{1}{2}}$, where $\hat{M}_2 = \hat{U} \hat{D} \hat{U}^{\top}$. Hence $\hat{W}^{\top} \hat{M}_2 \hat{W} = I_m$. Suppose that $\hat{W}^{\top} M_2 \hat{W}$ has the following eigendecomposition:
\[ \hat{W}^{\top} M_2 \hat{W} = A \Lambda A^{\top} \]
Then let $W = \hat{W} A \Lambda^{-\frac{1}{2}} A^{\top}$, $W$ is one of the matrices such that $W^{\top} M_2 W = I_m$. Define $M = W^{\top} \tilde{\Theta}$, $\hat{M} = \hat{W}^{\top} \tilde{\Theta}$.

2. If Equation~\eqref{eqn:svdreq} holds, then $E_p \leq \sigma_m(\Theta)^2 \pi_{\min} / 2 \leq \sigma_m(M_2) / 2$, then we have the following:
\begin{eqnarray*}
&& \|W\|, \| \hat{W} \| \leq \frac{2}{\sigma_m(\tilde{\Theta})} \\
&& \| W^{\dagger} \|, \| \hat{W}^{\dagger} \| \leq 3 \sigma_1(\tilde{\Theta}) \\
&& \| W^{\dagger} - \hat{W}^{\dagger} \| \leq \frac{6\sigma_1(\tilde{\Theta})}{\sigma_m(\tilde{\Theta})^2} E_P \\
&& \| \Theta \Theta^{\dagger} - W W^{\dagger} \| \leq \frac{4E_P}{\sigma_m(\tilde{\Theta})} \\
&& \| M \|, \|\hat{M} \| \leq 2 \\
&& \| M - \hat{M} \| \leq \frac{E_P}{\sigma_m(\tilde{\Theta})^2} \\
\end{eqnarray*}

3. Define $G = M_3(W,W,W) =\sum_i \frac{1}{\sqrt{\pi_i}} M_i \otimes M_i \otimes M_i$, and recall that $\hat{G} = \hat{M}_3(\hat{W},\hat{W},\hat{W})$. We have the following perturbation bound for $\hat{G}$. Define $R$ to be diagnoal tensor $\sum_i \frac{1}{\sqrt{\pi}_i} e_i \otimes e_i \otimes e_i$. Note that $\| R \| \leq \frac{1}{\sqrt{\pi_{\min}}}$. Therefore,
\begin{eqnarray}
&&\|G - \hat{G} \| \nonumber\\
&=&\|M_3(W, W, W) - \hat{M}_3 (\hat{W}, \hat{W}, \hat{W} ) \|  \nonumber\\
&\leq& \|(M_3 - \hat{M}_3)(\hat{W}, \hat{W}, \hat{W})\| + \| M_3 (W - \hat{W}, W, W)\| + \| M_3 (\hat{W}, W - \hat{W}, W)\| + \| M_3 (\hat{W}, \hat{W}, W - \hat{W})\| \nonumber\\
&=& \|(M_3 - \hat{M}_3)(\hat{W}, \hat{W}, \hat{W})\| + \| R (M - \hat{M}, M, M)\| + \| R (\hat{M}, M - \hat{M}, M)\| + \| R (\hat{M}, \hat{M}, M - \hat{M})\| \nonumber\\
&\leq& \|M_3 - \hat{M}_3\| \|W\|^3 + \|R\| \| M - \hat{M} \| \| M \|^2 + \| R \| \| \hat{M} \| \| M \|  \| M - \hat{M} \| + \| R \| \| \hat{M} \|^2 \|  M - \hat{M} \| \nonumber\\
&\leq& \frac{8E_T}{\sigma_m(\tilde{\Theta})^3} + \frac{12E_P}{\sqrt{\pi_{\min}}\sigma_m(\tilde{\Theta})^2} := E \label{eqn:gconc}
\end{eqnarray}
where the first inequality is by triangle inequality, the second inequality is by the fact that $\| T(A,B,C) \| \leq \| T \| \| A \| \| B \| \| C \|$, the third inequality is from results of our step 2 and the fact that $\|\hat{M}_3 - M_3\| \leq E_T$.

4. If Equation~\eqref{eqn:tensordecompreq} holds, then $E \leq  \frac{C_1}{m} \leq C_1 \frac{\min_{i} \pi_i^{-1/2}}{m}$ for $C_1$ required by Theorem 5.1 in~\cite{AGHKT12}. Thus, applying robust tensor power algorithm in~\cite{AGHKT12}, with probability at least $1-\eta$, there exist a permutation $\sigma: [m] \to [m]$ such that 
\begin{equation}
\| M_i - \hat{v}_{\sigma(i)} \| \leq 8 \sqrt{\pi_i} E 
\label{eqn:vconc}
\end{equation}

5. We conclude by providing the reconstruction error bound. For notational simplicity, assume $\sigma(\cdot)$ is identity mapping. 
Define 
\[ Z_i  = \frac{1}{ M_3 (W M_i, W M_i, W M_i )} = \frac{1}{G(M_i, M_i, M_i)}  = \sqrt{\pi_i} \]
and recall that
\[ \hat{Z}_i = \frac{1}{ \hat{M}_3 (\hat{W} \hat{v}_i, \hat{W} \hat{v}_i, \hat{W} \hat{v}_i )}  = \frac{1}{\hat{G}(\hat{v}_i, \hat{v}_i, \hat{v}_i)} \]
The recovery formula is \[ \hat{\theta}_i = \frac{(\hat{W}^{\top})^{\dagger} \hat{v}_i}{\hat{Z}_i} \]
First, $|\frac{1}{Z_i} - \frac{1}{\hat{Z}_i}|$ can be bounded as follows:
\begin{eqnarray*}
&& |\frac{1}{Z_i} - \frac{1}{\hat{Z}_i}| \\
&=& |G(M_i, M_i, M_i) - \hat{G}(\hat{v}_i,\hat{v}_i,\hat{v}_i)| \\
&\leq& |(G - \hat{G})(\hat{v}_i, \hat{v}_i, \hat{v}_i)| + |G(M_i - \hat{v}_i,\hat{v}_i,\hat{v}_i)| + |G(M_i, M_i - \hat{v}_i,\hat{v}_i)| + |G(M_i, M_i, M_i - \hat{v}_i)| \\
&\leq& \| G - \hat{G} \| \|M_i\|^3 + \| G \| \| M_i - \hat{v}_i \| \| \hat{v}_i \|^2 + \| G \| \| M_i - \hat{v}_i \| \| \hat{v}_i \| \| M_i \|+ \|G\| \|M_i\|^2 \| M_i - \hat{v}_i \| \\
&\leq& E + 3\frac{\pi_i}{\sqrt{\pi_{\min}}}E \leq 4\frac{\pi_i}{\sqrt{\pi_{\min}}}E
\end{eqnarray*}
where the first inequality is by triangle inequality, the second inequality is by the fact that $\| A \cdot B \| \leq \| A \| \| B \|$, the third inequality is by Equation~\eqref{eqn:gconc} in step 3 and Equation~\eqref{eqn:vconc} in step 4, the fourth inequality is by algebra.

Then the reconstruction error can be bounded as follows:
\begin{eqnarray*}
&&\|\theta_i - \frac{(\hat{W^{\dagger}})^{\top} \hat{v}_i }{\hat{Z_i}} \| \\
&\leq& \| \theta_i - \frac{(W^{\dagger})^{\top} M_i }{Z_i}\| + \| \frac{W^{\dagger} (M_i - \hat{v}_i)}{Z_i} \|  + \| \frac{(W^{\dagger} - \hat{W}^{\dagger})\hat{v}_i}{Z_i} \| + \| (\frac{1}{Z_i} - \frac{1}{\hat{Z}_i})\hat{W}^{\dagger} \hat{v}_i\| \\
&\leq& \| \Theta \Theta^{\dagger} - W W^{\dagger}\| \|\theta_i\| + \frac{\|W^{\dagger}\|}{Z_i} \|M_i - \hat{v}_i\| + \frac{\|W^{\dagger} - \hat{W}^{\dagger}\|}{Z_i}\|\hat{v}_i\| + |\frac{1}{Z_i} - \frac{1}{\hat{Z}_i}| \|\hat{W}^{\dagger}\|\|\hat{v}_i\| \\
&\leq& \| \Theta \Theta^{\dagger} - W W^{\dagger} \| \frac{\sigma_1(\tilde{\Theta})}{\sqrt{\pi_{\min}}} + \frac{\|W^{\dagger}\|}{Z_i} \|M_i - \hat{v}_i\| + \frac{\|W^{\dagger} - \hat{W}^{\dagger}\|}{Z_i} + |\frac{1}{Z_i} - \frac{1}{\hat{Z}_i}| \|\hat{W}^{\dagger}\| \\
&\leq& \frac{4E_P}{\sigma_m(\tilde{\Theta})^2} \frac{\sigma_1(\tilde{\Theta})}{\sqrt{\pi_{\min}}} +  24\sigma_1(\tilde{\Theta}) E  + \frac{6\sigma_1(\tilde{\Theta})}{\sigma_m(\tilde{\Theta})^2} E_P \sqrt{\pi_i} + \frac{12}{\sqrt{\pi_{\min}}} E \sqrt{\pi_i} \sigma_1(\tilde{\Theta}) \\
&\leq& \frac{46\sigma_1(\tilde{\Theta})}{\sqrt{\pi_{\min}}}  (\frac{8E_T}{\sigma_m(\tilde{\Theta})^3} + \frac{12E_P}{\sqrt{\pi_{\min}}\sigma_m(\tilde{\Theta})^2}) \\
&\leq& c_2\frac{\sigma_1(\Theta)}{\pi_{\min}^2}  (\frac{E_P}{\sigma_{m}(\Theta)^2} + \frac{E_T}{\sigma_{m}(\Theta)^3})
\end{eqnarray*}
Wher the first inequality is by triangle inequality, the second inequality we use the fact that $\| A \cdot B \| \leq \| A \| \| B \|$ and the fact that $M_i = W^{\top} \theta_i \sqrt{\pi_i}$, $Z_i = \sqrt{\pi_i}$, $\Theta \Theta^{\dagger} \theta_i = \theta_i$, $WW^{\dagger} = (W^{\dagger})^{\top} W^{\top}$, the third inequality uses the fact that $\| \theta_i \| = \| \tilde{\Theta} e_i \| / \sqrt{\pi_{\min}} \leq \sigma_1(\tilde{\Theta}) / \sqrt{\pi_{\min}}$ and $\| \hat{v}_i \| = 1$, in the fourth inequality we use results in item 2 and item 4, the fifth inequality is from the definiton of $E$ and algebra, in the sixth inequality we use the fact that $\sigma_m(\Theta) \leq \sigma_m(\tilde{\Theta}) \pi_{\min}^{-1/2}$ and letting $c_2 = 552$.
\end{proof}

Now we apply the above lemma into our symmetrized cooccurence matrices $\hat{M}_2$ and $\hat{M}_3$.
\begin{corollary}
Suppose $N$ is large enough such that Assumption~\ref{ass:projreq} holds. Then, on event $E$, with probability $0.9$ over the randomization of $D$ calls of Algorithm~\ref{alg:td}, for all $u \in V$, the matrices $\hat{\Theta}^u = (\hat{\theta}_1^u, \ldots, \hat{\theta}_m^u)$ obtained at the end of line 9 are such that there exists a permutation matrix $\Pi^u$,
\[  \|(\hat{U}^u)^{\top} O^u - \hat{\Theta}^u \Pi^u\| \leq 2c_2 \frac{m}{(\pi_{\min}^u)^2} \frac{\epsilon(N,\delta)}{\sigma_{m^d}(P_{1,3}^{H,H})^3\sigma_{m}(O^u)^3} \]
\label{cor:td}
\end{corollary}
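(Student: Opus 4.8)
The plan is to instantiate the generic full-rank symmetric tensor decomposition bound of Lemma~\ref{lem:tensordecomp} at each node $u$, feed it the concrete symmetrized moments produced by Algorithm~\ref{alg:mainobs}, and then assemble the resulting per-column guarantees into an operator-norm bound before union-bounding over the $D$ nodes. Fix $u$ and condition on the event $E$. By Lemma~\ref{lem:finiteprojsymmetrize} the population symmetrized moments have the canonical form $M_2^u = \sum_i \pi_i^u \theta_i^u \otimes \theta_i^u$ and $M_3^u = \sum_i \pi_i^u \theta_i^u \otimes \theta_i^u \otimes \theta_i^u$ with $\theta_i^u = ((\hat U^u)^\top O^u)_i$, so I set $\Theta = \Theta^u := (\hat U^u)^\top O^u$ and $\pi = \pi^u$. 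The empirical inputs $\hat M_2^u, \hat M_3^u$ fed to Algorithm~\ref{alg:td} on line 9 are exactly the perturbed versions, and Lemma~\ref{lem:momconc} supplies the perturbation magnitudes $E_P = 14\epsilon(N,\delta)/\sigma_{m^d}(P_{1,3}^{H,H})^2$ and $E_T = 96\epsilon(N,\delta)/\sigma_{m^d}(P_{1,3}^{H,H})^3$.

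Next I record the two spectral facts about $\Theta^u$ that convert the lemma's bound (stated via $\sigma_1(\Theta)$ and $\sigma_m(\Theta)$) into the claimed bound in terms of $\sigma_m(O^u)$: first, $\sigma_m(\Theta^u) \ge \sigma_m(O^u)/2$ by item (3) of Lemma~\ref{lem:projsubspace}; and second, $\sigma_1(\Theta^u) \le \|O^u\| \le \sqrt m$ since $\hat U^u$ has orthonormal columns and $O^u$ is column-stochastic. With these in hand I verify the preconditions of Lemma~\ref{lem:tensordecomp}. Condition~\eqref{eqn:svdreq} ($E_P \le \sigma_m(\Theta^u)^2 \pi_{\min}^u/2$) and the more stringent condition~\eqref{eqn:tensordecompreq} both reduce, after substituting $\sigma_m(\Theta^u)\ge\sigma_m(O^u)/2\ge\sigma_1/2$ and bounding the node-wise $\sigma_{m^d}(P_{1,3}^{H,H})\ge\sigma_3$, to an upper bound on $\epsilon(N,\delta)$ of order $\pi_{\min}^{3/2}\sigma_1^3\sigma_3^3/(c_1 m)$; this is exactly why the third term of Assumption~\ref{ass:projreq} carries the constant $1536$, which I check dominates the constant the two preconditions actually demand (using $\sigma_1,\sigma_3\le 1$ so the $E_T$-term dominates the $E_P$-term).

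Having verified the preconditions, I apply Lemma~\ref{lem:tensordecomp} with its failure parameter set to $\eta = 0.1/D$. This produces, with probability at least $1-0.1/D$ over the single call of Algorithm~\ref{alg:td} at $u$, a permutation $\sigma^u$ (encoded as a permutation matrix $\Pi^u$ with $\Pi^u_{\sigma^u(i),i}=1$) with $\|\theta_i^u - \hat\theta_{\sigma^u(i)}^u\| \le c_2\frac{\sigma_1(\Theta^u)}{(\pi_{\min}^u)^2}\bigl(\frac{E_P}{\sigma_m(\Theta^u)^2}+\frac{E_T}{\sigma_m(\Theta^u)^3}\bigr)$ for every $i$. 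Substituting the two spectral facts, bounding the two-term sum by twice its dominant ($E_T$) term, and folding numerical factors into a universal constant yields a per-column bound of order $\frac{\sqrt m}{(\pi_{\min}^u)^2}\frac{\epsilon(N,\delta)}{\sigma_{m^d}(P_{1,3}^{H,H})^3\sigma_m(O^u)^3}$.

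Finally I pass from columns to the matrix: since $(\hat U^u)^\top O^u - \hat\Theta^u\Pi^u$ has $i$-th column $\theta_i^u - \hat\theta_{\sigma^u(i)}^u$, its operator norm is at most its Frobenius norm, hence at most $\sqrt m \max_i \|\theta_i^u - \hat\theta_{\sigma^u(i)}^u\|$; this extra $\sqrt m$, together with the $\sqrt m$ from $\sigma_1(\Theta^u)$, produces the factor $m$ in the target. A union bound over the $D$ nodes (each failing with probability at most $0.1/D$) delivers the stated $0.9$ probability that the bound holds at every $u$ simultaneously. The step I expect to be the main obstacle is the precondition check together with the final constant accounting: I must keep $E_P,E_T$, the halving of $\sigma_m$, the $\sqrt m$ bound on $\sigma_1(\Theta^u)$, and the Frobenius conversion all mutually consistent so that the required constants stay inside the margin carved out by Assumption~\ref{ass:projreq}, with the explicit factor $2$ in the statement absorbing the two-term combination and the residual numerical factors being folded into $c_2$.
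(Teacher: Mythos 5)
Your proposal is correct and follows essentially the same route as the paper: instantiate Lemma~\ref{lem:tensordecomp} with $\Theta = (\hat U^u)^{\top}O^u$ and $\pi = \pi^u$ (preconditions via Assumption~\ref{ass:projreq}, Lemma~\ref{lem:projsubspace}(3) for $\sigma_m(\Theta)\ge\sigma_m(O^u)/2$, Lemma~\ref{lem:momconc} for $E_P,E_T$), bound $\sigma_1(\Theta)\le\sqrt m$ by column-stochasticity, pass from the per-column bound to the operator norm through the Frobenius norm to pick up the second $\sqrt m$, and union-bound over the $D$ nodes with $\eta=0.1/D$. The only looseness — absorbing the constants $14$, $96$, and the powers of $2$ from the halving of $\sigma_m(\Theta)$ into $c_2$ — is present in the paper's own proof as well, and you flag it explicitly.
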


\begin{proof}
By Assumption~\ref{ass:projreq}, we first see that conditioned on event $E$, by Lemma~\ref{lem:symconc}, $\sigma_m(\hat{U}^{uTO^u}) \geq \sigma_m(O^u) / 2$. Thus the conditions of Lemma~\ref{lem:tensordecomp} hold, by taking $\Theta = (\hat{U}^u)^{\top} O^u$, $\pi = \pi^u$. We thus get that with probability greater than $1-0.1/D$ over the randomness of Algorithm~\ref{alg:mainobs}, there is a permutation matrix $\Pi^u$ such that for all $i = 1,2,\ldots, m$,
\begin{eqnarray*} 
&&\| (\hat{U}^u)^{\top} O^u_i - (\hat{\Theta}^u \Pi^u)_i \| \\
&\leq& c_2 \frac{\sigma_1((\hat{U}^u)^{\top}O^u)}{(\pi_{\min}^u)^2}(\frac{\epsilon(N,\delta)}{\sigma_{m^d}(P_{1,3}^{H,H})^2\sigma_{m}(O^u)^2} + \frac{\epsilon(N,\delta)}{\sigma_{m^d}(P_{1,3}^{H,H})^3\sigma_{m}(O^u)^3}) \\
&\leq& 2c_2 \frac{\sqrt{m}}{(\pi_{\min}^u)^2} \frac{\epsilon(N,\delta)}{\sigma_{m^d}(P_{1,3}^{H,H})^3\sigma_{m}(O^u)^3}
\end{eqnarray*}
where the second inequality we use the fact that $\sigma_1((\hat{U}^u)^{\top}O^u) = \| (\hat{U}^u)^{\top}O^u \| \leq \| O^u \| \leq \sqrt{m}$, since $O^u$ is a column stochastic matrix.
Therefore,
\begin{eqnarray}
&& \| (\hat{U}^u)^{\top} O^u - (\hat{\Theta}^u \Pi^u) \| \nonumber \\
&\leq& \| (\hat{U}^u)^{\top} O^u - (\hat{\Theta}^u \Pi^u) \|_F \nonumber\\
&\leq& 2c_2 \frac{m}{(\pi_{\min}^u)^2} \frac{\epsilon(N,\delta)}{\sigma_{m^d}(P_{1,3}^{H,H})^3\sigma_{m}(O^u)^3}
\label{eqn:projgood}
\end{eqnarray}

We conclude the proof by applying union bound over all $u \in V$.
\end{proof}

\section{Putting Everything Together -- Proof of Theorem~\ref{thm:mainobstrans}}

\begin{proof}(Of Theorem~\ref{thm:mainobstrans})
(1) We first give the recovery accuracy of observation matrices. The final step of recovery is $\hat{O}^u = \hat{U}^{u} \hat{\Theta}^u$. Note that if $N$ is at least $C \max ( \frac{D^2}{\sigma_2^2 \sigma_3^2 } \ln\frac{D}{\delta}, \frac{m}{\sigma_1^2 \sigma_2^2 } \ln\frac{D}{\delta}, \frac{m^2}{\sigma_1^6\sigma_3^6\pi_{\min}^3}\ln\frac{D}{\delta} )$, then Assumption~\ref{ass:projreq} holds, hence conditioned on event $E$, we have
\begin{eqnarray}
&& \|\hat{U}^u(\hat{U}^u)^{\top} O^u - O^u\| \nonumber\\
&=& \|\hat{U}^u(\hat{U}^u)^{\top} O^u - \hat{U}^u (U^u)^{\top} O^u\| \nonumber\\
&\leq& \|\hat{U}^u(\hat{U}^u)^{\top} - \hat{U}^u (U^u)^{\top}\| \| O^u \| \nonumber\\
&\leq& \frac{2\sqrt{m}\epsilon(N,\delta)}{\sigma_m(P_{1,2}^{u,u})}
\label{eqn:tdgood}
\end{eqnarray}
where the first inequality is by the fact that $\| A \cdot B \| \leq \| A \| \| B \|$, the second inequality follows from the fact that $\|O^u\| \leq \sqrt{m}$ and item (1) of Lemma~\ref{lem:projsubspace}.

Meanwhile, by Corollary~\ref{cor:td}, we have
\begin{eqnarray*}
&&\|\hat{U}^u(\hat{U}^u)^{\top} O^u - \hat{U}^{u} \hat{\Theta}^u \Pi^u\| \\
&\leq&\|(\hat{U}^u)^{\top} O^u - \hat{\Theta}^u \Pi^u\| \\
&\leq& 2c_2 \frac{m}{(\pi_{\min}^u)^2} \frac{\epsilon(N,\delta)}{\sigma_{m^d}(P_{1,3}^{H,H})^3\sigma_{m}(O^u)^3}
\end{eqnarray*}

The above two facts let us conclude that provided the size of sample $N$ is at least $C \max(\frac{m}{\sigma_2^2 \sigma_1^8 \epsilon^2} \ln\frac{D}{\delta}, \frac{m^2}{\sigma_3^6\sigma_1^{14}\pi_{\min}^4 \epsilon^2} \ln\frac{D}{\delta})$ (where we choose $C$ large enough),
\begin{eqnarray} 
&&\| O^u - \hat{O}^{u}\Pi^u \| \nonumber \\
&\leq& \|\hat{U}^u(\hat{U}^u)^{\top} O^u - O^u\| + \|\hat{U}^u(\hat{U}^u)^{\top} O^u - \hat{U}^{u} \hat{\Theta}^u \Pi^u\| \nonumber \\
&\leq& \frac{2\sqrt{m}\epsilon(N,\delta)}{\sigma_m(P_{1,2}^{u,u})} + 2c_2 \frac{m}{(\pi_{\min}^u)^2} \frac{\epsilon(N,\delta)}{\sigma_{m^d}(P_{1,3}^{H,H})^3\sigma_{m}(O^u)^3} \nonumber \\
&\leq& \min_{v \in V}\sigma_m(O^v)^4 \epsilon/32 \label{eqn:recoveryo} \\
&\leq& \epsilon \nonumber
\end{eqnarray}
where the first inequality is by triangle inequality, the second inequality is by Equations~\eqref{eqn:projgood} and~\eqref{eqn:tdgood}, the third inequality follows from the choice of $N$, in the last inequality we use the fact that $\sigma_m(O^u) \leq 1$.
Therefore by Equation~\eqref{eqn:recoveryo} and Theorem~\ref{thm:weyl},
\begin{equation}
\sigma_m(\hat{O}^{u}\Pi^u) \geq \sigma_m(O^u) - \min_{v \in V}\sigma_m(O^v)^4 \epsilon/32 \geq \sigma_m(O^u) / 2
\label{eqn:hatopiinvub}
\end{equation}

(2) We now provide guarantees on the accuracy of transition probabilities and initial probabilities. In particular, we prove $\|\hat{Q}^u - Q^u (\Pi^u, \Pi^u, \Pi^{\pi(u)})\| \leq \epsilon$, the other three inequalities can be handled similarly. As we have already seen from Equation~\eqref{eqn:recoveryo}, for all $u \in V$,
\begin{eqnarray*} 
&&\| (O^u)^{T\dagger} - (\hat{O}^u\Pi^{u})^{T\dagger} \|  \\
&\leq& 2 \max(\|(\hat{O}^u)^{\dagger}\|^2, \|(\Pi^{uT}(O^u))^{\dagger}\|^2) \| O^u - \hat{O}^{u}\Pi^u \| \\
&\leq& \min_{v \in V}\sigma_m(O^v)^2 \epsilon/16
\end{eqnarray*}
where the first inequality is by Theorem~\ref{thm:pinvclose}, the second inequality uses the fact that $\| (\hat{O}^u)^{\dagger} \| = 1/\sigma_m(O^u)$, $\| (\hat{O}^u\Pi^{u})^{\dagger} \| = 1/\sigma_m(\hat{O}^u\Pi^{u})$ and Equation~\eqref{eqn:hatopiinvub}.

Conditioned on event $E$, by the choice of $N$, it is also true that the cooccurence tensor $\hat{P}_{2,2,1}^{u,\pi(u),u}$ is such that
\begin{equation} 
\|\hat{P}_{2,2,1}^{u,\pi(u),u}- P_{2,2,1}^{u,\pi(u),u}\| \leq \min_{v \in V}\sigma_m(O^v)^3  \epsilon/32 
\label{eqn:p221close}
\end{equation}
Therefore,
\begin{eqnarray*}
&&\|Q^u - \hat{Q}^u (\Pi^u, \Pi^{\pi(u)}, \Pi^u)\| \\
&=&\|P_{2,2,1}^{u,\pi(u),u}((O^u)^{T\dagger}, (O^{\pi(u)})^{T\dagger}, (O^u)^{T\dagger}) - \hat{P}_{2,2,1}^{u,\pi(u),u}((\hat{O}^u \Pi^u)^{T\dagger}, (\hat{O}^{\pi(u)} \Pi^{\pi(u)})^{T\dagger}, (\hat{O}^u \Pi^u)^{T\dagger})\| \\
&\leq& \|(P_{2,2,1}^{u,\pi(u),u}- \hat{P}_{2,2,1}^{u,\pi(u),u})((O^u \Pi^u)^{T\dagger}, (O^{\pi(u)} \Pi^{\pi(u)})^{T\dagger}, (O^u \Pi^u)^{T\dagger})\|  \\
&&+ \|\hat{P}_{2,2,1}^{u,\pi(u),u} ((O^u)^{T\dagger} - (\hat{O}^u \Pi^u)^{T\dagger}, (O^{\pi(u)})^{T\dagger}, (O^u)^{T\dagger})\| \\
&&+ \|\hat{P}_{2,2,1}^{u,\pi(u),u} ((\hat{O}^u \Pi^u)^{T\dagger} , (O^{\pi(u)})^{T\dagger} - (\hat{O}^{\pi(u)} \Pi^{\pi(u)})^{T\dagger}, (\hat{O}^u)^{T\dagger})\| \\
&&+ \|\hat{P}_{2,2,1}^{u,\pi(u),u} ((\hat{O}^u \Pi^u)^{T\dagger}, (\hat{O}^{\pi(u)} \Pi^{\pi(u)})^{T\dagger}, (O^u)^{T\dagger} - (\hat{O}^u \Pi^u)^{T\dagger})\| \\
&\leq& \|(P_{2,2,1}^{u,\pi(u),u}- \hat{P}_{2,2,1}^{u,\pi(u),u})\| \max_{v \in V}\|O^{v\dagger}\|^3  + \|\hat{P}_{2,2,1}^{u,\pi(u),u}\| \cdot \|(O^u)^{T\dagger} - (\hat{O}^u \Pi^u)^{T\dagger}\| (\max_{v \in V}\|O^{v\dagger}\|^2 +\\
&& \max_{v \in V}\|O^{v\dagger}\| \max_{v \in V}\|\hat{O}^{v\dagger}\| + \max_{v \in V}\|\hat{O}^{v\dagger}\|^2) \\
&\leq& \epsilon
\end{eqnarray*}
where the first inequality is by triangle inequality, the second inequality is by the fact that $\| T(A,B,C) \| \leq \| T \| \| A \| \| B \| \| C \|$, the third inequality is by Equations~\eqref{eqn:p221close} and~\eqref{eqn:recoveryo}.

\end{proof}

\section{Matrix Perturbation Lemmas}

\begin{theorem}[Weyl's Theorem] If $A$, $E$ are matrices in $\R^{m \times n}$ with $m \geq n$. Then,
\[ | \sigma_i(A + E) - \sigma_i(A) | \leq \| E \| \]
\label{thm:weyl}
\end{theorem}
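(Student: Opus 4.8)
The plan is to deduce Weyl's inequality for singular values from the variational (Courant--Fischer / min--max) characterization of singular values, which converts the matrix-perturbation question into a purely pointwise one that the triangle inequality settles. First I would recall the max--min characterization: for $M \in \R^{m \times n}$ with $m \geq n$,
\[ \sigma_i(M) = \max_{\substack{S \subseteq \R^n \\ \dim S = i}} \; \min_{\substack{x \in S \\ \|x\| = 1}} \|Mx\|. \]
This is the singular-value analogue of Courant--Fischer and follows directly from the SVD of $M$ (the span of the top $i$ right singular vectors is the extremal subspace); I would invoke it as a standard fact rather than reprove it.

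The key observation is that perturbation acts pointwise: for any fixed unit vector $x$, the reverse triangle inequality gives
\[ \bigl|\, \|(A+E)x\| - \|Ax\| \,\bigr| \leq \|Ex\| \leq \|E\|, \]
so that $\|Ax\| - \|E\| \leq \|(A+E)x\| \leq \|Ax\| + \|E\|$ holds for every unit $x$ simultaneously. I would then push these bounds through the nested extrema in the characterization. Since $f(x) \leq g(x)$ for all $x$ implies $\min_x f(x) \leq \min_x g(x)$, and since the same monotonicity holds for the outer maximum over subspaces, the upper pointwise bound yields, after taking the inner min over unit $x \in S$ and the outer max over $i$-dimensional $S$, the inequality $\sigma_i(A+E) \leq \sigma_i(A) + \|E\|$. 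Feeding the lower pointwise bound through the identical argument gives $\sigma_i(A+E) \geq \sigma_i(A) - \|E\|$. Combining the two yields $|\sigma_i(A+E) - \sigma_i(A)| \leq \|E\|$, as claimed.

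The hard part here is not conceptually deep: it is the bookkeeping of verifying that the pointwise triangle inequality genuinely survives both layers of the min--max, which rests solely on the monotonicity of $\min$ and $\max$ under pointwise domination. As an entirely equivalent alternative I would mention the reduction to the Hermitian eigenvalue form of Weyl's theorem via the Jordan--Wielandt augmentation $\tilde{M} = \begin{pmatrix} 0 & M \\ M^{\top} & 0 \end{pmatrix}$, whose nonzero eigenvalues are exactly $\pm \sigma_i(M)$ and which satisfies $\|\tilde{E}\| = \|E\|$; applying the symmetric version of Weyl's inequality to $\tilde{A}$ and $\tilde{A} + \tilde{E}$ then produces the singular-value statement directly. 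I would favor the min--max route, since it is self-contained and avoids needing the Hermitian case as a black box.
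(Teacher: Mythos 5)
The paper states Weyl's theorem for singular values as a standard matrix-perturbation fact in its appendix (Theorem~\ref{thm:weyl}) and gives no proof, so there is no in-paper argument to compare yours against. Your proof is correct and is the standard one: the max--min characterization $\sigma_i(M) = \max_{\dim S = i} \min_{x \in S, \|x\|=1} \|Mx\|$ is valid for $i \le n$, the pointwise bound $\bigl|\,\|(A+E)x\| - \|Ax\|\,\bigr| \le \|E\|$ is immediate from the reverse triangle inequality, and pushing it through the inner $\min$ and outer $\max$ by monotonicity under pointwise domination gives both one-sided bounds. The Jordan--Wielandt reduction you mention as an alternative also goes through cleanly here (the top $n$ eigenvalues of the augmented matrix are exactly $\sigma_1 \ge \cdots \ge \sigma_n$, and the augmented perturbation has the same operator norm), so either route would serve as a self-contained justification of the lemma the paper takes for granted.
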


\begin{theorem}[Wedin's Theorem] If $A$, $E$ are matrices in $\R^{m \times n}$ with $m \geq n$. Let $A$ have singular value decomposition:
\[ \left( \begin{array}{c} U_1^{\top} \\ U_2^{\top} \\ U_3^{\top} \end{array} \right) A  \left( \begin{array}{cc} V_1 & V_2 \end{array} \right) = \left( \begin{array}{cc} \Sigma_1 & 0 \\ 0 & \Sigma_2 \\ 0 & 0 \end{array} \right)\] 
Let $\tilde{A} = A+E$ have the singular value decomposition:
\[ \left( \begin{array}{c} \tilde{U}_1^{\top} \\ \tilde{U}_2^{\top} \\ \tilde{U}_3^{\top} \end{array} \right) \tilde{A} \left( \begin{array}{cc} \tilde{V}_1 & \tilde{V}_2 \end{array} \right) = \left( \begin{array}{cc} \tilde{\Sigma}_1 & 0 \\ 0 & \tilde{\Sigma}_2 \\ 0 & 0 \end{array} \right) \]
If there is $\delta > 0$, $\alpha > 0$ such that $\min_i \sigma_i(\tilde{\Sigma}_1) \geq \alpha + \delta$, $\max_i \sigma_i(\Sigma_2) \leq \alpha$, then
\[ \| \sin \Phi \| \leq \frac{\| E \|}{\delta} \]
where $\Phi$ is the matrix of principal angles between $\range(U_1)$ and $\range(\tilde{U}_1)$.
\label{thm:wedin}
\end{theorem}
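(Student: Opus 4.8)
The final statement is Wedin's Theorem (the $\sin\Theta$ bound), so the plan is the classical one: translate the geometry of principal angles into the norm of a residual, and then invert a Sylvester-type operator whose invertibility and norm are controlled by the gap $\delta$. First I would record the angle characterization. Writing $U_\perp = [U_2\ U_3]$ for an orthonormal basis of the complement of $\range(U_1)$ in $\R^m$, and assuming $U_1$ and $\tilde U_1$ have the same number of columns, the sines of the principal angles are exactly the singular values of $U_\perp^\top \tilde U_1$, so $\|\sin\Phi\| = \|U_\perp^\top \tilde U_1\|$; similarly the right-side angles $\Theta$ between $\range(V_1)$ and $\range(\tilde V_1)$ satisfy $\|\sin\Theta\| = \|V_2^\top \tilde V_1\|$. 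Thus the target reduces to bounding the block $X := U_\perp^\top \tilde U_1$ in operator norm.

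Next I would introduce the residuals $R = A\tilde V_1 - \tilde U_1\tilde\Sigma_1$ and $S = A^\top\tilde U_1 - \tilde V_1\tilde\Sigma_1$. Because $\tilde A\tilde V_1 = \tilde U_1\tilde\Sigma_1$ and $\tilde A^\top\tilde U_1 = \tilde V_1\tilde\Sigma_1$ by definition of the SVD of $\tilde A = A+E$, these collapse to $R = -E\tilde V_1$ and $S = -E^\top\tilde U_1$, whence $\|R\|,\|S\| \le \|E\|$. I would then left-multiply the residual identities by the exact bases of $A$, using $U_2^\top A = \Sigma_2 V_2^\top$, $U_3^\top A = 0$, and $V_2^\top A^\top = \Sigma_2 U_2^\top$. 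Projecting $R$ and $S$ produces the coupled pair
\[ (U_2^\top\tilde U_1)\tilde\Sigma_1 - \Sigma_2 (V_2^\top\tilde V_1) = U_2^\top E\tilde V_1, \quad (V_2^\top\tilde V_1)\tilde\Sigma_1 - \Sigma_2(U_2^\top\tilde U_1) = V_2^\top E^\top\tilde U_1, \]
together with the decoupled relation $(U_3^\top\tilde U_1)\tilde\Sigma_1 = U_3^\top E\tilde V_1$. This last relation immediately gives $\|U_3^\top\tilde U_1\| \le \|E\|/\sigma_{\min}(\tilde\Sigma_1) \le \|E\|/\delta$ from the gap hypothesis $\sigma_{\min}(\tilde\Sigma_1) \ge \alpha+\delta \ge \delta$.

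For the coupled pair I would treat it as a single Sylvester operator acting on $(U_2^\top\tilde U_1,\, V_2^\top\tilde V_1)$. Diagonalizing $\tilde\Sigma_1 = \diag(\tilde\sigma_j)$ and $\Sigma_2 = \diag(s_i)$ and reading the two equations off entrywise reduces the system to uncoupled $2\times 2$ blocks with coefficient matrix $\left(\begin{smallmatrix}\tilde\sigma_j & -s_i\\ -s_i & \tilde\sigma_j\end{smallmatrix}\right)$, whose singular values are $\tilde\sigma_j \pm s_i$ and hence at least $\tilde\sigma_j - s_i \ge (\alpha+\delta)-\alpha = \delta$ by the separation hypothesis $\max_i s_i \le \alpha$. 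Inverting these blocks bounds the corresponding solution entries, and assembling them with the $U_3$ estimate yields $\|\sin\Phi\| = \|X\| \le \max(\|R\|,\|S\|)/\delta \le \|E\|/\delta$.

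The main obstacle I anticipate is the coupling between the left and right singular subspaces: the projected equations unavoidably mix $U_2^\top\tilde U_1$ with $V_2^\top\tilde V_1$, so the left angles cannot be bounded in isolation and one must solve the joint system. Carrying the per-entry separation estimate up to the spectral norm with the \emph{tight} constant $1/\delta$ — rather than a lossy $\sqrt{2}/\delta$ obtained by naively stacking the $U_2$ and $U_3$ contributions — is the delicate point, and it is exactly where Wedin's original unitarily-invariant-norm argument is needed.
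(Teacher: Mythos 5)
The paper does not actually prove this statement: Theorem~\ref{thm:wedin} is quoted as a classical matrix-perturbation tool (Wedin's $\sin\Theta$ theorem) in the appendix with no argument attached, so there is no in-paper proof to compare yours against. Your outline is the standard route --- residuals $R=-E\tilde V_1$, $S=-E^\top\tilde U_1$, projection onto $U_2$, $U_3$, $V_2$, and inversion of the resulting coupled Sylvester-type system --- and the algebra you write down is correct: the identities $(U_2^\top\tilde U_1)\tilde\Sigma_1-\Sigma_2(V_2^\top\tilde V_1)=U_2^\top E\tilde V_1$, its mate, the decoupled relation $(U_3^\top\tilde U_1)\tilde\Sigma_1=U_3^\top E\tilde V_1$, and the eigenvalue computation $\tilde\sigma_j\pm s_i\ge\delta$ for the $2\times2$ blocks all check out.

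The genuine gap is the final step, which you yourself flag. The entrywise $2\times2$ inversion bounds each pair $((X_2)_{ij},Y_{ij})$ by $\delta^{-1}$ times the corresponding right-hand-side pair; summing squares yields the \emph{Frobenius}-norm conclusion, but this does not transfer to the operator norm, and stacking the $U_2^\top\tilde U_1$ and $U_3^\top\tilde U_1$ estimates costs a further $\sqrt2$. So as written you establish $\|\sin\Phi\|\le\sqrt2\,\|E\|/\delta$ (or the Frobenius version with constant $1$), not the stated bound. The standard fix avoids diagonalization entirely: treat $U_\perp=[U_2\ U_3]$ as one block, so $U_\perp^\top A=\hat\Sigma_2V_2^\top$ with $\hat\Sigma_2=\bigl(\begin{smallmatrix}\Sigma_2\\ 0\end{smallmatrix}\bigr)$ and $\|\hat\Sigma_2\|\le\alpha$; set $X=U_\perp^\top\tilde U_1$, $Y=V_2^\top\tilde V_1$, and rewrite the coupled system as $X=\hat\Sigma_2Y\tilde\Sigma_1^{-1}+G_1\tilde\Sigma_1^{-1}$ and $Y=\hat\Sigma_2^\top X\tilde\Sigma_1^{-1}+G_2\tilde\Sigma_1^{-1}$. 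Taking operator norms and setting $M=\max(\|X\|,\|Y\|)$ gives $M\le\frac{\alpha}{\alpha+\delta}M+\frac{\|E\|}{\alpha+\delta}$, hence $M\le\|E\|/\delta$ with the tight constant. For this paper's purposes the distinction is cosmetic, since Theorem~\ref{thm:wedin} is only invoked in Lemma~\ref{lem:projsubspace} with generous constant slack, but a proof of the statement as written must close this step.
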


\begin{theorem} If $A$, $E$ are matrices in $\R^{m \times n}$ with $m \geq n$, let $\tilde{A} = A + E$. Then,
\[ \| \tilde{A}^{\dagger} - A^{\dagger} \| \leq 2 \max(\| \tilde{A}^{\dagger} \|^2, \| A^{\dagger} \|^2) \| E \| \]
\label{thm:pinvclose}
\end{theorem}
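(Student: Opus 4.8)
The plan is to derive Theorem~\ref{thm:pinvclose} from an exact perturbation identity for the Moore--Penrose pseudoinverse and then bound the resulting terms. Writing $E = \tilde A - A$ and using the four Penrose identities ($AA^\dagger A = A$, $A^\dagger A A^\dagger = A^\dagger$, and the symmetry of $AA^\dagger$ and $A^\dagger A$), I would first establish the elementary exact identity
\[ \tilde A^\dagger - A^\dagger = -\tilde A^\dagger E A^\dagger + \tilde A^\dagger(I - A A^\dagger) - (I - \tilde A^\dagger \tilde A) A^\dagger, \]
which follows by inserting $\pm\tilde A^\dagger A A^\dagger$ and $\pm\tilde A^\dagger \tilde A A^\dagger$ and collecting terms (note $-\tilde A^\dagger E A^\dagger = \tilde A^\dagger A A^\dagger - \tilde A^\dagger \tilde A A^\dagger$).

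The difficulty is that the last two terms carry no visible factor of $E$, so a direct norm bound would not vanish as $\|E\| \to 0$. The key step is to expose the hidden $E$: since $A^\top(I - AA^\dagger) = 0$ and $(I - \tilde A^\dagger \tilde A)\tilde A^\top = 0$, and since $\tilde A^\dagger = \tilde A^\dagger \tilde A^{\dagger\top}\tilde A^\top$ and $A^\dagger = A^\top A^{\dagger\top}A^\dagger$, I can rewrite $\tilde A^\dagger(I - AA^\dagger) = \tilde A^\dagger \tilde A^{\dagger\top} E^\top (I - AA^\dagger)$ and $(I - \tilde A^\dagger \tilde A)A^\dagger = -(I - \tilde A^\dagger\tilde A)E^\top A^{\dagger\top}A^\dagger$. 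This yields the refined (Wedin) identity in which all three summands $T_1 = -\tilde A^\dagger E A^\dagger$, $T_2 = \tilde A^\dagger \tilde A^{\dagger\top}E^\top(I-AA^\dagger)$, and $T_3 = (I - \tilde A^\dagger\tilde A)E^\top A^{\dagger\top}A^\dagger$ are proportional to $E$, with $\|T_1\| \le \|A^\dagger\|\|\tilde A^\dagger\|\|E\|$, $\|T_2\| \le \|\tilde A^\dagger\|^2\|E\|$ and $\|T_3\| \le \|A^\dagger\|^2\|E\|$ (using $\|I-AA^\dagger\|\le 1$, $\|I-\tilde A^\dagger\tilde A\|\le 1$).

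Finally I would take operator norms. A crude triangle inequality over $T_1,T_2,T_3$ already gives a constant of $3$; to reach the stated constant $2$ I would exploit that the three pieces have pairwise orthogonal ranges or domains. Concretely, $\range(T_1),\range(T_2)\subseteq\range(\tilde A^\dagger)$ while $\range(T_3)\subseteq\range(\tilde A^\dagger)^\perp$, so $(T_1+T_2)^\top T_3 = 0$ and the norms of $M_1=T_1+T_2$ and $M_2=T_3$ combine as $\|\tilde A^\dagger - A^\dagger\|^2 \le \|M_1\|^2 + \|M_2\|^2$; within $M_1$ the right factors $AA^\dagger$ and $I-AA^\dagger$ give $T_1 T_2^\top = 0$, so $\|M_1\|^2 \le \|T_1\|^2+\|T_2\|^2$. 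Assuming WLOG $\|\tilde A^\dagger\|\ge\|A^\dagger\|$, this yields $\|\tilde A^\dagger - A^\dagger\| \le \sqrt{\|A^\dagger\|^2\|\tilde A^\dagger\|^2 + \|\tilde A^\dagger\|^4 + \|A^\dagger\|^4}\,\|E\| \le \sqrt{3}\,\max(\|\tilde A^\dagger\|^2,\|A^\dagger\|^2)\|E\|$, which is below the claimed $2$. The main obstacle is the bookkeeping in the middle step (verifying the Penrose-identity rewrites) together with this orthogonality argument, which is what pushes the constant from the naive $3$ down past $2$; alternatively one may simply invoke the classical Wedin bound $\|\tilde A^\dagger - A^\dagger\| \le \tfrac{1+\sqrt5}{2}\|A^\dagger\|\|\tilde A^\dagger\|\|E\|$ and relax the product to the maximum of the squares, since $\tfrac{1+\sqrt5}{2} < 2$.
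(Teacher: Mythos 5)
The paper does not actually prove Theorem~\ref{thm:pinvclose}: it is listed, together with Weyl's and Wedin's theorems, in the ``Matrix Perturbation Lemmas'' appendix as a standard fact from matrix perturbation theory and is invoked without argument. Your proposal therefore supplies a proof where the paper has none, and it checks out. The exact three-term identity is correct (expanding $-\tilde A^\dagger E A^\dagger$ and cancelling gives $\tilde A^\dagger - A^\dagger$), the Penrose-identity rewrites $\tilde A^\dagger = \tilde A^\dagger\tilde A^{\dagger\top}\tilde A^\top$ and $A^\dagger = A^\top A^{\dagger\top}A^\dagger$ together with $A^\top(I-AA^\dagger)=0$ and $(I-\tilde A^\dagger\tilde A)\tilde A^\top=0$ do expose a factor of $E$ in every term, and the orthogonality claims hold because $\tilde A^\dagger\tilde A$ and $AA^\dagger$ are orthogonal projectors onto $\range(\tilde A^\dagger)$ and $\range(A)$ respectively, so $(T_1+T_2)^\top T_3=0$ and $T_1T_2^\top=0$; the Pythagorean combination then yields the constant $\sqrt{3}\le 2$. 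Even the crude triangle inequality with constant $3$ would not suffice for the stated bound, so the orthogonality step (or an equivalent sharpening) is genuinely needed. One caution on your proposed shortcut: the classical Wedin bound $\|\tilde A^\dagger - A^\dagger\|\le\frac{1+\sqrt 5}{2}\|A^\dagger\|\,\|\tilde A^\dagger\|\,\|E\|$ requires $\mathrm{rank}(\tilde A)=\mathrm{rank}(A)$ (take $A=0$, $\tilde A=\epsilon$ to see the product form fail), a hypothesis absent from the theorem as stated; your direct three-term argument needs no rank assumption because the $\max$-of-squares form absorbs the degenerate cases, so it is the preferable route and in fact proves a slightly stronger, unconditional statement than the cited literature bound would.
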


\section{Compressed observation matrices produced by Spectral-Tree for eight ENCODE cell types}

\begin{figure}[h!]
\begin{center}
\subfigure[H1-hESC]{
\includegraphics[width=.2\linewidth]{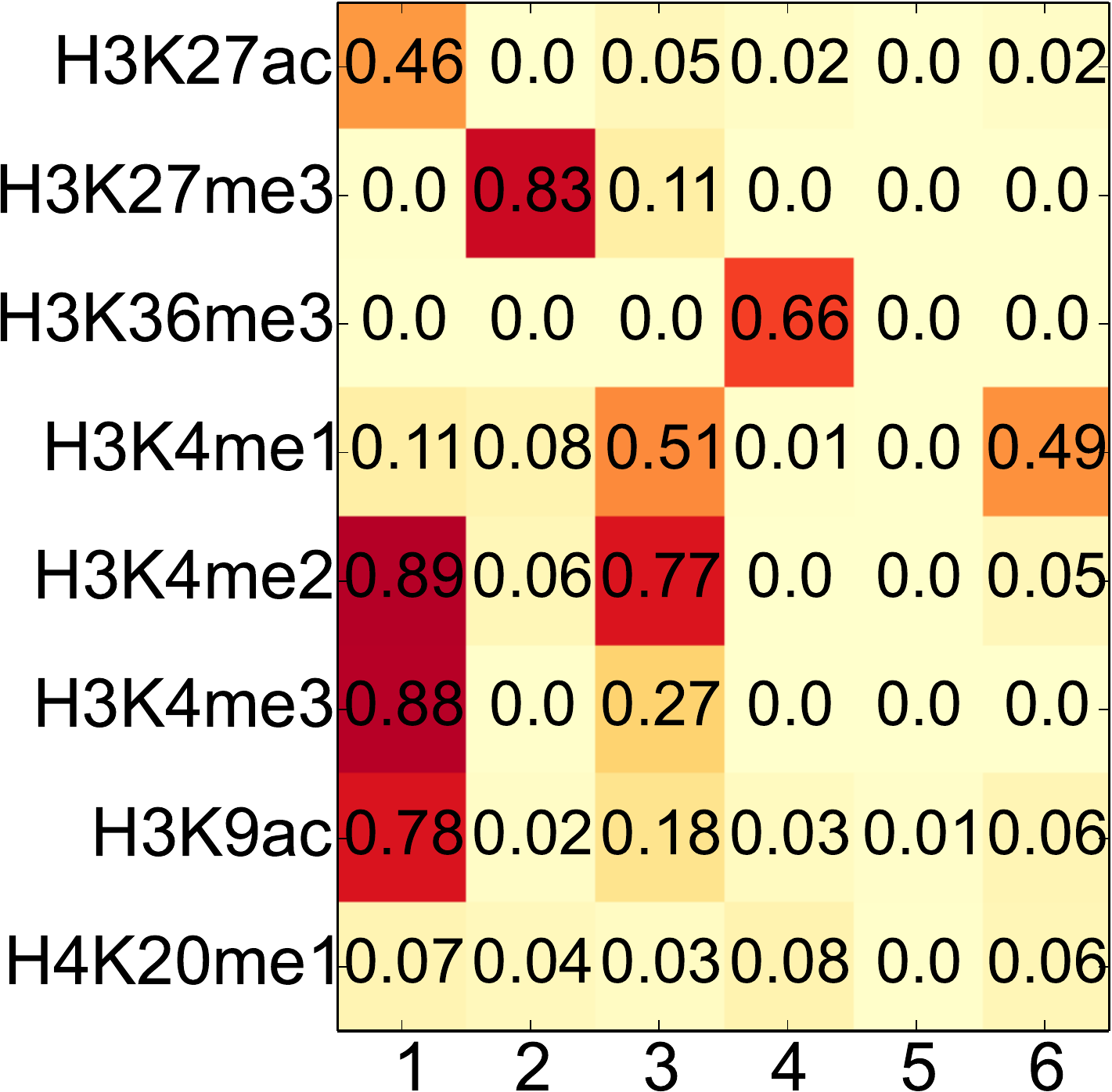}
}
\subfigure[HepG2]{
\includegraphics[width=.2\linewidth]{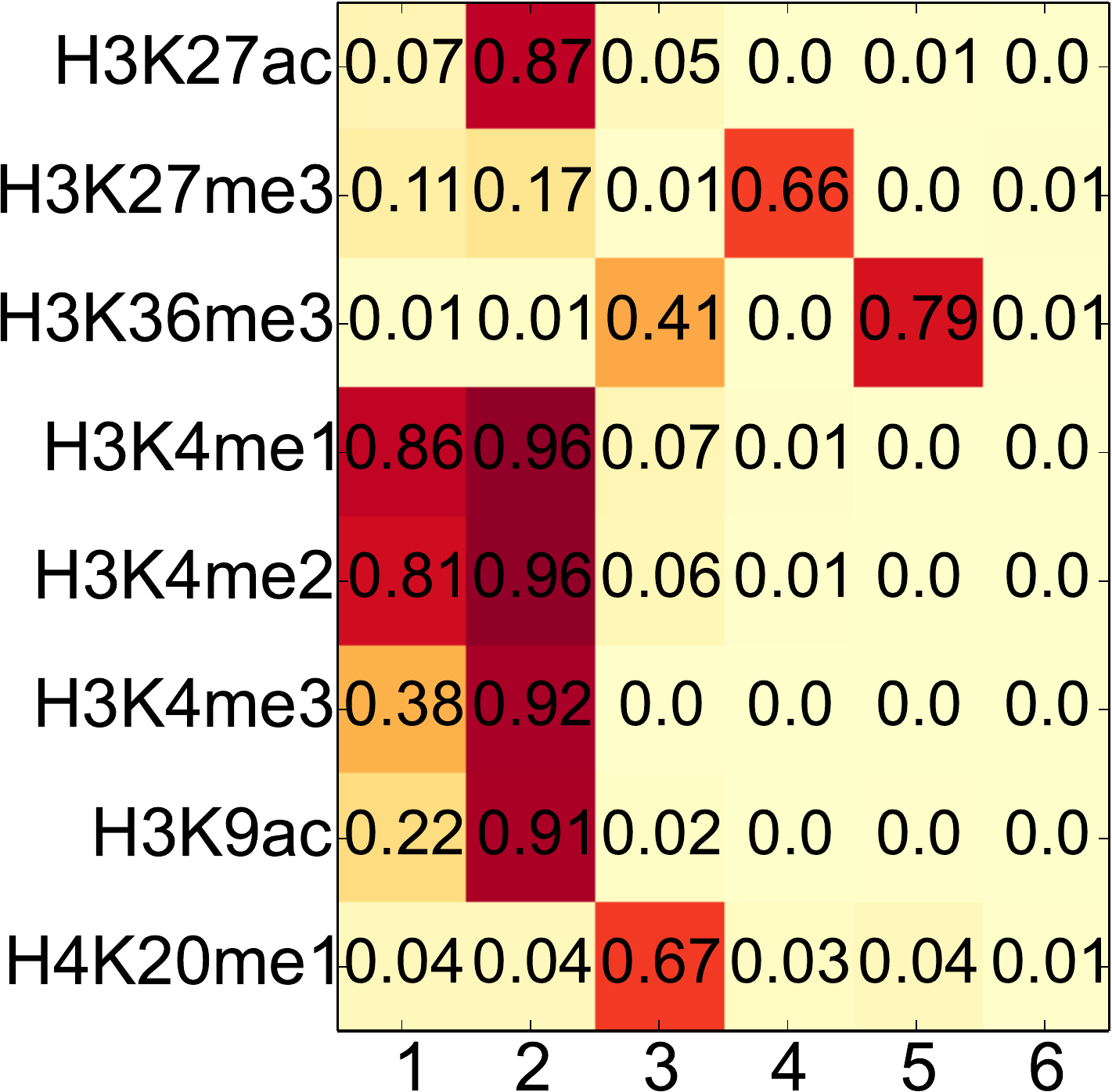}
}
\subfigure[HMEC]{
\includegraphics[width=.2\linewidth]{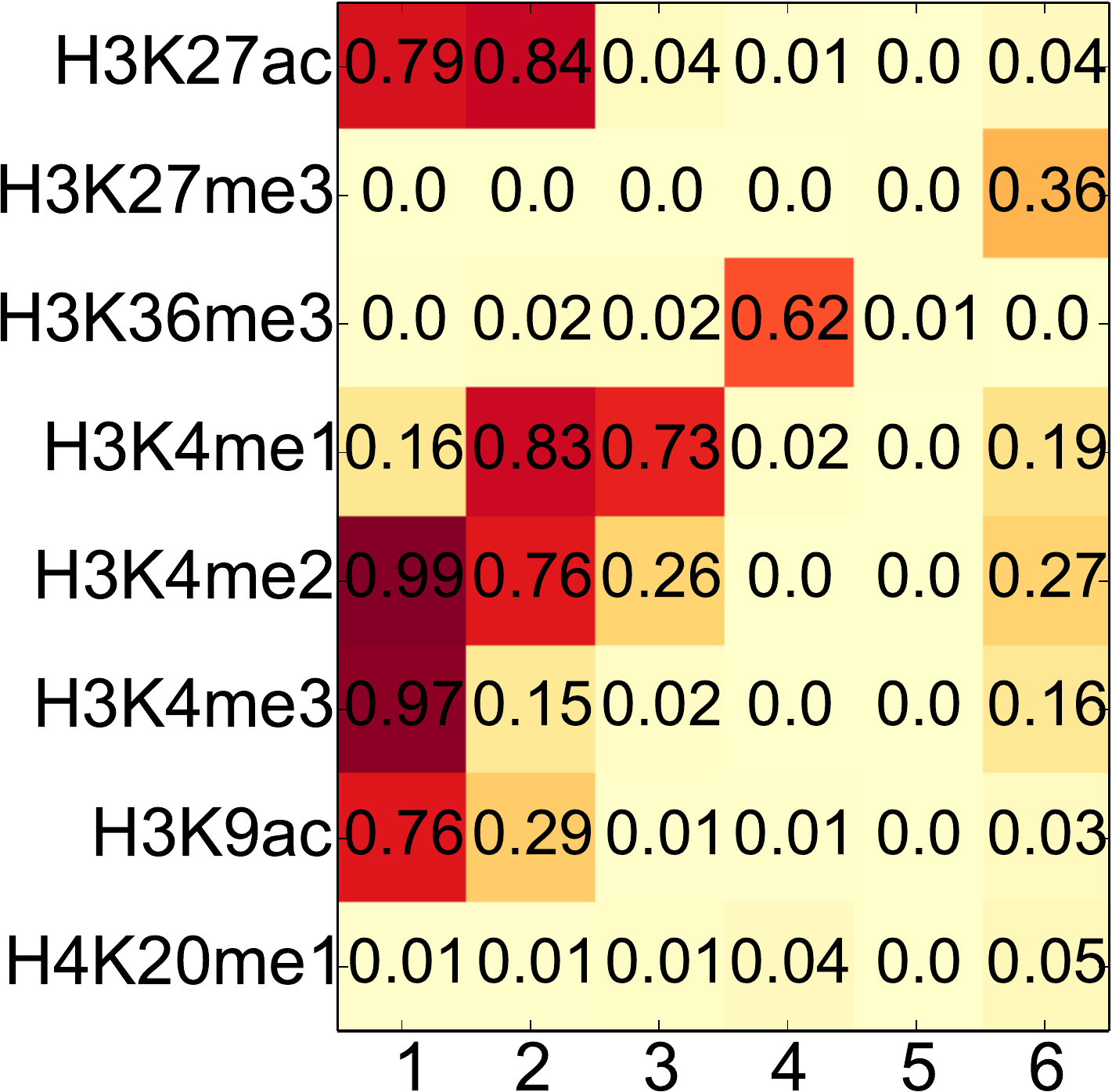}
}
\subfigure[HSMM]{
\includegraphics[width=.2\linewidth]{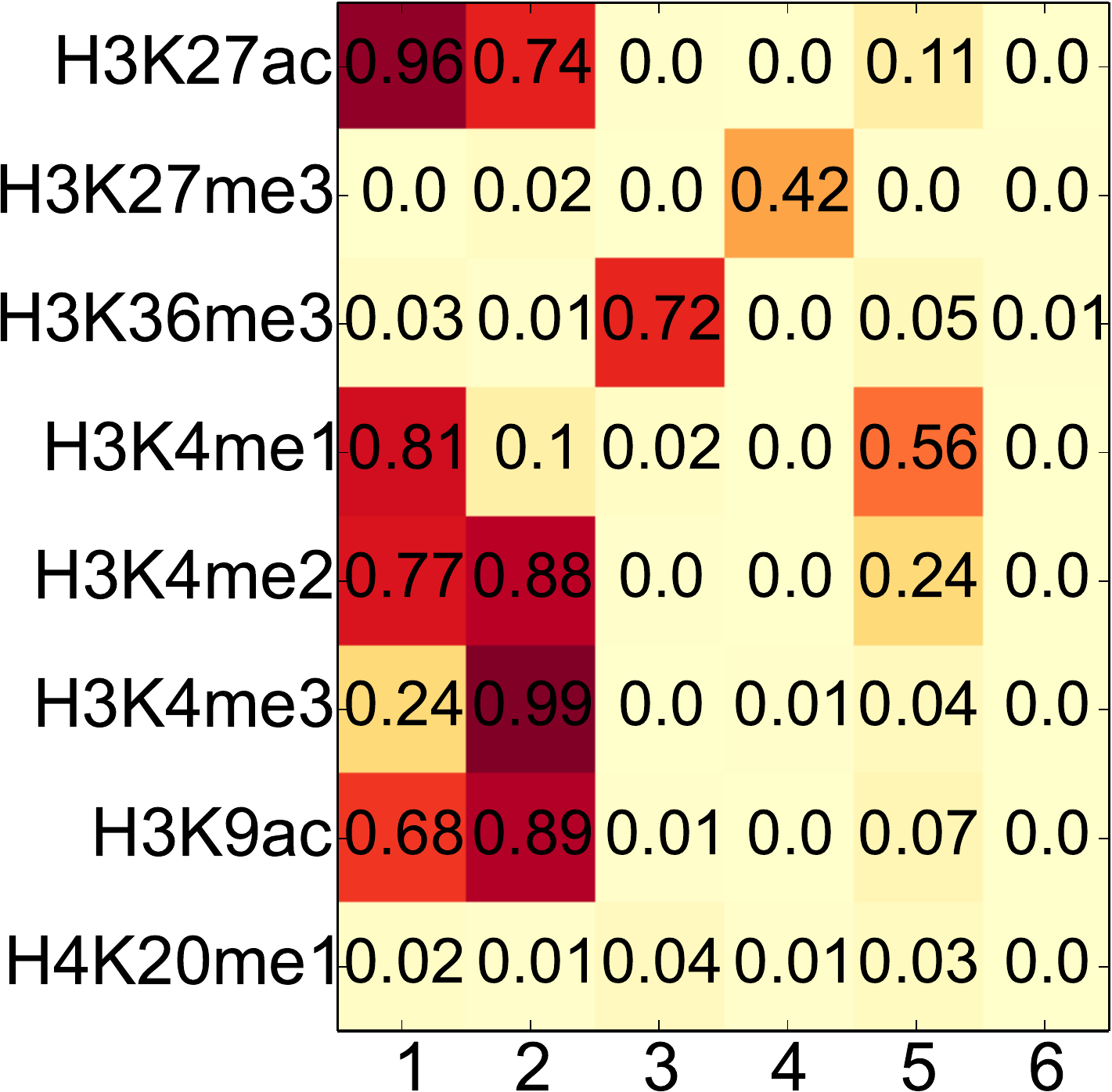}
}
\subfigure[HUVEC]{
\includegraphics[width=.2\linewidth]{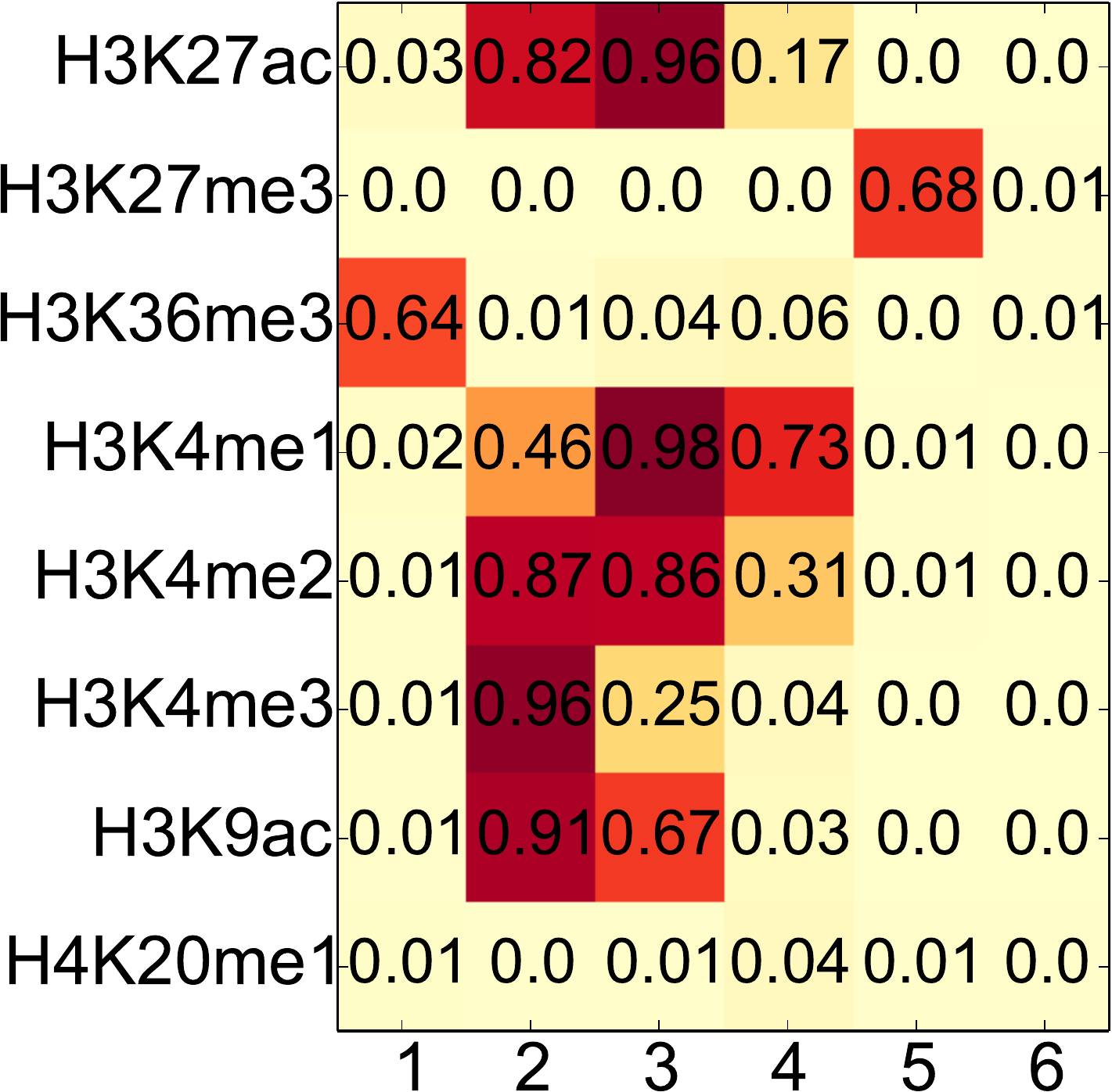}
}
\subfigure[K562]{
\includegraphics[width=.2\linewidth]{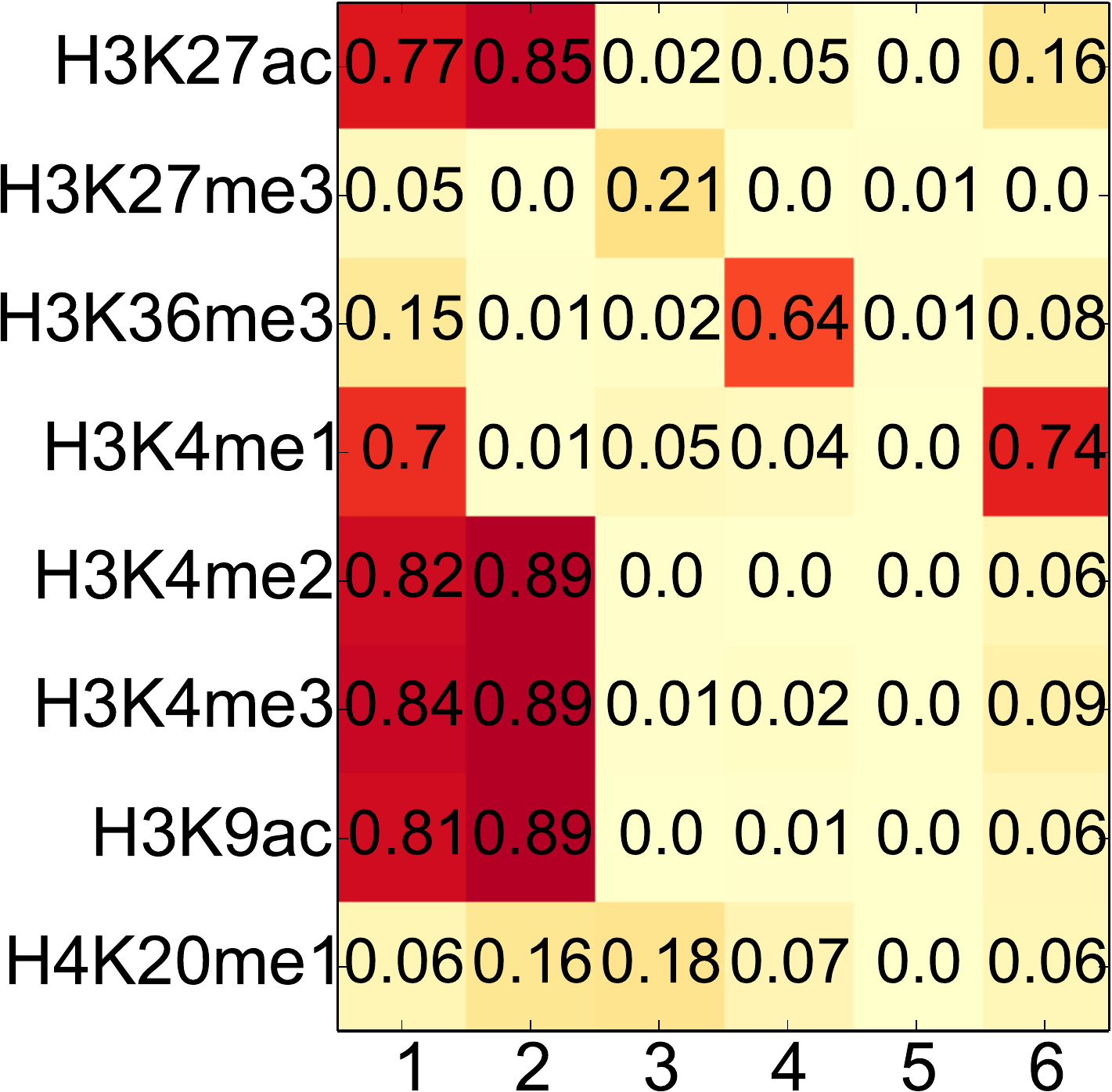}
}
\subfigure[NHEK]{
\includegraphics[width=.2\linewidth]{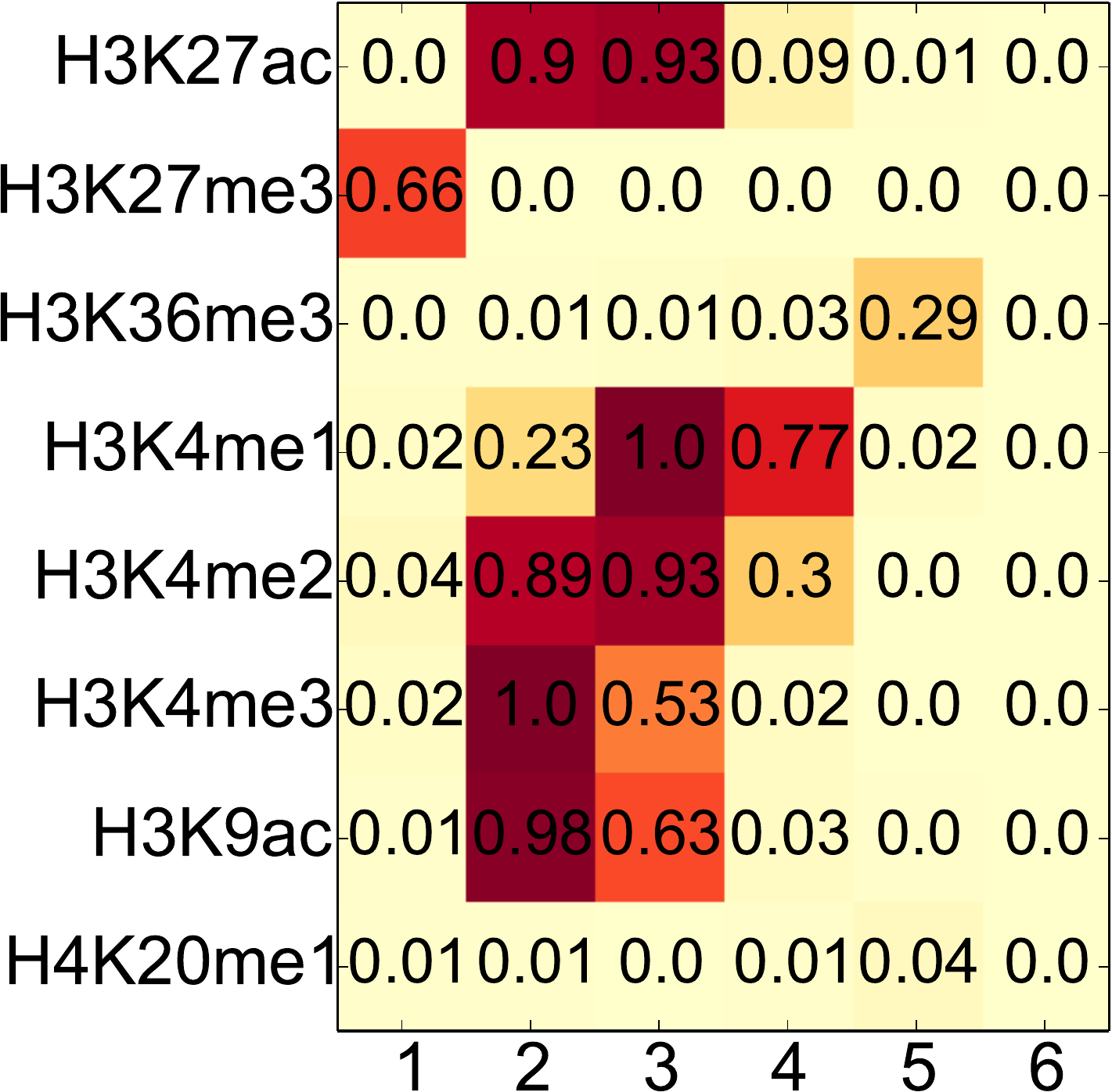}
}
\subfigure[NHLF]{
\includegraphics[width=.2\linewidth]{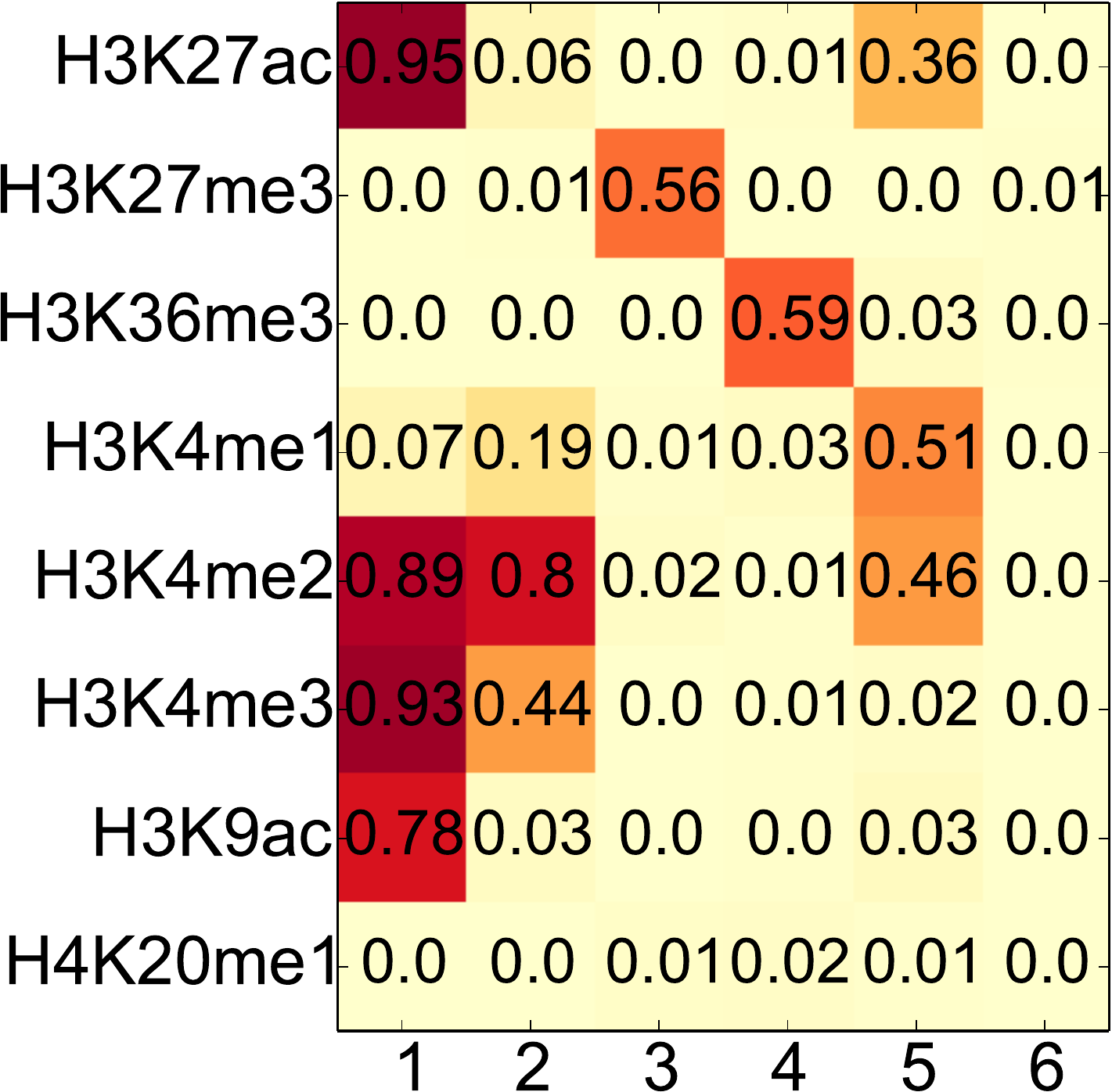}
}
\caption{The compressed observation matrices estimated by Spectral-Tree for all eight ENCODE cell types studied, other than GM12878 which is presented in the main manuscript.}
\label{fig_emission_suppl}
\end{center}
\end{figure}

\end{document}